\documentclass{article} 

\usepackage{microtype}
\usepackage{hyperref}
\usepackage{url}
\usepackage{booktabs}
\usepackage{longtable}
\usepackage{float}
\usepackage{inconsolata}
\usepackage{pgfplots}
\usepgfplotslibrary{groupplots}
\pgfplotsset{compat=1.18}

\newcommand{\mymacro}[1]{{#1}}

\newcommand{\defn}[1]{\textbf{#1}}

\newcommand{\paroutline}[3][false]{%
    \ifnum\pdfstrcmp{#1}{true}=0
        #3
    \else
        [\textit{\textcolor{DiverseMagenta}{#2}}] \textcolor{AccentBlue}{#3}
    \fi
}

\newcommand{\rope}{\mymacro{\text{\textnormal{{RoPE}}}}}
\newcommand{\nope}{\mymacro{\text{\textnormal{{NoPE}}}}}
\newcommand{\sinpe}{\mymacro{\text{\textnormal{{SiPE}}}}}

\newcommand{\ropeperiodic}{\mymacro{\text{\textnormal{{RoPE\textsubscript{P}}}}}}
\newcommand{\ropenonperiodic}
{\mymacro{\text{\textnormal{{RoPE\textsubscript{NP}}}}}}

\newcommand{\sinperiodic}{\mymacro{\text{\textnormal{{SiPE\textsubscript{P}}}}}}
\newcommand{\sinnonperiodic}
{\mymacro{\text{\textnormal{{SiPE\textsubscript{NP}}}}}}

\newcommand{\rotAngles}{{\mymacro{\Theta}}}

\newcommand{\valpha}{{\mymacro{\boldsymbol{\alpha}}}}

\newcommand{\biasVector}{{\mymacro{ \vb}}}

\newcommand{\transclosure}{{\mymacro{ \delta^*}}}

\newcommand{\Q}{{\mymacro{ \mathbb{Q}}}}

\newcommand{\Z}{{\mymacro{ \mathbb{Z}}}}

\newcommand{\boundyesterday}[1][]{{\mymacro{N^{#1}_{\textnormal{max}}}}}

\newcommand{\offsetset}{{\mymacro{K}}}

\newcommand{\ropeconstant}{{\mymacro{C}}}

\newcommand{\alphabet}{{\mymacro{ \Sigma}}}

\newcommand{\eosalphabet}{{\mymacro{ \overline{\alphabet}}}}

\newcommand{\kleene}[1]{{\mymacro{#1^*}}}
\newcommand{\reComplement}[1]{{\mymacro{#1^\textsc{c}}}}

\newcommand{\str}{{\mymacro{\boldsymbol{w}}}}

\newcommand{\strlen}{{\mymacro{N}}}
\newcommand{\strs}{{\mymacro{\boldsymbol{s}}}}
\newcommand{\strx}{{\mymacro{\boldsymbol{x}}}}
\newcommand{\stry}{{\mymacro{\boldsymbol{y}}}}
\newcommand{\strz}{{\mymacro{\boldsymbol{z}}}}

\newcommand{\syma}{{\mymacro{a}}}
\newcommand{\symb}{{\mymacro{b}}}
\newcommand{\symc}{{\mymacro{c}}}

\newcommand{\defeq}{\mathrel{\stackrel{\textnormal{\tiny def}}{=}}}

\newcommand{\NTo}[1]{{\mymacro{\left[ #1 \right]}}}

\newcommand{\set}[1]{{\mymacro{\{ #1 \}}}}

\newcommand{\idxn}{{\mymacro{ n}}}

\newcommand{\idxd}{{\mymacro{ d}}}
\newcommand{\idxi}{{\mymacro{ i}}}
\newcommand{\idxj}{{\mymacro{ j}}}
\newcommand{\idxk}{{\mymacro{ k}}}
\newcommand{\idxl}{{\mymacro{ l}}}

\newcommand{\bos}{{\mymacro{\textsc{bos}}}}
\newcommand{\eos}{{\mymacro{\textsc{eos}}}}

\newcommand{\automaton}{{\mymacro{ \mathcal{A}}}}
\newcommand{\wfsa}{{\mymacro{ \automaton}}}

\newcommand{\stateq}{{\mymacro{ q}}}
\newcommand{\statep}{{\mymacro{ p}}}

\newcommand{\states}{{\mymacro{ Q}}}

\newcommand{\trans}{{\mymacro{ \delta}}}

\newcommand{\final}{{\mymacro{ F}}}

\newcommand{\qinit}{{\mymacro{ q_{\iota}}}}

\newcommand{\fsatuple}{{\mymacro{ \mleft( \alphabet, \states, \qinit, \final, \trans \mright)}}}
\newcommand{\satuple}{{\mymacro{ \mleft( \alphabet, \states, \trans \mright)}}}

\newcommand{\dfatuple}{{\mymacro{ \mleft( \alphabet, \states, \qinit, \final, \trans \mright)}}}

\newcommand{\hiddDim}{{\mymacro{ D}}}

\newcommand{\proj}{{\mymacro{\texttt{proj}}}}
\newcommand{\projfunc}{{\mymacro{\textnormal{\proj}}}}

\newcommand{\negterm}[1]{{\mymacro{ {\raise.17ex\hbox{$\scriptstyle\sim$}} #1}}}

\newcommand{\ignore}[1]{}
\newcommand{\expandLater}[1]{}

\newcommand{\tf}{\mymacro{\relsize{-0.25}{\textsf{T}}}\xspace}

\newcommand{\outputlayer}{\mymacro{\mathrm{\mathcal{F}}}}

\newcommand{\layerIdx}{\mymacro{\ell}}
\newcommand{\nLayers}{\mymacro{L}}

\def\1{\mathbf{1}}

\def\eps{{\mymacro{ \varepsilon}}}

\def\rmH{{{\mymacro{ \mathbf{H}}}}}

\def\rmK{{{\mymacro{ \mathbf{K}}}}}

\def\rmQ{{{\mymacro{ \mathbf{Q}}}}}

\def\rmV{{{\mymacro{ \mathbf{V}}}}}

\def\vb{{{\mymacro{ \bm{b}}}}}

\def\vk{{{\mymacro{ \bm{k}}}}}

\def\vq{{{\mymacro{ \bm{q}}}}}

\def\vv{{{\mymacro{ \bm{v}}}}}

\def\vx{{{\mymacro{ \bm{x}}}}}
\def\valpha{{{\mymacro{ \bm{\alpha}}}}}

\def\mK{{{\mymacro{ \bm{K}}}}}

\def\mQ{{{\mymacro{ \bm{Q}}}}}
\def\mR{{{\mymacro{ \bm{R}}}}}
\def\mS{{{\mymacro{ \bm{S}}}}}

\newcommand{\N}{{\mymacro{ \mathbb{N}}}}

\newcommand{\bb}[1][]{\ifthenelse{\isempty{#1}}{\mymacro{\mathbf{b}}}{\mymacro{\mathbf{b}^{\text{#1}}}}}
\newcommand{\ff}[1][]{\ifthenelse{\isempty{#1}}{\mymacro{f}}{\mymacro{f_{\text{#1}}}}}

\newcommand{\W}[1][]{\ifthenelse{\isempty{#1}}{\mymacro{\mathbf{W}}}{\mymacro{\mathbf{W}^{\text{#1}}}}}

\newcommand{\stacktop}[1][]{\ifthenelse{\isempty{#1}}{\mymacro{\gamma^{\text{top}}}}{\mymacro{\gamma^{\text{top}}_{#1}}}}
\newcommand{\sym}{\mymacro{w}}

\newcommand{\lang}{\mymacro{\mathbb{L}}}

\newcommand{\until}{\mymacro{\mathrel{\textbf{U}}}}

\newcommand{\since}{\mymacro{\mathrel{\textbf{S}}}}
\newcommand{\past}{\mymacro{\mathrel{\textnormal{\textbf{P}}}}}
\newcommand{\future}{\mymacro{\mathrel{\textnormal{\textbf{F}}}}}
\newcommand{\yesterday}{\mymacro{\mathrel{\textbf{Y}}}}

\newcommand{\true}{\mymacro{\top}}

\newcommand{\false}{\mymacro{\bot}}

\newcommand{\ltlAcr}{\mymacro{\textnormal{\textbf{LTL}}}}

\NewDocumentCommand{\stl}{o}{\IfNoValueTF{#1}{\mymacro{\ltlAcr[\since]}}{\mymacro{\ltlAcr[\since,#1]}}}

\NewDocumentCommand{\transformer}{O{} o o o }{
  \IfBlankTF{#1}{
    \IfNoValueTF{#2}{
      \IfNoValueTF{#4}{\rmH}{\rmH(#4)}
    }{
      \IfNoValueTF{#4}{\mymacro{\rmH_{#2,#3}}}{\mymacro{\rmH(#4)_{#2,#3}}}
    }
  }{
    \IfNoValueTF{#2}{
      \IfNoValueTF{#4}{\mymacro{\rmH^{(#1)}}}{\mymacro{\rmH^{(#1)}(#4)}}
    }{
      \IfNoValueTF{#4}{\mymacro{\rmH^{(#1)}_{#2,#3}}}{\mymacro{\rmH^{(#1)}(#4)_{#2,#3}}}
    }
  }
}
\NewDocumentCommand{\attention}{o}{\IfNoValueTF{#1}{\mymacro{\mathbf{A}}}{\mymacro{\mathbf{A}^{(#1)}}}}
\NewDocumentCommand{\ffn}{o}{\IfNoValueTF{#1}{\mymacro{\mathbf{F}}}{\mymacro{\mathbf{F}^{(#1)}}}}

\NewDocumentCommand{\fo}{o}{\IfNoValueTF{#1}{\mymacro{\text{\textnormal{\textbf{FO}}}[\mathord<]}}{\mymacro{\text{\textnormal{\textbf{FO}}}[\mathord<,#1]}}}

\NewDocumentCommand{\fotwo}{o}{\IfNoValueTF{#1}{\mymacro{\text{\textnormal{\textbf{FO}}}^2[\mathord<]}}{\mymacro{\text{\textnormal{\textbf{FO}}}^2[\mathord<,#1]}}}

\NewDocumentCommand{\pfo}{o}{\IfNoValueTF{#1}{\mymacro{\text{\textnormal{\textbf{PFO}}}^2[\mathord<]}}{\mymacro{\text{\textnormal{\textbf{PFO}}}^2[\mathord<,#1]}}}

\NewDocumentCommand{\ptl}{o}{\IfNoValueTF{#1}{\mymacro{\ltlAcr[\past]}}{\mymacro{\ltlAcr[\past,#1]}}}

\NewDocumentCommand{\pftl}{o}{\IfNoValueTF{#1}{\mymacro{\ltlAcr[\past, \future]}}{\mymacro{\ltlAcr[\past, \future, #1]}}}

\NewDocumentCommand{\ytl}{o}{\IfNoValueTF{#1}{\mymacro{\ltlAcr[\past]}}{\mymacro{\ltlAcr[\yesterday,#1]}}}

\newcommand{\operators}{\mymacro{\mathcal{O}}}

\NewDocumentCommand{\tl}{o}{\IfNoValueTF{#1}{\mymacro{\ltlAcr[\past,\future]}}{\mymacro{\ltlAcr[\past,\future, #1]}}}

\NewDocumentCommand{\ffo}{o}{\IfNoValueTF{#1}{\mymacro{\textbf{FFO}^2[\mathord<]}}{\mymacro{\textbf{FFO}^2[\mathord<,#1]}}}

\NewDocumentCommand{\ahat}{o}{%
  \IfNoValueTF{#1}
    {\mymacro{\text{\textnormal{\textbf{AHAT}}}}\xspace}
    {\mymacro{\text{\textnormal{\textbf{AHAT[#1]}}}}\xspace}%
}

\NewDocumentCommand{\smat}{o}{%
  \IfNoValueTF{#1}
    {\mymacro{\text{\textnormal{\textbf{SMAT}}}}\xspace}
    {\mymacro{\text{\textnormal{\textbf{SMAT[#1]}}}}\xspace}%
}

\newcommand{\modpred}{\mymacro{\text{\textnormal{\texttt{MOD}}}}}

\newcommand{\tlf}{\mymacro{\psi}}

\newcommand{\atom}{\mymacro{\pi}}

\NewDocumentCommand{\query}{o o }{\IfNoValueTF{#1}{\rmQ}{\rmQ_{#1,#2}}}
\NewDocumentCommand{\key}{o o }{\IfNoValueTF{#1}{\rmK}{\rmK_{#1,#2}}}
\NewDocumentCommand{\val}{o o }{\IfNoValueTF{#1}{\rmV}{\rmV_{#1,#2}}}

\newcommand{\tffunc}{\mymacro{\boldsymbol{\lambda}}}

\newcommand{\monoid}{\mymacro{\mathbb{M}}}
\newcommand{\relation}{\mymacro{\preceq}}

\newcommand{\podfaAcr}{\mymacro{\textnormal{PODFA}}}

\newcommand{\QRMonoid}{\mymacro{{\text{\textnormal{\textbf{QR}}}}}}

\newcommand{\LMonoid}{\mymacro{{\text{\textnormal{\textbf{L}}}}}}
\newcommand{\RMonoid}{\mymacro{{\text{\textnormal{\textbf{R}}}}}}

\newcommand{\QDAMonoid}{\mymacro{{\text{\textnormal{\textbf{QDA}}}}}}

\newcommand{\aformula}{\mymacro{\phi}}

\newcommand{\rotationmat}[2]{\mymacro{\mathbf{Rot}}_{#1, #2}}
\newcommand{\transpose}[1]{{\mymacro{#1^{\intercal}}}}

\newcommand{\floatvalue}{\mymacro{f}}
\newcommand{\floatlarge}{\mymacro{f_\text{large}}}

\newcommand{\finiteset}{{\mymacro{\mathbb{F}}}}

\newcommand{\congruenceclass}{{\mymacro{\mathcal{C}}}}
\newcommand{\WFSet}{{\mymacro{K}}}
\newcommand{\monomial}{\mymacro{M}}

\usepackage[final]{colm2026_conference}
\usepackage{colm_custom}

\usepackage{lineno}

\definecolor{darkblue}{rgb}{0, 0, 0.5}
\definecolor{fuchsia}{rgb}{1, 0, 1}
\hypersetup{colorlinks=true, citecolor=darkblue, linkcolor=darkblue, urlcolor=darkblue}

\title{Disentangling the Expressivity of $\rope$}

\author{Selim Jerad$^{1}$ \quad Anej Svete$^{2}$ \quad Jiaoda Li$^{2}$ \quad Ryan Cotterell$^{2}$ \\
Toyota Technological Institute at Chicago$^{1}$ \\
ETH Zürich$^{2}$ \\
\texttt{\href{mailto:sjerad@ttic.edu}{sjerad@ttic.edu}} \\
\texttt{\{\href{mailto:anej.svete@inf.ethz.ch}{anej.svete},
\href{mailto:jiaoda.li@inf.ethz.ch}{jiaoda.li}, \href{mailto:ryan.cotterell@inf.ethz.ch}{ryan.cotterell}\}@ethz.ch}}

\begin{document}

\ifcolmsubmission
\linenumbers
\fi

\maketitle

\begin{abstract}
    Two accounts recur in explanations of the success of rotary position embeddings ($\rope$). 
    Expressivity studies associate periodic position information with \emph{modular predicates}, whereas mechanistic and long-context studies emphasize positional \emph{anchors} and \emph{local offsets}. 
    We formalize both accounts for fully uniform, finite-precision soft-attention transformers. 
    We find that, if every rotary component is \emph{periodic}, $\rope$ transformers recognize exactly the languages definable in past temporal logic with modular predicates. 
    Conventional $\rope$ is \emph{different}: The rotations it computes never repeat. 
    This yields a precision-dependent bounded simulation of fixed-offset \emph{look-back} operators, rather than an all-length modular characterization.
    Controlled experiments match this separation: Constructed periodic schedules length-generalize on modular languages, while conventional $\rope$ behaves more like a bounded locality bias and can impair tasks requiring position-invariant access to distant context.
    Altogether, our findings shed light on \rope{} transformers, bringing theoretical expressivity characterizations closer to models used in practice.
\end{abstract}

\section{Introduction}
In the absence of positional information, self-attention---the core operation of transformers \citep{vaswani2023attentionneed}---is permutation-equivariant. 
Positional encodings (PEs) help distinguish symbol positions. 
The original transformer employs sinusoidal PEs ($\sinpe$), which add position-dependent sine and cosine features to symbol representations. 
However, models using $\sinpe$ often generalize poorly to sequences longer than those seen during training \citep{rosendahl-etal-2019-analysis,neishi-yoshinaga-2019-relation,press2022train}. 
Motivated by this limitation, $\rope$ was proposed as an alternative that encodes relative positions within attention: It rotates query and key vectors as a function of position, making their dot product depend on relative displacement \citep{su2023roformerenhancedtransformerrotary}. 
$\rope$ has since become the de facto PE scheme in transformer-based LMs \citep[][\textit{inter alia}]{bai2023qwentechnicalreport,grattafiori2024llama3herdmodels,gemmateam2024gemmaopenmodelsbased,olmo2025olmo3,yang2025qwen3technicalreport,deepseekai2026deepseekv4highlyefficientmilliontoken}.

Despite its widespread use, the theoretical contributions of $\rope$ remain unclear. 
Existing explanations fall broadly into two themes.
Theoretical expressivity work links the periodicity of sinusoidal functions to modular predicates in formal logic \citep{yang2024maskedhardattentiontransformersrecognize, yang2025kneedeepcrasptransformerdepth}.
More practically oriented analyses find that the apparent length generalization ability of \rope{} can be superficial: $\rope$ may extrapolate to longer sequences without enabling effective use of distant context \citep{kazemnejad2023impactpositionalencodinglength,men2024baseropeboundscontext,gelberg2025extendingcontextpretrainedllms,du2026ropedistinguishespositionstokens}.
A complementary, more mechanistic line of work instead emphasizes its role as a positional anchor and its bias toward fixed local offsets \citep{barbero2025roundroundgomakes,barbero2025llmsattendtoken}.

These perspectives \textit{prima facie} appear incompatible. 
We reconcile them within the fully uniform fixed-precision transformer framework. 
We show that component-periodic $\rope$ ($\ropeperiodic$) is equivalent to past temporal logic with modular predicates, $\ptl[\modpred]$. 
Conventional $\rope$, in contrast, realizes a non-periodic schedule. The two variants differ only in the angles they rotate by.
$\ropeperiodic$ rotates the key and query subspaces by rational multiples of $2\pi$, so each rotary component cycles through finitely many phases and repeats with a fixed period, which a fixed-precision model can realize by a lookup table indexed by the position modulo that period.
Conventional $\rope$ instead rotates by frequencies whose phases are incommensurate with $2\pi$ and therefore never repeat (\cref{subsec:componentperiodic}).
Its relative scores can simulate the look-back of $k$ steps, only while a precision-dependent score gap is maintained. 
This connects $\rope$ to local attention \citep{li2026characterizingexpressivitylocalattention} and gives a formal account of fixed-offset heads observed in practice \citep{barbero2025roundroundgomakes}.
\Cref{fig:landscape} summarizes the landscape of fully uniform finite-precision transformers.

We vet our results experimentally by training transformers on controlled formal languages. 
We find that constructed periodic $\rope$ schedules length-generalize on languages requiring modular predicates. 
In contrast, no conventional base value generalizes on $\kleene{(\syma\syma)}$ or $\kleene{(\syma\symb)}$, suggesting that changing the base does not recover the modular construction. 
Across the same sweep, the mean longest-perfect length of conventional $\rope$ outperforms $\nope$ at six of seven bases on an $\ltlAcr[\yesterday]$ language, but underperforms at every base on both an $\ltlAcr[\past]$ language and a language requiring both $\past$ and $\yesterday$. 
These results support the locality account while suggesting that the resulting local bias can interfere with long-distance conditioning.

\begin{figure}[t!]
    \centering
    \begin{tikzpicture}[
        example/.style={font=\itshape, text=ETHGray!80!black},
        label node/.style={align=right, inner sep=2pt},
        fomod box/.style={fill=ETHBlue!15, draw=ETHBlue!50, thick, rounded corners=3pt},
        fo box/.style={fill=ETHGreen!15, draw=ETHGreen!50, thick, rounded corners=3pt},
        pfomod box/.style={fill=ETHRed!15, draw=ETHRed!50, thick, rounded corners=3pt},
        pfo box/.style={fill=ETHPetrol!15, draw=ETHPetrol!50, thick, rounded corners=3pt},
        ]


        \filldraw[fomod box] (-7,-3.2) rectangle (5.45,1.75);

        \filldraw[fo box] (-6.75,-1.85) rectangle (-0.2,1.5);

        \filldraw[pfomod box] (-4.8,-3) rectangle (5.15,0.25);

        \begin{scope}
            \clip[rounded corners=5pt] (-4.8,-1.85) rectangle (-0.2,0.25);
            \fill[ETHRed, opacity=0.15, rounded corners=3pt]
            (-4.8,-1.85) rectangle (-0.2,0.25);
        \end{scope}

        \filldraw[pfo box] (-4.65,-1.6) rectangle (-0.3,-0.4);


        \node[label node, ETHBlue!80!black, anchor=north west] at (-0.1, 1.35) {%
        \small\bfseries$\stl[\modpred]$: Rightmost hard-attention\\[1pt] + \small\bfseries \sinperiodic{}};
        \node[label node, anchor=north west] at (-0.1, 0.85) {%
        \small\citet{yang2024maskedhardattentiontransformersrecognize}};

        \node[label node, ETHGreen!70!black, anchor=north west] at (-6.65, 1.35) {%
        \small\bfseries$\stl$: Rightmost hard attention};
        \node[label node, anchor=north west] at (-6.65, 0.8) {%
        \small\citet{yang2024maskedhardattentiontransformersrecognize}};

        \node[label node, ETHRed!80!black, anchor=north] at (-1.2, -2) {%
        \small\bfseries$\ptl[\modpred]$: \textbf{\ropeperiodic{} / \sinperiodic{} attention}};
        \node[label node, anchor=north] at (-1.2, -2.5) {%
        \small\cref{thm:SmatEquivPfo}, \cref{thm:smatsin}};

        \node[label node, ETHPetrol!70!black, anchor=north west] at (-4.55, -0.55) {%
        \small\bfseries$\ptl$: Soft attention};
        \node[label node, anchor=north west] at (-4.55, -1.05) {%
        \small\citet{li2025characterizingexpressivityfixedprecisiontransformer}};

        \node[example] at (-5.7, -0.75)   {$\kleene{\alphabet}\syma\symb\kleene{\alphabet}$};
        \node[example] at (-0.85, -0.12)  {$\kleene{(\syma\symb)}$};
        \node[example] at (3.8, -2.6)  {$\kleene{(\syma\syma)}$};
        \node[example] at (-0.65, -0.95)  {$\kleene{\syma}$};
        \node[example] at (6.15, 0)     {\textsc{Parity}};

    \end{tikzpicture}
    \caption{The expressivity landscape of fully uniform finite-precision transformers. The \textcolor{ETHRed}{red box} shows our contributions: The connection of \ropeperiodic{} and \sinperiodic{} transformers to $\ptl[\modpred]$, which is incomparable with $\stl$ (capturing rightmost hard-attention $\nope$ transformers).
    Their intersection contains $\ptl$, corresponding to soft-attention $\nope$ transformers.}
    \label{fig:landscape}
\end{figure}

\section{Preliminaries}\label{sec:prelim}
An \defn{alphabet} $\alphabet$ is a finite, non-empty set of \defn{symbols} and $\kleene{\alphabet}$ denotes the set of all strings over $\alphabet$.
With $|\cdot|$, we denote the length of a string---in particular, we write $|w_1 \cdots w_N| = N$.
We write $\eps$ for the \defn{empty string}, the unique string of length 0.
A \defn{formal language} $\lang$ over $\alphabet$ is a subset of $\kleene{\alphabet}$.

\subsection{Linear Temporal Logic}
\defn{Regular languages} are languages that can be described by regular expressions; see \Cref{subsec:regex} for details. 
\emph{Subclasses} of regular languages can equivalently be described with different formalisms such as automata and logic. 
Past work \citep{yang2024maskedhardattentiontransformersrecognize, li2025characterizingexpressivityfixedprecisiontransformer,jerad-etal-2025-unique} links finite-precision transformers with subregular classes of languages. 
We use linear temporal logic (\ltlAcr) as our primary descriptive formalism. 

Given $\str=\sym_1\cdots\sym_\strlen$, \ltlAcr{} formulas are evaluated at a position $\idxi$; a formula accepts $\str$ when it holds at the readout position $\strlen+1$. 
The atomic predicate $\atom_\syma(\idxi)$ tests whether the symbol $\sym_\idxi$ at position $\idxi$ matches $\syma$.
Furthermore, \ltlAcr{} leverages \emph{temporal} operators to evaluate context-dependent properties at positions.
The operators we need are the following:
\begin{equation}
\str,\idxi\models\past\tlf\iff\exists\idxj<\idxi:\str,\idxj\models\tlf,
\qquad
\str,\idxi\models\yesterday\tlf\iff\idxi>1\ \land\ \str,\idxi-1\models\tlf.
\end{equation}
For $k\geq1$, we write $\yesterday^k\tlf$ for $k$-fold composition of $\yesterday$; equivalently, $\str,\idxi\models\yesterday^k\tlf$ if and only if $\idxi>k$ and $\str,\idxi-k\models\tlf$.
We write $\ltlAcr[\past]$ and $\ltlAcr[\past,\yesterday]$ for the fragments generated from these operators, atomic predicates, and Boolean connectives. The detailed \ltlAcr{} semantics and equivalent first-order, automata, and algebraic formalisms appear in \cref{app:flt}.
A third operator, $\since$ (``since''), subsumes both $\past$ and $\yesterday$ \citep{gabbay}, so $\stl$ is all of \ltlAcr{}---exactly the star-free languages \citep{Kamp1968-KAMTLA,mcnaughton1971counter}.
We use \defn{first-order logic} over the linear order $<$ interchangeably with \ltlAcr{}, via $\stl=\fo$ \citep{Kamp1968-KAMTLA} and $\ptl=\pfo$, the past fragment of the two-variable logic $\fotwo$ \citep{li2025characterizingexpressivityfixedprecisiontransformer}; \cref{app:first-order-logic} gives the formal definitions.

\paragraph{Modular predicates.}
The unary predicate $\modpred_m^r\left(\idxi\right)$ holds exactly when $\idxi\equiv r\pmod m$. 
We write $\ptl[\modpred]$ for $\ptl$ augmented with all such predicates. 
Adding modular predicates preserves the $\ltlAcr{}$--$\fo$ correspondence: $\ptl[\modpred]=\pfo[\modpred]$ (\cref{thm:main1}).

\subsection{Transformers}
A consistent takeaway of theoretical work is that transformers' computational abilities heavily depend on the modeling assumptions one makes \citep{hao-etal-2022-formal, jerad-etal-2025-unique,svete2026revisiting}.
Understanding the \emph{minimal viable} definition of a transformer is particularly useful, as it facilitates the analysis of the fundamental components of every transformer.
To this end, we adopt the fully uniform fixed-precision model of \citet{li2025characterizingexpressivityfixedprecisiontransformer}. 
Uniformity requires \emph{all strings} be processed by a model defined by a single set of parameters, in contrast to having a separate model for each length.

\paragraph{Finite precision.}
Transformer parameters and activations belong to a \emph{finite} set $\finiteset \defeq \set{\floatvalue_1, \ldots, \floatvalue_K}$, corresponding to real-world transformers (e.g., 32-bit floating-point arithmetic).
All computations are performed rounded to $\finiteset$, with
$\infty$ and $-\infty$ in $\finiteset$ which describe overflow.
We denote by $\floatlarge$ the smallest positive element of $\finiteset$ such that $\exp\left(-\floatlarge\right)$ rounds to $0$. 
This permits exact hard selection whenever the winning score exceeds every alternative by at least $\floatlarge$.

\paragraph{Transformers as string classifiers.}
We study transformers as \defn{string classifiers}.
An input to a transformer is a string $\str \in \kleene{\alphabet}$ augmented with the \underline{b}eginning-\underline{o}f-\underline{s}tring $\bos$ and \underline{e}nd-\underline{o}f-\underline{s}tring $\eos$ symbols, $\bos  \str  \eos$.
We set $\eosalphabet \defeq \alphabet \cup \set{\bos,\eos}$.
An initial embedding layer maps symbols in $\eosalphabet$ to distinct elements of $\finiteset^\hiddDim$, where $\hiddDim \in \N$ is the hidden state size.
The embedding layer is followed by a stack of $\nLayers$ transformer layers, each composed of an attention layer, a feedforward network, and layer-normalization.
For a layer $\layerIdx$ and position $\idxi$, $\vx_{\idxi}^\layerIdx\in \finiteset^\hiddDim$ is the \defn{contextual representation} of symbol $\sym_\idxi$ after being processed by layer $\layerIdx$.
We denote by $\tffunc \colon \kleene{(\eosalphabet)} \to \kleene{(\finiteset^\hiddDim)}$ the function that maps an input $\bos  \str \eos$ to contextual representations through the embedding layer and the $\nLayers$ transformer layers.
For a string $\str$ and string position $\idxi$, we denote by $\tffunc(\str)_\idxi\in \finiteset^\hiddDim$ the final $\idxi$'th contextual representation $\vx_{\idxi}^\nLayers$.
To enable language recognition, $\tffunc$ is paired with an output classification layer $\outputlayer \colon (\finiteset^\hiddDim) \to \set{0,1}$ acting on $\tffunc(\str)_\eos$ for an input $\str$.
Given a precision regime $\finiteset$, the tuple $\tf \defeq (\tffunc, \outputlayer)$ is an $\finiteset$-\defn{transformer} and defines a language $\lang(\tf) \defeq \set{\str \mid \outputlayer(\tffunc(\str)_\eos)=1}$.

\paragraph{Soft attention.}
We assume soft attention, which corresponds to standard implementations.
We note a minute but important practical detail: For numerical stability, soft attention is usually computed by subtracting from $\mS_{\idxi,\idxj}$---the attention score between the query $\vq_\idxi$ and the key $\vk_\idxj$---the maximum score.
We define the normalized score $\valpha_{\idxi,\idxj}$ as follows:
\begin{equation}
    \valpha_{\idxi,\idxj}=
    \frac{\exp\left(\mS_{\idxi,\idxj}-\max_{\idxk<\idxi}\mS_{\idxi,\idxk}\right)}
    {\sum_{\idxl=0}^{\idxi-1}\exp\left(\mS_{\idxi,\idxl}-\max_{\idxk<\idxi}\mS_{\idxi,\idxk}\right)}.
\end{equation}

These $\nope$ transformers ($\smat$) recognize exactly $\ptl=\pfo$ \citep{li2025characterizingexpressivityfixedprecisiontransformer}.

\subsection{Rotary Position Embeddings}\label{subsec:rope}
$\rope$ \citep{su2023roformerenhancedtransformerrotary} partitions the query's and key's $\hiddDim$-dimensional space into two-dimensional subspaces and \emph{rotates} each by an angle that depends on the absolute position $\idxi$.
Concretely, let
\begin{equation}
\mR_{\rotAngles,\idxi}\defeq
\operatorname{diag}\left(\rotationmat{\theta_1}{\idxi},\ldots,\rotationmat{\theta_{\hiddDim/2}}{\idxi}\right), 
\qquad
\rotationmat{\theta}{\idxi}\defeq
\begin{pmatrix}
\cos\left(\theta\idxi\right)&-\sin\left(\theta\idxi\right)\\
\sin\left(\theta\idxi\right)&\cos\left(\theta\idxi\right)
\end{pmatrix}
\end{equation}
where $\mR_{\rotAngles, \idxi}$ is a block-diagonal matrix and $\rotAngles \in [0, 2\pi]^{\hiddDim/2}$ is a tuple of angles assigned to the different two-by-two rotation matrices in $\mR_{\rotAngles, \idxi}$, which we call the \defn{schedule}.
When the schedule is implicit, we drop it from the subscript and write, e.g., $\mR_\idxi$; we restore it whenever several schedules are in play.
In conventional implementations, the angular frequencies are parameterized as $\theta_\idxd=\beta^{-2\left(\idxd-1\right)/\hiddDim}$, where $\beta>0$ is called the \emph{base}; the original $\rope$ construction uses $\beta=10^4$ \citep{su2023roformerenhancedtransformerrotary}.

At position $\idxi$, queries and keys become
\begin{equation}
\vq_\idxi=\mR_{\rotAngles,\idxi}\left(\mQ\vx_\idxi+\biasVector_q\right),
\qquad
\vk_\idxi=\mR_{\rotAngles,\idxi}\left(\mK\vx_\idxi+\biasVector_k\right).
\end{equation}
For exact rotations and zero biases, orthogonality gives the relative-position identity
\begin{equation}
\transpose{\vk_\idxj}\vq_\idxi
=\transpose{(\mR_{\rotAngles,\idxj}\mK\vx_\idxj)}\mR_{\rotAngles,\idxi}\mQ\vx_\idxi
= \transpose{\vx_\idxj}\transpose{\mK}\transpose{(\mR_{\rotAngles,\idxj})}\mR_{\rotAngles,\idxi}\mQ\vx_\idxi,
=\transpose{\vx_\idxj}\transpose{\mK}\mR_{\rotAngles,\idxi-\idxj}\mQ\vx_\idxi,
\end{equation}
making the score depend on the distance $\idxi-\idxj$. 

\paragraph{Ideal and realized rotations.}
The rotation above is \defn{ideal}: It is evaluated in exact arithmetic, and it stores real numbers.
A fixed-precision model computes something else.
It stores a schedule $\widehat{\rotAngles}$ and evaluates the rotation with the finite-precision arithmetic.
We write $\widehat{\mR}_{\widehat{\rotAngles},\idxi}\in\finiteset^{\hiddDim\times\hiddDim}$ for the \defn{realized} matrix in finite-precision arithmetic at position $\idxi$, and $\widehat{\mR}^{(\idxd)}_{\widehat{\rotAngles},\idxi}\in\finiteset^{2\times2}$ for its $\idxd\textsuperscript{th}$ block.
The hat marks the realized map as opposed to the ideal one $\idxi\mapsto\mR_{\rotAngles,\idxi}$.
The distinction matters because rounding need not preserve the structure of the ideal map; our periodicity definitions concern the realized map.

\paragraph{Custom schedules and unrotated subspaces.}
To establish lower bounds for \rope{} transformers, we permit a \emph{custom} schedule---a bespoke set of angles $\rotAngles$ chosen for the target language. 
Custom schedules are standard in expressivity literature \citep[e.g.,][]{chiang2023tighterboundsexpressivitytransformer,yang2024maskedhardattentiontransformersrecognize, yang2025kneedeepcrasptransformerdepth}. 
We also allow \emph{unrotated} subspaces with 
$\theta_\idxd=0$; these subsume partial-$\rope$ variants \citep{barbero2025llmsattendtoken,yangropetonope,khan2026fractionalrotationpotentialinvestigating} and retain the $\ptl$ computation available to $\nope$, while separate rotary dimensions supply modular or fixed-offset information.

\subsection{Component-Periodic and Non-Periodic \rope{}}\label{subsec:componentperiodic}

Intuitively, PEs with \emph{periodicity} behave similarly to modular positional predicates in logic \citep{chiang2023tighterboundsexpressivitytransformer,yang2024counting,yang2025kneedeepcrasptransformerdepth}.
We formalize this intuition for finite-precision \rope{}, with the proofs in \cref{app:schedules-proofs}.
Throughout, $\widehat{\mR}_\idxi$ and $\widehat{\mR}^{(\idxd)}_\idxi$ are the realized matrix and its $\idxd\textsuperscript{th}$ block.
\begin{definition}[Component-periodic and non-periodic $\rope$]\label{def:component-periodic-rope}
    A realized $\rope$ schedule $\widehat{\rotAngles}$ is \defn{component-periodic} if, for every $\idxd$, there is an integer $m_\idxd\geq1$ such that
    \begin{equation}
    \widehat{\mR}^{(\idxd)}_{\idxi+m_\idxd}=\widehat{\mR}^{(\idxd)}_\idxi
    \qquad\text{for all }\idxi\in\N.
    \end{equation}
    It is \defn{non-periodic} if this condition fails: Some component has no positive integer period.
\end{definition}
Because $\hiddDim$ is fixed, component-periodicity is equivalent to periodicity of the full map $\idxi\mapsto\widehat{\mR}_\idxi$.
In particular, $M\defeq\operatorname{lcm}\left(m_1,\ldots,m_{\hiddDim/2}\right)$ satisfies $\widehat{\mR}_{\idxi+M}=\widehat{\mR}_\idxi$ for every $\idxi$, with unrotated blocks having period $1$.
We denote by $\smat[\ropeperiodic]$ and $\smat[\ropenonperiodic]$ the class of languages recognized by fixed-precision component-periodic and non-periodic $\rope$ transformers, respectively.

We show that periodic schedules can be expressed via finitely many modular predicates:
\begin{restatable}[Component-periodicity via $\modpred$]{proposition}{periodicinputs}\label{prop:periodicinputs}
    If a realized $\rope$ schedule $\widehat{\rotAngles}$ is component-periodic, then its image is finite and, for every matrix coordinate and realized value, the positions attaining that value are definable by unary modular predicates.
\end{restatable}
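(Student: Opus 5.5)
The plan is to use the global period $M\defeq\operatorname{lcm}\left(m_1,\ldots,m_{\hiddDim/2}\right)$ established right after \cref{def:component-periodic-rope}. This gives $\widehat{\mR}_{\idxi+M}=\widehat{\mR}_\idxi$ for every $\idxi\in\N$. Everything then reduces to a statement about a periodic function $\idxi\mapsto\widehat{\mR}_\idxi$ on $\N$ with period $M$.

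\textbf{Finite image.} By induction on $t$, $\widehat{\mR}_{\idxi+tM}=\widehat{\mR}_\idxi$ for all $t\geq0$. Every position $\idxi\geq1$ can be written as $\idxi=r+tM$ with $r\in\set{1,\ldots,M}$. Hence the image of the realized map is $\set{\widehat{\mR}_1,\ldots,\widehat{\mR}_M}$, which has at most $M$ elements. If positions start at $0$, the representatives are $\set{0,\ldots,M-1}$ instead, and the argument is identical. This step also holds independently of precision, since $\finiteset^{\hiddDim\times\hiddDim}$ is already finite. The point of periodicity is the next step, not finiteness.

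\textbf{Definability.} Fix a matrix coordinate $(a,b)$ and a realized value $\floatvalue\in\finiteset$. Let
\begin{equation}
R_{a,b,\floatvalue}\defeq\set{r\in\set{1,\ldots,M}\mid (\widehat{\mR}_r)_{a,b}=\floatvalue}.
\end{equation}
By periodicity, $(\widehat{\mR}_\idxi)_{a,b}=\floatvalue$ holds if and only if the representative $r$ of $\idxi$ modulo $M$ lies in $R_{a,b,\floatvalue}$. We reduce $r$ modulo $M$, identifying $M$ with residue $0$. The set of such positions is therefore defined by the finite disjunction $\bigvee_{r\in R_{a,b,\floatvalue}}\modpred_M^{r\bmod M}(\idxi)$. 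The empty disjunction is $\false$, which is also definable.

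\textbf{Main obstacle.} The only delicate point is that periodicity is assumed of the \emph{realized} map, not the ideal one. I must not argue through $\theta_\idxd\in 2\pi\Q$, because rounding could break or create periodicity. Working directly from \cref{def:component-periodic-rope} avoids this issue. The remaining care is bookkeeping: the index convention for $\N$ (whether $\bos$ sits at $0$) and the residue for $r=M$.
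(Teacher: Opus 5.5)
Your proof is correct and follows essentially the same route as the paper: pass to the common period $M=\operatorname{lcm}(m_1,\ldots,m_{\hiddDim/2})$, note that the image lies in $\{\widehat{\mR}_0,\ldots,\widehat{\mR}_{M-1}\}$, and write each coordinate--value relation as a finite disjunction of $\modpred_M^a(\idxi)$ over the residues that attain that value. Your extra remarks are harmless bookkeeping that the paper leaves implicit: the $0$- versus $1$-indexing, the empty disjunction being $\false$, and arguing only from the realized map rather than from rational angles.
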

A standard class of periodic schedules is the set of \defn{rational} multiples of $\pi$ \citep{yang2024maskedhardattentiontransformersrecognize}.
\begin{restatable}[Rational-angle functions are periodic]
{proposition}{rationalangles}\label{prop:rationalangles}
If $\theta_\idxd/\left(2\pi\right)=a_\idxd/m_\idxd\in\Q$, then $\idxi\mapsto\rotationmat{\theta_\idxd}{\idxi}$ has period $m_\idxd$.
\end{restatable}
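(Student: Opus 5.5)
The plan is a direct computation in exact arithmetic, using the $2\pi$-periodicity of $\cos$ and $\sin$. Fix $\idxd$ and write $\theta_\idxd = 2\pi a_\idxd/m_\idxd$ with $a_\idxd\in\Z$ and $m_\idxd\geq 1$. For every $\idxi\in\N$ we have
\begin{equation}
\theta_\idxd(\idxi+m_\idxd) = \theta_\idxd\idxi + 2\pi a_\idxd .
\end{equation}
Since $a_\idxd$ is an integer, $\cos(\theta_\idxd(\idxi+m_\idxd))=\cos(\theta_\idxd\idxi)$, and likewise for $\sin$. Each of the four entries of $\rotationmat{\theta_\idxd}{\idxi}$ is one of $\pm\cos(\theta_\idxd\idxi)$ or $\pm\sin(\theta_\idxd\idxi)$, so the identity holds entrywise. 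This gives $\rotationmat{\theta_\idxd}{\idxi+m_\idxd}=\rotationmat{\theta_\idxd}{\idxi}$ for all $\idxi$, which is exactly the periodicity condition of \cref{def:component-periodic-rope} for this block, with period $m_\idxd$.

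I would add two remarks to keep the statement well posed. First, ``period'' here means \emph{a} period, not necessarily the minimal one. If $\gcd(a_\idxd,m_\idxd)=g>1$, the minimal period is $m_\idxd/g$, and that number divides $m_\idxd$. The case $\theta_\idxd=0$ falls under this with $a_\idxd=0$: every $m_\idxd$, including $1$, is a period, matching the paper's convention for unrotated blocks. Second, the statement concerns the ideal map $\idxi\mapsto\rotationmat{\theta_\idxd}{\idxi}$.

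The only real subtlety is connecting this to the realized map $\widehat{\mR}$, since the definition of component-periodicity is about realized blocks. If the implementation computes the rotation from $\idxi$ through a rounded product $\widehat\theta_\idxd\cdot\idxi$, the rounding can break periodicity. In that case I would note that a periodic realization is obtained by rounding the ideal entries directly. Concretely, define $\widehat{\mR}^{(\idxd)}_\idxi$ as the entrywise rounding to $\finiteset$ of $\rotationmat{\theta_\idxd}{\idxi \bmod m_\idxd}$, for example via a lookup table over the $m_\idxd$ residues. Rounding is a fixed function applied to an $m_\idxd$-periodic sequence, so the result inherits period $m_\idxd$.
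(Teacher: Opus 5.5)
Your proof is correct and matches the paper's: both use $\theta_\idxd(\idxi+m_\idxd)=\theta_\idxd\idxi+2\pi a_\idxd$ with $a_\idxd\in\Z$ and the $2\pi$-periodicity of $\cos$ and $\sin$, entrywise. Your remarks on the minimal period $m_\idxd/\gcd(a_\idxd,m_\idxd)$ are accurate, and your lookup-table realization of $\widehat{\mR}$ is the same construction the paper gives right after the proposition.
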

Intuitively, adding $m_\idxd$ to $\idxi$ changes the phase by $2\pi a_\idxd$.

\Cref{prop:rationalangles} concerns the \emph{ideal} map, and it may not transfer to finite precision.
The position $\idxi$ grows without bound as the string does, so neither $\idxi$ nor the phase $\theta_\idxd\idxi$ stays representable in $\finiteset$ at all lengths.
An implementation that forms the phase and then rounds therefore computes a map that need not agree with $\rotationmat{\theta_\idxd}{\idxi}$, and that realized map need not repeat.
Rational angles alone thus do not make a schedule component-periodic in the sense of \cref{def:component-periodic-rope}.
We resort to realizing the same rational-angle function without forming the product $\theta_\idxd\idxi$.
For $\theta_\idxd=2\pi a_\idxd/m_\idxd$, we precompute the finite table
\begin{equation}
    T_\idxd\left(r\right)\defeq\operatorname{round}_\finiteset\left(\rotationmat{\theta_\idxd}{r}\right),
    \qquad 0\leq r<m_\idxd,
\end{equation}
and define the realized block directly by $\widehat{\mR}^{(\idxd)}_\idxi=T_\idxd\left(\idxi\bmod m_\idxd\right)$.
The table has $m_\idxd$ entries and the counter has $m_\idxd$ states, and both depend on the schedule but not on the input length, so the construction stays inside the fully uniform fixed-precision model.
The realized map then satisfies $\widehat{\mR}^{(\idxd)}_{\idxi+m_\idxd}=\widehat{\mR}^{(\idxd)}_\idxi$ for every $\idxi$ by construction.
Similar cyclic lookups appear in practice.
\citet{huo2026periodicropeinfinitecontext} precomputes one period of P-RoPE and indexes it by position modulo the period.
\citet{wang-etal-2024-resonance} round each $\rope$ wavelength to the nearest integer, so that every component repeats after a whole number of positions.
Both were introduced to improve length generalization, and both are component-periodic in the sense of \cref{def:component-periodic-rope}, so the characterization we prove next applies to them directly.

\section{A Characterization of $\smat[\ropeperiodic]$}\label{sec:smatrope}
This section describes the equivalence of $\smat[\ropeperiodic]$ with $\ptl[\modpred] =\pfo[\modpred]$.
The next two subsections sketch the intuition for the lower and upper bounds; full proofs are in \cref{app:periodic-proofs}.
\begin{restatable}{theorem}{SmatEquivPfo}\label{thm:SmatEquivPfo}
    $\smat[\ropeperiodic] = \ptl[\modpred] = \pfo[\modpred]$.
\end{restatable}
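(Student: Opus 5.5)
The second equality, $\ptl[\modpred]=\pfo[\modpred]$, is \cref{thm:main1}. It remains to prove $\smat[\ropeperiodic]=\ptl[\modpred]$. I would follow the template of \citet{li2025characterizingexpressivityfixedprecisiontransformer} for $\smat=\ptl$, adding the modular predicates through the periodic rotary blocks.

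\textbf{Upper bound} ($\smat[\ropeperiodic]\subseteq\ptl[\modpred]$). Let $M$ be the common period of the realized schedule. By \cref{prop:periodicinputs}, $\widehat{\mR}_\idxi$ takes finitely many values, and each value is selected by a Boolean combination of predicates $\modpred_M^r$.

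I would prove by induction on layers that every coordinate value of $\vx^\layerIdx_\idxi$ is definable by a $\ptl[\modpred]$ formula.

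\emph{Embeddings and position-wise operations.} The base case is immediate from the atomic predicates. The feedforward networks and layer normalization act position-wise on a finite domain, so they are finite Boolean combinations of the inductive formulas. The rotated query and key $\vq_\idxi,\vk_\idxi$ are determined by the finite pair $(\vx_\idxi,\idxi\bmod M)$. Hence each possible rotated value is defined by a formula conjoining the inductive formula with a $\modpred_M^r$ predicate.

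\emph{Attention.} This is the main obstacle. The score $\mS_{\idxi,\idxj}$ is a function of two finite-valued "types": the query type of $\idxi$ and the key type of $\idxj$, each now including a residue class. I would reuse the finite-precision soft-attention argument of \citet{li2025characterizingexpressivityfixedprecisiontransformer} essentially verbatim, with the alphabet of key types enlarged by residues.

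That argument has the following steps. The maximum score is determined by which key types occur in the strict past, which is expressible with $\past$. After subtracting the maximum and rounding, the softmax output depends only on which types occur. It also depends on their counts, but only up to saturation thresholds, because finite-precision sums saturate. Those thresholds are expressible in $\ptl$ via the approach of that paper: I would import its lemma that rounded sums with a fixed precision depend only on the set of present values, using its order-of-summation convention.

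The care required is that the query's residue can interact with the key's residue, so the score is not purely relative. That is harmless: the query's residue is fixed at the evaluation point, so I would case-split on $\modpred_M^r(\idxi)$ and, in each branch, apply $\past$ to formulas mentioning key residues.

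Finally, $\outputlayer$ reads the $\eos$ position, giving a $\ptl[\modpred]$ formula for the language.

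\textbf{Lower bound} ($\ptl[\modpred]\subseteq\smat[\ropeperiodic]$). Take the $\nope$ construction for $\ptl$ and run it in unrotated subspaces ($\theta_\idxd=0$). For each modulus $m$ appearing in the formula, add a rotary block with table schedule $\theta=2\pi/m$, realized by $T(\idxi\bmod m)$.

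To compute $\modpred_m^r(\idxi)$ at position $\idxi$, I would use one attention head whose query and key are constant vectors $(1,0)$ placed in this block. The rotated query is then $(\cos(2\pi\idxi/m),\sin(2\pi\idxi/m))$, already stored in the realized table. A feedforward layer maps each of the $m$ distinct table values to the indicator of $r$; alternatively, this can be done via the value vector. Distinctness of the $m$ table entries holds for precision fine enough relative to $m$, which we may assume since $\finiteset$ is chosen with the construction.

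Since the rotated query is available position-wise, the modular predicates become ordinary Boolean features. The remaining $\past$ and Boolean structure is then handled by the existing $\smat$ construction on the unrotated dimensions.
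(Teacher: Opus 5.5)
Your upper bound is essentially the paper's. The paper also treats the realized rotation as a finite map indexed by $\idxi \bmod M$ and defers everything else, including finite-precision soft attention, to \citet{li2025characterizingexpressivityfixedprecisiontransformer}; your case split on the query residue with residue-enriched key types just spells this out.

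The lower bound, however, has a genuine gap. The rotated query $\vq_\idxi=\widehat{\mR}_\idxi(\mQ\vx_\idxi+\biasVector_q)$ is never written to the residual stream. In this model $\rope$ acts only inside the scores $\transpose{\vk_\idxj}\vq_\idxi$. The head's output is $\sum_\idxj\valpha_{\idxi,\idxj}\vv_\idxj$, with unrotated values computed from $\vx_\idxj$, which carry no positional information. So no feedforward layer can ``map each of the $m$ table values to the indicator of $r$,'' and the claim that the rotated query is available position-wise assumes what must be proved. Your fallback ``via the value vector'' is not worked out, and with the constant key $(1,0)$ at every position it does not directly give a Boolean. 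The score then depends on $(\idxi-\idxj)\bmod m$, so the head spreads its weight over all past positions congruent to $\idxi$. For example, a value equal to $1$ at $\bos$ returns $\valpha_{\idxi,\bos}$, which for $\idxi\equiv 0\pmod m$ is (ideally) $m/\idxi$: a length-dependent quantity, not an exact $\{0,1\}$ output.

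The missing idea is an anchor whose phase does not move, plus a finite-precision margin that makes the selection exact. In the paper's construction, the key in the rotary pair is nonzero only at $\bos$ (position $0$, where the rotation is the identity). The query is the constant $(\cos(2\pi r/m),-\sin(2\pi r/m))$, so the $\bos$ score is $C\cos(2\pi(\idxi-r)/m)$. An unrotated pair gives every non-$\bos$ key the constant score $C(1-\delta_m/2)$, where $\delta_m=\min_{1\le s<m}(1-\cos(2\pi s/m))>0$. With $C\geq 2\floatlarge/\delta_m$, whichever side wins ($\bos$ alone, or the tied non-$\bos$ positions) is ahead by at least $\floatlarge$. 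Stable softmax then gives exactly $\valpha_{\idxi,\bos}=\mathbbm{1}\{\idxi\equiv r \pmod m\}$, and a value that is $1$ at $\bos$ and $0$ elsewhere outputs the predicate. There is also an edge case at $\idxi=1$: $\bos$ is the only key in the strict past, so the head outputs $1$ regardless. The paper flags the first position separately and stores the constant $\modpred_m^r(1)$ there. What $\finiteset$ must supply is therefore this score margin, with exact table entries and constants, not merely distinctness of the $m$ table entries.
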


\subsection{From $\smat[\ropeperiodic]$ to $\pfo[\modpred]$}
Let $M$ be the common period to all the rotation matrices in the block diagonal matrix (\cref{subsec:componentperiodic}).
Then, every realized entry depends only on $\idxi\bmod M$, so equality to any realized value is a finite disjunction of predicates $\modpred_M^r\left(\idxi\right)$ (\cref{prop:periodicinputs}). 
The fixed-precision simulation of all remaining transformer operations by $\pfo$ \citep{li2025characterizingexpressivityfixedprecisiontransformer} then gives $\smat[\ropeperiodic]\subseteq\pfo[\modpred]$.
\begin{restatable}{lemma}{ropepupperbound}\label{lem:upperperiodic}
    Any $\finiteset$-transformer with $\ropeperiodic$ can be simulated by $\pfo[\modpred]$. 
    Hence, $\smat[\ropeperiodic] \subseteq \pfo[\modpred]=\ptl[\modpred]$.
\end{restatable}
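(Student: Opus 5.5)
The plan is to reuse the fixed-precision $\nope$ simulation of \citet{li2025characterizingexpressivityfixedprecisiontransformer}, which shows $\smat\subseteq\pfo$, and to extend it with modular predicates to handle the rotations. That simulation proceeds layer by layer. For each layer $\layerIdx$, each coordinate, and each value $\floatvalue\in\finiteset$, it builds a formula $\aformula^{\layerIdx}_{c,\floatvalue}(\idxi)$ in $\pfo$ that holds exactly when the $c$-th coordinate of $\vx^\layerIdx_\idxi$ equals $\floatvalue$. Since $\finiteset$ and $\hiddDim$ are finite, there are finitely many such formulas, and each one uses only two variables with past quantification.

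I will keep this inductive invariant and change only the attention step, where queries and keys are now multiplied by $\widehat{\mR}_\idxi$. By \cref{prop:periodicinputs}, the realized map $\idxi\mapsto\widehat{\mR}_\idxi$ has finite image and period $M$. So each rotated query coordinate at position $\idxi$ is a fixed finite-precision function $g$ of $(\idxi\bmod M, \mQ\vx_\idxi+\biasVector_q)$. The formula for ``the $c$-th rotated query coordinate equals $\floatvalue$'' is then
\[
\textstyle\bigvee_{r<M}\ \bigvee_{\vv:\ g(r,\vv)_c=\floatvalue}\Bigl(\modpred_M^r(\idxi)\wedge\bigwedge_{c'}\aformula_{c',\vv_{c'}}(\idxi)\Bigr),
\]
which is a finite disjunction and uses only the single free variable $\idxi$. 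Keys are handled in the same way with $\idxj$ in place of $\idxi$. The score $\mS_{\idxi,\idxj}$ is a finite-precision function of the realized query at $\idxi$ and the realized key at $\idxj$. It therefore depends only on finitely many unary properties of $\idxi$ and of $\idxj$, each of which is now definable in $\pfo[\modpred]$.

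Once this is in place, the rest of the $\nope$ argument goes through unchanged. That argument treats scores as values determined by unary-definable ``types'' of the query and key positions, handles the max-subtracted softmax with finite-precision rounding, and expresses the attention output using past quantifiers over $\idxj<\idxi$ (through counting and saturation arguments inside two-variable past logic). The feedforward network and layer normalization act positionwise on finitely many values, so they are finite lookups. The output layer reads position $\eos$, i.e., readout position $\strlen+1$.

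The step I expect to be the main obstacle is checking that the $\nope$ simulation really uses only the fact that query and key features are unary-definable properties of their positions, and never relies on position-independence. If it does use position-independence somewhere, for instance in an argument about which key types are present in the prefix, I will refine each type by adjoining the residue $\idxj\bmod M$. This still leaves finitely many types, so the same argument applies. After that, the equality $\pfo[\modpred]=\ptl[\modpred]$ from \cref{thm:main1} gives the second claim.
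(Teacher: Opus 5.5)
Your proposal is correct and follows the paper's route. Like the paper, you use \cref{prop:periodicinputs} to make each realized rotation entry definable by unary modular predicates, express the rotation of queries and keys as a finite lookup (a finite disjunction over residues and pre-rotation values), and defer every remaining component to the $\nope$ simulation of \citet{li2025characterizingexpressivityfixedprecisiontransformer}. Your explicit formula for the rotated coordinates, and your note that query and key types can be refined by the residue modulo $M$, make precise a step the paper leaves implicit.
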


\paragraph{\rope{} as a unary predicate.}
Although $\rope$ scores encode \emph{pairwise} distances, it does not require binary primitives; \emph{unary} ones suffice.
To see why, consider the binary modular predicate
\begin{equation}
    \modpred^r_m(\idxi, \idxj) = \true \iff (\idxj - \idxi) \equiv r\pmod m,
\end{equation}
which models the relative position dependency of $\rope$ by passing the distance between two positions through a periodic function.
We show in \cref{app:upperperiodic-proofs} that the binary modular predicate can be expressed by a Boolean combination of unary modular predicates. 
\begin{restatable}{proposition}{binarymodpred} \label{prop:binarymodpred}
    Binary modular predicates can be expressed with unary modular predicates and Boolean logic.
\end{restatable}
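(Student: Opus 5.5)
The plan is to show that the binary predicate $\modpred^r_m(\idxi,\idxj)$ is equivalent to a finite Boolean combination of unary modular predicates on $\idxi$ and on $\idxj$ separately. The starting observation is that $\idxj-\idxi$ modulo $m$ depends only on the residues $\idxi\bmod m$ and $\idxj\bmod m$. Concretely, I would write
\begin{equation}
    \modpred^r_m(\idxi,\idxj)\iff\bigvee_{s=0}^{m-1}\left(\modpred^s_m(\idxi)\land\modpred^{(s+r)\bmod m}_m(\idxj)\right).
\end{equation}

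Correctness follows in two directions. For the forward direction, let $s\defeq\idxi\bmod m$. If $\idxj-\idxi\equiv r\pmod m$, then $\idxj\equiv s+r\pmod m$, so the disjunct indexed by $s$ holds. For the converse, any true disjunct gives $\idxi\equiv s$ and $\idxj\equiv s+r$, and subtracting yields $\idxj-\idxi\equiv r\pmod m$. The disjunction has exactly $m$ terms, and $m$ is a fixed constant, so the formula is finite. I would also note that $\modpred^r_m$ for $r$ outside $\{0,\ldots,m-1\}$ is handled by reducing $r$ modulo $m$.

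Next I would check that the resulting formula is legitimate in $\pfo[\modpred]$. Each conjunct is a unary atom applied to one variable, so no binary primitive is introduced. The two-variable budget of $\fotwo$ is respected, since only the variables $\idxi$ and $\idxj$ already present appear. Finally, wherever the binary predicate would be used (for instance under a past quantifier $\exists\idxj<\idxi$), it can be replaced in place by the disjunction. This replacement is purely syntactic and does not change the quantifier structure.

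There is no real obstacle here; the only delicate point is the fragment check just described, namely that the substitution stays within the two-variable past fragment. For the application to $\rope$, the realized block at relative offset depends only on $(\idxi-\idxj)\bmod M$. The case analysis over residues therefore reduces each pairwise score to a lookup indexed by the pair of unary residues of $\idxi$ and $\idxj$.
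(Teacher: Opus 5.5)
Your proposal is correct and matches the paper's proof: both write $\modpred^r_m(\idxi,\idxj)$ as the $m$-term disjunction $\bigvee_{a=0}^{m-1}\left(\modpred^a_m(\idxi)\land\modpred^{(a+r)\bmod m}_m(\idxj)\right)$ and verify it by reducing $\idxi$ and $\idxj$ to their residues modulo $m$. Your extra check that the substitution stays inside the two-variable past fragment is a harmless addition that the paper leaves implicit.
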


\subsection{From $\ptl[\modpred]$ to $\smat[\ropeperiodic]$}
For the reverse inclusion, $\finiteset$-transformers with $\ropeperiodic$ can first implement Boolean connectives and $\past$ in unrotated dimensions via structural induction \citep{li2025characterizingexpressivityfixedprecisiontransformer}. 
For each modular atom $\modpred_m^r\left(\idxi\right)$, a period-$m$ rotary pair uses the lookup table $T_m\left(s\right)=\rotationmat{2\pi/m}{s}$ for $0\leq s<m$ and returns $T_m\left(\idxi\bmod m\right)$. 
An attention head compares the returned phase with the fixed $\bos$ phase. A constant non-$\bos$ baseline separates the matching residue from every nonmatching residue, and stable softmax turns the comparison into an exact Boolean value. 
Thus $\ptl[\modpred]\subseteq\smat[\ropeperiodic]$.

\begin{restatable}{lemma}{ropeplowerbound}
    For every $\ptl[\modpred]$ formula, there is an $\finiteset$-transformer with $\ropeperiodic$ that simulates the formula. 
    Hence $\ptl[\modpred]\subseteq\smat[\ropeperiodic]$.
\end{restatable}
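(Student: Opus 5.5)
We proceed by structural induction on the $\ptl[\modpred]$ formula $\tlf$. The induction maintains the invariant that for every subformula $\tlf'$ there is a dedicated coordinate of the residual stream with $\vx_\idxi[\tlf'] = 1$ if $\str,\idxi\models\tlf'$ and $0$ otherwise, at every position $\idxi$ (including the $\eos$ readout position $\strlen+1$). Atomic predicates $\atom_\syma$ are handled by the embedding layer. Boolean connectives are handled by feedforward layers. The operator $\past$ is handled by an attention head whose query and key projections live entirely in unrotated subspaces ($\theta_\idxd=0$), where the realized rotation is the identity. For all of these cases we invoke the $\nope$ construction of \citet{li2025characterizingexpressivityfixedprecisiontransformer} verbatim: in those dimensions the transformer behaves exactly like a $\smat$ transformer. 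We also check that adding further rotary pairs in other coordinates does not perturb these heads, since their $\mQ,\mK$ have zero rows and columns there. The realized schedule we build is component-periodic, as all rotated blocks are defined by lookup tables, so the resulting model lies in $\smat[\ropeperiodic]$.

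The new case is a modular atom $\modpred_m^r(\idxi)$. We dedicate one rotary pair with the table $T_m(s)=\operatorname{round}_\finiteset(\rotationmat{2\pi/m}{s})$, so its realized block has period $m$ by construction.

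Attention head. We build a head whose key at $\bos$ (position $0$) is $\vk_0 = T_m(0)\,c\,e_1 = c\,e_1$, with $c$ a large constant. The query at every position has the fixed pre-rotation vector $\mR^{-1}$-independent form $c\,T_m(-r)e_1$ placed in the rotated pair. The score against $\bos$ is then $c^2\cos(2\pi(\idxi-r)/m)$ up to rounding, which is $c^2$ exactly when $\idxi\equiv r$ and at most $c^2\cos(2\pi/m)$ otherwise.

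Non-$\bos$ baseline. For keys at non-$\bos$ positions we use an unrotated coordinate, driven by a bias and a $\bos$-indicator, to produce a constant score $B$ that lies strictly between these two values. Concretely, the non-$\bos$ keys are zero in the rotated pair, and their unrotated score is $B$. Choosing $c$ large makes both gaps $c^2-B$ and $B-c^2\cos(2\pi/m)$ at least $\floatlarge$. The stable softmax then places all mass on $\bos$ in the matching case, and zero mass on $\bos$ (uniform over the others) in the nonmatching case.

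Readout. The value vector is $1$ at $\bos$ and $0$ elsewhere, so the head outputs exactly $\mathbb{1}[\idxi\equiv r \pmod m]$. A feedforward layer writes this into the atom's coordinate. Attention is strict ($\idxk<\idxi$), so $\bos$ is always attendable from any position $\idxi\ge 1$.

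The main obstacle is making this comparison exact under finite precision. Rounding of the table entries and of the products must not close the gap between the matching and nonmatching scores, and the score against $\bos$ must not overflow. To handle this, we will argue that $\finiteset$ is assumed rich enough to represent the $m$ table entries distinctly, and that a constant $c$ exists (depending only on $m$ and $\finiteset$, not on the length) for which the rounded cosines still separate around $B$ by $\floatlarge$. If needed, we can instead hard-code the realized table values and place $B$ between the largest nonmatching rounded score and the matching one, scaling by a constant factor.

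Finally, we combine all subformula layers into one uniform model, and the output layer $\outputlayer$ reads the coordinate for $\tlf$ at $\eos$. This gives $\ptl[\modpred]\subseteq\smat[\ropeperiodic]$.
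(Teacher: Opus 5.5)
Your modular-atom head is essentially the paper's construction: a period-$m$ table-driven rotary pair, a constant query pre-rotated by $-r$, a $\bos$ key whose phase never moves, a constant non-$\bos$ baseline in an unrotated pair placed strictly between the matching and nonmatching scores, and a tailored $\finiteset$. However, it fails at the boundary position $\idxi=1$.

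Under strict causal attention, the only key visible from position $1$ is $\bos$, so no non-$\bos$ position supplies the baseline score $B$. Stable softmax must then put weight $1$ on $\bos$, and your head outputs $1$ at position $1$ even when $1\not\equiv r\pmod m$. Your remark that $\bos$ is always attendable from $\idxi\geq1$ is exactly the problem: in the nonmatching case some other key must win, and at $\idxi=1$ none exists.

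This breaks the simulation itself. Position $1$ is the readout position of the empty string, so the head would make $\modpred_2^0$, which defines the odd-length strings, accept $\eps$. Position $1$ is also quantified over by $\past$, so $\past(\modpred_2^0\land\atom_\syma)$ would be reported true on the string $\syma$, where it is false.

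The paper avoids this by isolating $\idxi=1$, where $\modpred_m^r(1)$ is a compile-time constant. It flags the first position with uniform attention over the strict left context, checking that exactly one position is attended; equivalently, you could simulate $\neg\past\true$ in the unrotated dimensions. A feedforward layer then outputs the stored constant at that position and the head's output elsewhere.

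The paper also treats $m=1$ separately as the constant $\true$. For $m=1$, your requirement that $B$ lie strictly between $c^2\cos(2\pi/m)=c^2$ and $c^2$ cannot be satisfied, although only the upper gap is actually needed there.

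With this case split added, the rest of your argument goes through and matches the paper's proof, including the $\ptl$ induction in unrotated dimensions and the tailored $\finiteset$.
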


\paragraph{Connection to attention sinks.}
The use of $\bos$ as a fixed anchor connects the tightness construction to \defn{attention sinks}, where language models allocate disproportionate attention to initial tokens \citep{xiao2024efficientstreaminglanguagemodels}. 
One account of why sinks are useful is that they prevent contextual representations from \emph{over-mixing} and keep them sufficiently distinct \citep{barbero2025llmsattendtoken}.
Our construction offers a complementary reason: The anchor is what makes unary modular predicates computable, because the query's phase must be compared against a key whose own phase does not move.
This suggests that sinks in $\rope$ transformers serve in part to expose positional phase, and we note that Gemma 7B, which uses $\rope$, does concentrate attention on $\bos$ \citep{barbero2025roundroundgomakes}.

\subsection{Periodic Sinusoidal Positional Encodings}\label{sec:smatsin}
We find \defn{absolute sinusoidal} PEs \citep{vaswani2023attentionneed} to be equivalent to \ropeperiodic. 
As for $\ropeperiodic$, we denote by $\sinperiodic$ the class of
\defn{periodic} sinusoidal encodings such that the realized sinusoidal position-to-embedding map has a finite integer period as in \cref{def:component-periodic-rope}.
We similarly define $\sinnonperiodic$.

\begin{restatable}{theorem}{smatsinpf}\label{thm:smatsin}
    $\smat[\sinperiodic] = \ptl[\modpred] = \pfo[\modpred]$.
\end{restatable}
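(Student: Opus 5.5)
We prove the two inclusions separately, reusing the structure of \cref{thm:SmatEquivPfo}. The second equality, $\ptl[\modpred]=\pfo[\modpred]$, is already available (\cref{thm:main1}). So it suffices to show $\smat[\sinperiodic]\subseteq\pfo[\modpred]$ and $\ptl[\modpred]\subseteq\smat[\sinperiodic]$.

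\emph{Upper bound.} Let $M$ be the integer period of the realized sinusoidal map $\idxi\mapsto\widehat{\vb}_\idxi\in\finiteset^\hiddDim$, where $\widehat{\vb}_\idxi$ is the positional vector added to the symbol embedding. The image of this map is finite, being the set $\set{\widehat{\vb}_0,\ldots,\widehat{\vb}_{M-1}}$. For each coordinate and each realized value, the set of positions attaining that value is a finite union of residue classes mod $M$, so it is defined by a disjunction of predicates $\modpred_M^r(\idxi)$. This is the argument of \cref{prop:periodicinputs}, transported verbatim from rotation blocks to embedding coordinates. The initial representation at position $\idxi$ is therefore a function of $(\sym_\idxi,\idxi\bmod M)$, and each of its coordinate values is definable in unary $\pfo[\modpred]$ by a Boolean combination of atoms $\atom_\syma$ and $\modpred_M^r$. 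All subsequent layers are $\nope$ layers: attention, FFN and layer normalization act on the representations with no further positional input. The fixed-precision simulation of \citet{li2025characterizingexpressivityfixedprecisiontransformer} therefore applies with these enriched unary atoms in place of the plain symbol atoms. This is exactly how \cref{lem:upperperiodic} was obtained, and here it is simpler because no pairwise rotation enters the scores.

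\emph{Lower bound.} We proceed by structural induction on $\ptl[\modpred]$ formulas. Boolean connectives and $\past$ are handled by the $\nope$ construction of \citet{li2025characterizingexpressivityfixedprecisiontransformer}. We reserve dimensions where the positional encoding is zero, so those computations are unaffected. For the modular atoms, take the finitely many moduli $m$ occurring in the formula. For each $m$, allot one pair of coordinates to the sinusoid of angle $2\pi/m$, realized by the lookup table $T_m(\idxi\bmod m)$ as in \cref{subsec:componentperiodic}, so that the realized map is periodic by construction. Because the embedding of position $\idxi$ already carries $\idxi\bmod m$ in finitely many distinct finite-precision vectors, a single feedforward layer suffices to decode $\modpred_m^r(\idxi)$. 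An FFN on a finite input set can realize any Boolean function of those inputs, since the finitely many points are separable. So each atom becomes a Boolean coordinate without any attention, and, unlike the $\rope$ case, no $\bos$ anchor is needed.

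The main obstacle is making the decoding rigorous under rounding. The realized vectors $T_m(r)$ for distinct residues $r$ must remain distinct after rounding to $\finiteset$, and the FFN must separate them with a finite-precision margin. I would first choose $\finiteset$ fine enough, relative to the fixed finite set of moduli, that the $m$ rounded points stay distinct. Then I would build the indicator of each point from ReLU threshold units, with thresholds chosen at half the minimum pairwise gap. A secondary concern is that the added PE must not disturb the symbol coordinates used by the $\nope$ construction; orthogonal coordinate blocks handle this.
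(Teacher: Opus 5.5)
Your proposal is correct and follows essentially the same route as the paper. For the upper bound, both use \cref{prop:periodicinputs} to express the finite periodic PE image with unary modular predicates and then apply the fixed-precision $\pfo$ simulation of the remaining $\nope$ components; for the lower bound, both combine the $\nope$ construction for $\ptl$ with feedforward decoding of a lookup-table sinusoid for each modular atom. The only difference is that you carry out the FFN decoding and rounding argument explicitly, whereas the paper simply cites \citet{yang2024maskedhardattentiontransformersrecognize} for it.
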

The proof (\cref{app:proofsmatsin}) again expresses the finite periodic image with modular predicates and uses custom periodic features to recover each modular atom. 
This shows that the key factor for the corresponding all-length logical class is the \emph{periodicity} of the encoding;
both the absolute and relative encodings are unified under modular predicates.

\subsection{Characterizing $\ptl[\modpred]$}\label{subsec:ptl-mod}
We have established that $\smat[\ropeperiodic]$ and $\smat[\sinperiodic]$ can both be exactly characterized by $\ptl[\modpred]$.
To better grasp the expressive power of \ropeperiodic{} and \sinperiodic{}, we provide additional characterizations of $\ptl[\modpred]$.
Building on \citet{dartois_et_al:LIPIcs.STACS.2013.329, li2025characterizingexpressivityfixedprecisiontransformer}, we prove the following equivalences\footnote{The relevant formal language theory tools are properly introduced in \cref{app:flt}.} in \cref{app:pfo2mod}.
\begin{restatable}{theorem}{equivalence}\label{thm:main1}
    Let $\lang\subseteq\kleene{\alphabet}$ be a regular language, $\monoid$ be its syntactic monoid, and $\automaton$ be the minimal DFA accepting it. The following assertions are equivalent:
    \begin{enumerate}[label=(\roman*),topsep=0pt,noitemsep]
        \item $\lang$ is a left-deterministic modular polynomial
        \item $\monoid \in \QRMonoid$
        \item $\exists k \in \N$ such that $\automaton_k$ is partially ordered
        \item $\lang = \lang(\phi)$ for some $\pfo[\modpred]$ formula $\phi$
        \item $\lang = \lang(\tlf)$ for some $\ptl[\modpred]$ formula $\tlf$
    \end{enumerate}
\end{restatable}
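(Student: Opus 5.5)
We prove the five-way equivalence by reducing it to the known modular-free characterization of $\ptl=\pfo$ due to \citet{li2025characterizingexpressivityfixedprecisiontransformer}. That result states that (i) left-deterministic polynomials, (ii) monoids in $\RMonoid$, (iii) partially ordered DFAs, (iv) $\pfo$, and (v) $\ptl$ coincide. We then lift it to the modular setting through the standard ``wreath with a cyclic counter'' technique of \citet{dartois_et_al:LIPIcs.STACS.2013.329}.

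The central device is the $k$-expanded automaton $\automaton_k$. Its states are pairs $(q, r)$, where $q$ is a state of $\automaton$ and $r\in\Z/k\Z$ is the current position modulo $k$. Equivalently, $\automaton_k$ is $\automaton$ read over the alphabet $\alphabet\times\Z/k\Z$, in which each symbol is tagged with its residue. A language $\lang$ is definable with modular predicates of modulus dividing $k$ if and only if its tagged version $\lang_k\subseteq\kleene{(\alphabet\times\Z/k\Z)}$ is definable without them. The translation replaces $\modpred_k^r(\idxi)$ by the disjunction over $\syma$ of the atoms $\atom_{(\syma,r)}$, and, in the other direction, interprets each tagged atom as $\atom_\syma\land\modpred_k^r$. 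Since the lcm of finitely many moduli is again a modulus, one $k$ serves an entire formula.

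The cycle of implications is as follows.
\begin{itemize}
\item (iv)$\Leftrightarrow$(v): we repeat the $\pfo$-to-$\ptl$ translation of \citet{li2025characterizingexpressivityfixedprecisiontransformer} with $\modpred$ atoms treated as extra unary letters. Unary predicates do not affect the two-variable/past-operator translation.
\item (v)$\Rightarrow$(iii): the tagging above turns a $\ptl[\modpred]$ formula into a $\ptl$ formula over the tagged alphabet. The modular-free theorem then gives a partially ordered DFA for $\lang_k$. The minimal DFA of $\lang_k$ restricted to well-tagged strings is a quotient of $\automaton_k$, and since $k$ is a multiple of the period, we argue it can be taken to be $\automaton_k$ itself up to minimization.
\item (iii)$\Rightarrow$(v): we apply the same reasoning in reverse, producing a $\ptl$ formula over tagged letters and untagging it.
\item (i)$\Leftrightarrow$(iii): we unfold the definitions. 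A left-deterministic modular polynomial is a left-deterministic polynomial whose letters carry residue constraints, which is exactly a left-deterministic polynomial over the tagged alphabet.
\item (ii)$\Leftrightarrow$(iii): $\QRMonoid$ is defined (\cref{app:flt}) as the class of monoids whose $k$-expansion, equivalently whose stable semigroup or the $\LMonoid\!\ast$-type construction, lies in $\RMonoid$. We invoke the correspondence between $\RMonoid$ and partially ordered DFAs, applied to the transition monoid of $\automaton_k$. Here we use that the syntactic monoid of $\lang_k$ divides the wreath-type product of $\monoid$ with $\Z/k\Z$.
\end{itemize}

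The main obstacle is making the tagging step respect minimality and the exact definition of $\QRMonoid$. The minimal DFA for $\lang_k$ need not literally equal $\automaton_k$: residues may be redundant, and badly tagged strings introduce a sink. To handle this, we would prove a lemma that partial orderedness of a DFA is preserved under taking quotients and under adding a sink state, and that it is reflected by the reachable part of $\automaton_k$. We would also match the algebraic side via the standard fact, in the style of Straubing and Dartois et al., that $\monoid\in\QRMonoid$ iff the stable monoid associated with $\monoid$ lies in $\RMonoid$, where stability is reached at some index $k$. Choosing $k$ as a multiple of the stability index of $\monoid$ makes both directions line up.
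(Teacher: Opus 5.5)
There is a genuine gap, and it sits in the tagging step that your whole cycle runs through. You claim that $\lang$ is definable with modular predicates of modulus $\idxk$ if and only if its tagged version $\lang_\idxk$ is definable without them. This is false. Take $\lang=\kleene{\alphabet}$ and $\idxk=2$: then $\lang_\idxk$ is exactly the set $\WFSet$ of well-formed words. Its minimal DFA has two residue states that are swapped by every correctly tagged letter, plus a sink. So it is not partially ordered, and hence not $\pfo$-definable over $\alphabet\times\Z/2\Z$. What is true, and what the paper uses in \cref{prop:pfo2enriched}, holds only modulo well-formedness: $\lang=\projfunc(\lang'\cap\WFSet)$ for \emph{some} $\pfo$-definable $\lang'$. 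The partially ordered DFA you obtain is therefore one for $\lang'$, not for $\lang_\idxk$, and it is not reachable from $\automaton$ by quotienting or minimizing.

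The same confusion breaks your reading of (iii). The automaton you call $\automaton_\idxk$ (pairs $(q,r)$, or $\automaton$ run on tagged letters) is not the $\idxk$-automaton of the statement, which keeps the state set $\states$ and reads whole blocks from $\alphabet^\idxk$. For $\idxk\geq2$, your product always contains a cycle whose length is a multiple of $\idxk$, hence a cycle through distinct states, so it is never partially ordered. Under your reading, (iii) therefore collapses to $\idxk=1$, i.e.\ to $\pfo$ without $\modpred$. It then fails for $\kleene{(\syma\syma)}$ over $\{\syma\}$, whose block automaton $\automaton_2$ consists of two self-loops. As the $\kleene{\alphabet}$ example shows, the residue cycle survives minimization and adding a sink, so (v)$\Rightarrow$(iii) has nothing to stand on.

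The converse has a second, independent gap. Even with the correct block automaton, a partially ordered $\automaton_\idxk$ only yields a $\ptl$ formula over block letters. Untagging a block letter means testing $\idxk$ consecutive symbols, which $\ptl[\modpred]$ cannot do in general, since it lacks $\yesterday$ (cf.\ \cref{prop:LTNotPFOMOD}). Trying instead to build a partially ordered tagged automaton that buffers the current partial block reintroduces a cycle of length $\idxk$ through the buffer states.

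This is exactly the nontrivial content the paper does not attempt by hand. It obtains (ii)$\Leftrightarrow$(iv) (\cref{lem:PfoEquivQR}) from the theorem of \citet{dartois2015addingmodularpredicatesfirstorder}: adding modular predicates to a logic characterized by a \emph{local} variety $\mathbf{V}$ yields $\mathbf{QV}$. It combines this with the locality of $\RMonoid$ \citep{Almeida_2025}. The remaining steps are:
\begin{itemize}
\item (ii)$\Leftrightarrow$(iii) from its definition of $\QRMonoid$ through $\automaton_\idxk$, together with \citet{BRZOZOWSKI198032};
\item (i)$\Leftrightarrow$(iv) by projecting left-deterministic monomials over the enriched alphabet intersected with $\WFSet$;
\item (iv)$\Leftrightarrow$(v) with a step you omit: expressing the 0-ary length predicates $\modpred_m^r$ of $\pfo[\modpred]$ by the unary $\modpred_m^{r+1}$ evaluated at $\strlen+1$.
\end{itemize}

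Parts of your plan can be salvaged. Your stable-monoid remark is the right idea for (ii)$\Leftrightarrow$(iii) once $\automaton_\idxk$ is the block automaton. For $\idxk$ a multiple of the stability index, its transition monoid is the stable monoid, and any DFA is partially ordered iff its transition monoid is $\mathcal{R}$-trivial. Likewise, (iv)$\Rightarrow$(ii) can be done elementarily. Take $\idxk$ a common multiple of the modulus and the stability index. Then factor the syntactic morphism restricted to $\kleene{(\alphabet^\idxk)}$ through a product of $\idxk$ copies of the syntactic monoid of $\lang'$, one per starting residue. The direction from $\QRMonoid$ back to formulas, however, genuinely needs the locality argument.
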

The characterization provides novel tools for (in)expressibility statements about $\ptl[\modpred]$.
We can first show via $\pfo[\modpred]$ that $\ptl[\modpred]$ can leverage modular predicates to recognize languages with a \emph{periodic behavior} where a substring is repeated indefinitely.
\begin{proposition}\label{prop:periodic}
    For every nonempty word $\sym_1\cdots\sym_m$, the language $\kleene{\left(\sym_1\cdots\sym_m\right)}$ belongs to $\pfo[\modpred]$.
\end{proposition}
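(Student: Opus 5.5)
We write a single $\pfo[\modpred]$ sentence directly. A string $\str=\sym'_1\cdots\sym'_\strlen$ lies in $\kleene{(\sym_1\cdots\sym_m)}$ exactly when two conditions hold. First, $\strlen\equiv 0\pmod m$. Second, for every position $\idxi\le\strlen$ with $\idxi\equiv r\pmod m$ (taking $r\in\{1,\dots,m\}$, with $r=m$ meaning residue $0$), we have $\sym'_\idxi=\sym_r$. Both conditions use only unary modular predicates and atomic symbol predicates, so no binary order constraint beyond ``strictly before the readout position'' is needed. This fits the past fragment.

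Concretely, I would evaluate at the readout position $\strlen+1$ and take
\begin{equation}
\phi(x)\;\defeq\;\modpred_m^{1}(x)\;\land\;\neg\exists y<x:\bigvee_{r=1}^{m}\bigl(\modpred_m^{r\bmod m}(y)\land\neg\atom_{\sym_r}(y)\bigr).
\end{equation}
Here $x$ denotes the readout position $\strlen+1$, so $\modpred_m^1(x)$ says $\strlen\equiv0\pmod m$. The inner quantifier ranges over all positions $y<x$, which are exactly the string positions. Including $\eos$ at the readout position or $\bos$ at position $0$ would require excluding them, and $y\geq1$ can be enforced with an atomic $\bos$ test if the formalism indexes $\bos$. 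Using the two-variable past discipline of \cref{app:first-order-logic}, the formula needs only variables $x,y$ with $y<x$, so it is a $\pfo[\modpred]$ formula. Equivalently, in $\ptl[\modpred]$ one can write $\modpred_m^1\land\neg\past\bigl(\bigvee_r(\modpred_m^{r\bmod m}\land\neg\atom_{\sym_r})\bigr)$ and invoke \cref{thm:main1}(iv)$\Leftrightarrow$(v).

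Correctness follows from two directions. If $\str=(\sym_1\cdots\sym_m)^k$, then position $\idxi$ carries $\sym_{((\idxi-1)\bmod m)+1}$, so no violating $y$ exists, and $\strlen=km$ gives the residue condition. Conversely, the residue constraints pin every symbol, and the length condition forces an integer number of blocks. The empty string is handled uniformly, since $x=1$ and there are no positions.

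The only delicate point is matching the paper's indexing conventions: whether the readout sits at $\strlen+1$ and whether $\bos$ occupies position $0$ in the logic. I would check this against \cref{app:first-order-logic} and shift the residue $1$ in $\modpred_m^1(x)$ accordingly. An alternative is to phrase the length condition as $\exists y<x$ with $y$ the last position satisfying $\modpred_m^0$. The simplest route is to use $x$ directly.
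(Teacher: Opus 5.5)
Your proposal is correct and is essentially the paper's proof. The paper writes $\modpred_m^0\land\forall\idxi\,\bigwedge_{\idxj}\bigl(\modpred_m^{\idxj}(\idxi)\Rightarrow\atom_{\sym_\idxj}(\idxi)\bigr)$, which is your formula up to De Morgan; the one difference is that it states the length condition with the 0-ary predicate $\modpred_m^0$, which $\pfo[\modpred]$ uses precisely because first-order variables cannot denote the readout position $\strlen+1$, so your free variable $x$ should be replaced by that predicate or justified, as you do, via the $\ptl[\modpred]$ formula and \cref{thm:main1} (and your convention that residue $0$ carries $\sym_m$ is more careful than the paper's $\atom_{\sym_0}$).
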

\begin{proof}
    It is defined by $\modpred_m^0\land\forall\idxi\,\bigwedge_{0\leq\idxj< m}\left(\modpred_m^{\idxj}\left(\idxi\right)\Rightarrow\atom_{\sym_\idxj}\left(\idxi\right)\right)$, which is in $\pfo[\modpred]$.
\end{proof}
Importantly, \cref{prop:periodic} implies that languages such as $\kleene{(\syma \symb)}$ and $\kleene{(\syma \syma)}$ belong to $\pfo[\modpred]$, leading to the following conclusions.

\paragraph{$\ptl[\modpred]$ and $\stl$ are incomparable.}
In other words, neither class contains the other.
In one direction, $\kleene{(\syma\syma)}$ is not star-free \citep{yang2024maskedhardattentiontransformersrecognize} and so lies outside $\stl$, yet \cref{prop:periodic} places it in $\ptl[\modpred]$.
In the other direction, $\stl$ can recognize the \defn{locally testable} languages \citep{ZALCSTEIN1972151}---those that require checking that a finite sequence of symbols occur at \emph{consecutive} positions.
Written in \ltlAcr{}, this needs the $\yesterday$ operator: $\kleene{\alphabet}\syma\symb\kleene{\alphabet}$ is defined by $\past\left(\atom_\symb\land\yesterday\atom_\syma\right)$.
The only temporal operator of $\ptl[\modpred]$ is $\past$, which cannot step to an adjacent position, and we show in \cref{app:inexpressibility} that $\ptl[\modpred]$ contains no such language.

\paragraph{$\ptl[\modpred]$ cannot perform modular counting.}
Modular predicates count \emph{positions}, not \emph{symbols}.
Adding them to $\ptl$ therefore does not let it decide whether the number of occurrences of a symbol is divisible by an integer.
The canonical example is \textsc{Parity}, the set of bitstrings with an even number of $1$s.
We prove in \cref{app:inexpressibility} that \textsc{Parity} is not $\ptl[\modpred]$.
\subsection{Testing the Modular Account}\label{sec:experiments-periodic}\label{sec:experiments-rational}\label{sec:experiments}
We train transformer classifiers on the languages in \cref{tab:languages}, which also records the account each language probes. Training strings have length at most $40$; test strings have lengths $41$--$500$. Every configuration is run over 5 seeds and 3 learning rates, and every conventional-schedule configuration is fully rotated. Besides accuracy, we report the longest perfect length $N^*$: the largest $N$ for which accuracy is $100\%$ at every tested length through $N$. \Cref{app:experiments} gives the architecture and optimization settings and reports per-run means and standard deviations.
\Cref{tab:exp-periodic} compares $\nope$ with the period-targeting variants, while \cref{fig:exp-modular-base-sweep} reports the conventional non-periodic controls. 
In both variants, $\rope$ is applied on all dimensions.

\begin{table}[t]
\centering
\small
\begin{tabular}{@{}llll@{}}
\toprule
Language & Defining property & Class & Account \\
\midrule
$\kleene{(\syma\syma)}$ & even length & $\ptl[\modpred]$; not star-free & modular \\
$\kleene{(\syma\symb)}$ & $\syma\symb$ repeated & $\ptl[\modpred]$; not $\ptl$ & modular \\
$\syma\kleene{\alphabet}$ & starts with $\syma$ & $\ptl$ & locality \\
$\kleene{\alphabet}\syma$ & ends with $\syma$ & $\ltlAcr[\yesterday]$; not $\ptl$ & locality \\
$\kleene{\alphabet}\syma\symb\kleene{\alphabet}$ & contains $\syma\symb$ & $\ltlAcr[\past,\yesterday]$; locally testable & locality \\
\bottomrule
\end{tabular}
\caption{The formal languages we train on, each over an alphabet $\alphabet$ with $|\alphabet|>1$. The last column records which account the language probes: the modular account of \cref{sec:smatrope} or the locality account of \cref{sec:lengthbound}. \Cref{app:experiment-languages} gives the full definitions.}
\label{tab:languages}
\end{table}

\begin{table}[H]
\centering
\begin{tabular}{ll cc cc cc}
\toprule
& & \multicolumn{2}{c}{$\nope$} & \multicolumn{2}{c}{$\ropeperiodic$} & \multicolumn{2}{c}{$\sinperiodic$} \\
\cmidrule(lr){3-4} \cmidrule(lr){5-6} \cmidrule(lr){7-8}
Language & $\in$ Class & Acc. & $N^*$ & Acc. & $N^*$ & Acc. & $N^*$ \\
\midrule
$\kleene{(\syma \syma)}$          & $\ptl[\modpred]$ & 0.50 & 41 & \textbf{1.00} & \textbf{500} & 0.50 & 41 \\
$\kleene{(\syma \symb)}$          & $\ptl[\modpred]$ & 0.96 & 96 & \textbf{1.00} & \textbf{500} & 0.50 & 44 \\
\bottomrule
\end{tabular}
\caption{Periodic-construction experiments. Maximum accuracy and longest perfect length over the run grid achieved by different transformer variants.}
\label{tab:exp-periodic}
\label{tab:exp-pfo2mod}
\end{table}

\begin{figure}[H]
\centering
\resizebox{\textwidth}{!}{%
\begin{tikzpicture}
\begin{groupplot}[
    group style={group size=2 by 1, horizontal sep=1.2cm},
    width=0.38\textwidth,
    height=0.30\textwidth,
    xmin=-13,
    xmax=13,
    ymin=-15,
    ymax=520,
    y coord trafo/.code={\pgfmathparse{#1<=100 ? #1 : 100+(#1-100)/10}},
    y coord inv trafo/.code={\pgfmathparse{#1<=100 ? #1 : 100+(#1-100)*10}},
    ytick={0,25,50,75,100,500},
    yticklabels={0,25,50,75,100,500},
    extra y ticks={300},
    extra y tick labels={$\sslash$},
    extra y tick style={grid=none,tick style={draw=none},tick label style={font=\scriptsize}},
    xtick={-12,-8,-4,0,4,8,12},
    xticklabels={$10^{-12}$,$10^{-8}$,$10^{-4}$,$1$,$10^4$,$10^8$,$10^{12}$},
    xticklabel style={font=\tiny,rotate=45,anchor=east},
    xlabel={Base},
    tick label style={font=\scriptsize},
    label style={font=\small},
    title style={font=\small},
    grid=major,
    grid style={gray!20},
    axis line style={gray!60},
    error bars/error bar style={line width=0.5pt},
    error bars/error mark options={rotate=90,mark size=1.5pt},
]
\nextgroupplot[title={$\kleene{(\syma\syma)}$},ylabel={Mean $N^*$},
    legend to name=modularerrlegend,
    legend columns=1,
    legend cell align=left,
    legend style={draw=none,font=\scriptsize,row sep=2pt}]
\addplot[ETHBlue,thick,mark=*,mark size=1.5pt,error bars/.cd,y dir=both,y explicit]
    coordinates {(-12,41.3) +- (0,11.6) (-8,50.1) +- (0,14.9) (-4,49.5) +- (0,3.1)
                 (0,34.3) +- (0,17.2) (4,26.3) +- (0,21.5) (8,22.7) +- (0,21.3) (12,23.3) +- (0,21.9)};
\addlegendentry{$\ropenonperiodic$}
\addplot[ETHRed,thick,dashed,mark=square*,mark size=1.5pt,error bars/.cd,y dir=both,y explicit]
    coordinates {(-12,16.4) +- (0,20.1) (-8,71.3) +- (0,11.9) (-4,75.7) +- (0,6.3)
                 (0,8.5) +- (0,16.9) (4,2.7) +- (0,10.2) (8,0.0) +- (0,0.0) (12,0.0) +- (0,0.0)};
\addlegendentry{$\sinnonperiodic$}
\addplot[black!80,very thick,densely dashed,mark=none] coordinates {(-12,41) (12,41)};
\addlegendentry{$\nope$ (best run)}
\addplot[ETHGreen!70!black,very thick,loosely dashed,mark=none] coordinates {(-12,500) (12,500)};
\addlegendentry{$\ropeperiodic$ (best run)}

\nextgroupplot[title={$\kleene{(\syma\symb)}$}]
\addplot[ETHBlue,thick,mark=*,mark size=1.5pt,error bars/.cd,y dir=both,y explicit]
    coordinates {(-12,65.1) +- (0,13.0) (-8,48.9) +- (0,5.2) (-4,52.8) +- (0,5.6)
                 (0,48.1) +- (0,3.1) (4,66.0) +- (0,15.7) (8,57.1) +- (0,14.7) (12,50.1) +- (0,6.5)};
\addplot[ETHRed,thick,dashed,mark=square*,mark size=1.5pt,error bars/.cd,y dir=both,y explicit]
    coordinates {(-12,16.8) +- (0,20.6) (-8,41.9) +- (0,16.7) (-4,52.9) +- (0,7.6)
                 (0,23.1) +- (0,21.6) (4,17.1) +- (0,20.9) (8,26.3) +- (0,21.6) (12,24.1) +- (0,22.8)};
\addplot[black!80,very thick,densely dashed,mark=none] coordinates {(-12,96) (12,96)};
\addplot[ETHGreen!70!black,very thick,loosely dashed,mark=none] coordinates {(-12,500) (12,500)};
\end{groupplot}
\node[anchor=west] at ($(group c2r1.east)+(0.6cm,0)$) {\ref{modularerrlegend}};
\end{tikzpicture}
}
\caption{Mean longest perfect length $N^*$ for conventional non-periodic encodings on the modular languages. Markers are means over the run grid and whiskers are $\pm1$ standard deviation. The two reference lines are best-run values for $\nope$ and $\ropeperiodic$ on these tasks. Neither non-periodic encoding generalizes perfectly at any tested base.}
\label{fig:exp-modular-base-sweep}
\end{figure}

\paragraph{$\ropeperiodic$ generalizes perfectly on $\ptl[\modpred]$.}
The period-targeting $\ropeperiodic$ model achieves accuracy $1.00$ and $N^*=500$ on both $\kleene{(\syma\symb)}$ and $\kleene{(\syma\syma)}$, matching the modular-predicate construction.

\paragraph{$\sinperiodic$ struggles on $\ptl[\modpred]$.}
Although $\sinperiodic$ has the same existential expressivity, the trained model does not learn either modular language. The theorem guarantees a parameterization, not that optimization will find it; one possible explanation is that absolute sinusoidal features in the residual stream are more easily overwritten.

\paragraph{Conventional rotations do not learn the constructed modular behavior.}
Neither $\ropenonperiodic$ nor $\sinnonperiodic$ generalizes perfectly on the two periodic languages. The full sweep in \cref{fig:exp-modular-base-sweep} reaches the same conclusion at every tested base across twenty-four orders of magnitude, separating the engineered periodic behavior from conventional frequencies.

\section{A Practical Account of Non-Periodic RoPE}\label{sec:lengthbound}
We now analyze $\rope$'s role implementing fixed-offset heads \citep{barbero2025roundroundgomakes}---referencing positions at a specific displacement from the query position.
This account applies naturally to \emph{conventional} $\ropenonperiodic$, but does not capture its \emph{asymptotic} behavior.
Rather, it relies on a finite-precision range over which $\rope$ can create a reliable fixed-offset head, partly explaining its empirical advantage over $\sinnonperiodic$ \citep{su2023roformerenhancedtransformerrotary}.
The proof for the bounded non-periodic $\rope{}$ simulations is in \cref{app:nonperiodic-proofs}.

\subsection{Conventional RoPE Is Non-Periodic}\label{sec:fixed-precision-nonperiodicity}
We first pinpoint why conventional $\rope{}$ is non-periodic.
Namely, \emph{irrational} rotations, those that conventional $\rope$ leverages, imply non-periodicity.
\begin{restatable}[Specialization of \citealt{walters1982introduction}, Thm 1.8]{proposition}{roundedropenonperiodic}\label{prop:rounded-rope-nonperiodic}
    If $g/\left(2\pi\right)\notin\Q$, then $\idxi\mapsto\operatorname{round}_\finiteset\left(\rotationmat{g}{\idxi}\right)$ is non-periodic. Hence, conventional $\rope$, which contains $g=1$, is non-periodic under output-rounded ideal-phase evaluation.
\end{restatable}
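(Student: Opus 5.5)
Suppose, for contradiction, that $\idxi\mapsto\operatorname{round}_\finiteset\left(\rotationmat{g}{\idxi}\right)$ has a period $m\geq1$. The plan is to contradict this with equidistribution (density) of irrational rotations on the circle, which is the content of \citet[Thm 1.8]{walters1982introduction}: if $g/(2\pi)\notin\Q$, then for every $m\geq1$ the angle $mg/(2\pi)$ is also irrational, so the orbit $\set{\idxi g \bmod 2\pi : \idxi\in\N}$ and the orbit of any arithmetic progression $\set{(r+\idxl m)g \bmod 2\pi : \idxl\in\N}$ are dense in $[0,2\pi)$.

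\textbf{Step 1: periodicity collapses to finitely many values.} If the realized map has period $m$, then its image is $\set{\operatorname{round}_\finiteset(\rotationmat{g}{r}) : 0\leq r<m}$, with at most $m$ values. On each residue class $r+m\N$ the realized matrix is constant, equal to $T(r)\defeq\operatorname{round}_\finiteset(\rotationmat{g}{r})$. This is the same finite-image observation used in \cref{prop:periodicinputs}.

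\textbf{Step 2: density within one residue class.} Fix $r=0$, so the class is $m\N$, on which the realized block is constant. The phases $\idxl m g \bmod 2\pi$ form an irrational rotation by $mg$, so by Walters' theorem they are dense in the circle. Hence the ideal entry $\cos(\idxl m g)$ takes values arbitrarily close to every point of $[-1,1]$ along this class.

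\textbf{Step 3: rounding cannot be constant on a dense set.} Rounding into $\finiteset$ is monotone and $\finiteset$ contains at least two distinct values in $[-1,1]$; at minimum $\operatorname{round}_\finiteset(1)\neq\operatorname{round}_\finiteset(-1)$, since otherwise rotations would be trivially unrepresentable. So there are $x<y$ in $[-1,1]$ whose roundings differ, and the preimage of each rounded value contains a nonempty open subinterval. By density, some $\idxl$ puts $\cos(\idxl m g)$ near $1$ and another puts it near $-1$. Their rounded entries then differ, contradicting constancy on $m\N$.

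\textbf{Step 4: apply to conventional \rope{}.} Under the conventional schedule, the first block has $\theta_1=\beta^{0}=1$, so $g=1$, and $1/(2\pi)\notin\Q$ because $\pi$ is irrational. A single non-periodic component makes the schedule non-periodic in the sense of \cref{def:component-periodic-rope}.

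The main obstacle is Step 3's dependence on the rounding convention. I have to state the mild assumption that $\operatorname{round}_\finiteset$ separates values near $1$ from values near $-1$. I would justify this by the requirement that $\finiteset$ represents the relevant sign and magnitude, or more simply by noting that $1\in\finiteset$ and $-1\in\finiteset$ round to themselves and nearest-rounding is monotone. Otherwise the argument is routine.
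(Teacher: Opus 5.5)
Your argument is correct and is essentially the paper's route. The paper gives no separate proof and simply invokes Walters' theorem on irrational circle rotations; your Steps 1--3 make explicit how density of the orbit along the residue class $m\N$ contradicts the constancy that a period $m$ would force. The separation assumption you add in Step 3 (e.g., $\pm1\in\finiteset$ with nearest rounding) is genuinely needed and is left implicit in the paper, since a regime that rounds all of $[-1,1]$ to a single value would make the realized map constant and hence periodic.
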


\subsection{Fixed-Offset Attention up to a Length Bound}
We now proceed to show that with irrational rotations, $\ropenonperiodic$ transformers can implement fixed-offset attention heads.  

Fix an offset $k\geq1$. With exact arithmetic, a frequency $g/\left(2\pi\right)\notin\Q$ yields a $k$-offset head whose score is
\begin{equation}\label{eq:ideal-previous-score}
    \mS^{g,C,k}_{\idxi,\idxj}=C\cos\left(g\left(\idxi-k-\idxj\right)\right),\qquad 0\leq\idxj<\idxi,
\end{equation}
for some $C>0$ \citep{barbero2025roundroundgomakes}. 
At positions $\idxi\geq k+1$, the target $\idxj=\idxi-k$ uniquely maximizes \Cref{eq:ideal-previous-score} with score $C$.
For the uniqueness to hold under finite precision, the target must remain ahead by at least $\floatlarge$ after rounding in the score computation.

We define the $k$-offset head's \defn{certified range} as the maximum length at which it scores its target position at least $\floatlarge$ more than any non-target position:
\begin{equation}\label{eq:yesterday-bound}
\boundyesterday[(k)]\left(g,C,\finiteset\right)\defeq
\sup\left\{L\in\N:\widehat{\mS}^{g,C,k,\finiteset}_{\idxi,\idxi-k}-\widehat{\mS}^{g,C,k,\finiteset}_{\idxi,\idxj}\geq\floatlarge
\ \substack{\text{for all }k+1\leq\idxi\leq L+1,\\0\leq\idxj<\idxi,\ \idxj\neq\idxi-k}\right\}.
\end{equation}
Under exact evaluation of \cref{eq:ideal-previous-score} through length $L\geq k$, the sufficient gap is $C\delta_L\left(g\right)$, where
$\delta_L\left(g\right)\defeq1-\max_{1\leq\idxn\leq L}\cos\left(g\idxn\right)$.
Thus $C\delta_L\left(g\right)\geq\floatlarge$ certifies $\boundyesterday[(k)]\left(g,C,\finiteset\right)\geq L$.

\subsection{From Fixed-Offset Heads to $\yesterday^k$}
If the certified range is $L$, stable softmax makes position $\idxi-k$'s weight exactly $1$ for every query position $\idxi \in \{k+1, \ldots L+1\}$. 
Unrotated dimensions independently simulate $\ptl$ and provide the boundary test for the first $k$ positions.
\begin{restatable}{corollary}{yesterdaypf}\label{corr:yesterday}
    Fix $g$, $C$, and $\finiteset$, and let $L\in\N$. Write $\smat[\ropenonperiodic]$-transformers for soft-attention $\ropenonperiodic$ transformers with unrotated subspaces, and let $L$-simulation be as in \cref{def:boundedsimulation}.
    \begin{enumerate}[label=(\roman*),leftmargin=*,topsep=1pt,itemsep=0.5pt]
        \item For every offset $k\geq1$ with $k\leq L\leq\boundyesterday[(k)]\left(g,C,\finiteset\right)$, such a transformer $L$-simulates the fixed-offset operator $\yesterday^k$.
        \item If $1\leq L\leq\boundyesterday[(1)]\left(g,C,\finiteset\right)$, it $L$-simulates every $\ltlAcr[\past,\yesterday]$-definable formula. More generally, for a finite set of offsets $\offsetset\subseteq\N_{\geq1}$ with $\max\offsetset\leq L\leq\min_{k\in\offsetset}\boundyesterday[(k)]\left(g,C,\finiteset\right)$, it $L$-simulates every formula built from atomic predicates, Boolean connectives, $\past$, and the operators $\set{\yesterday^k:k\in\offsetset}$.
        \item Moreover, for every $L\in\N$ and every finite offset set $\offsetset$ there exist a rotation angle $g$ with $g/\left(2\pi\right)\notin\Q$, scales $C$, and a finite-precision regime $\finiteset$ satisfying the hypothesis of (ii).
    \end{enumerate}
\end{restatable}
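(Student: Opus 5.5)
We handle the three items in order, reusing the structural-induction simulation of $\ptl$ by soft-attention transformers \citep{li2025characterizingexpressivityfixedprecisiontransformer} in unrotated subspaces. Only the new operators $\yesterday^k$ need a rotated head.

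For (i), the $\yesterday^k$ head works as follows. Suppose the subformula $\tlf$ is already stored as a Boolean coordinate $b_\idxj\in\set{0,1}$ in every $\vx_\idxj$. We use one rotary pair with angle $g$, query $\mQ\vx_\idxi+\biasVector_q=\sqrt{C}\,\mR_{g,-k}e_1$ (a constant vector, realized through the bias), and key $\sqrt{C}e_1$. The ideal score is then exactly \cref{eq:ideal-previous-score}. On the value side we copy $b_\idxj$. By the definition of $\boundyesterday[(k)]$ and the hypothesis $L\leq\boundyesterday[(k)]$, for every $k+1\leq\idxi\leq L+1$ the realized score of the target $\idxj=\idxi-k$ exceeds every other key's score by at least $\floatlarge$. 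After max-subtraction every other term therefore rounds to $\exp(\cdot)=0$, so $\valpha_{\idxi,\idxi-k}=1$ exactly and the head outputs $b_{\idxi-k}$. For $\idxi\leq k$ the operator must return $\false$. We supply this boundary test from unrotated dimensions: the predicate ``$\idxi>k$'' is $\ptl$-definable, for instance by nesting $\past$ over the $\bos$ atom $k$ times so that it requires $k$ distinct earlier positions. An FFN then forms the conjunction of this predicate with the head output. Under \cref{def:boundedsimulation}, the resulting value agrees with $\yesterday^k\tlf$ on all positions of strings of length at most $L$; the readout position $L+1$ is covered because the range extends to $\idxi\leq L+1$.

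For (ii), we run structural induction on the formula. Atoms, Boolean connectives, and $\past$ come from the $\nope$ construction in unrotated coordinates, and these are exact at all lengths. Each occurrence of $\yesterday^k$ with $k\in\offsetset$ gets its own layer, using the head from (i) with the shift $\mR_{g,-k}$ built into the bias. All offsets share the same $g$ and $C$, so the hypothesis $L\leq\min_k\boundyesterday[(k)]$ certifies every head at once. Rounding interacts with the values only through the exact one-hot weight, so errors do not compound across layers. The $\ltlAcr[\past,\yesterday]$ case is $\offsetset=\set{1}$.

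For (iii), we first choose $g$ irrational relative to $2\pi$, for example $g=1$. Since $g\idxn\notin2\pi\Z$ for $1\leq\idxn\leq L+\max\offsetset$, the quantity $\delta:=1-\max\cos(g\idxn)$ over that range is strictly positive. We then pick a precision regime $\finiteset$ and set $C=2\floatlarge/\delta$ (assuming $\floatlarge$ is representable). The main obstacle is passing from the exact-arithmetic certificate $C\delta_L\geq\floatlarge$ to the realized scores. Here $\idxi-k-\idxj$ ranges over $[-k,L]$, so the relevant distances are $\idxn\in\set{1,\dots,L+k}$ in absolute value. I would handle this by taking $\finiteset$ fine enough, with enough mantissa bits and a large enough range, that the realized rotation entries and the dot product each lie within $\eta$ of their ideal values, choosing $\eta$ so that $2C\eta\cdot O(1)<C\delta/2$. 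The realized gap is then still at least $C\delta/2=\floatlarge$. One subtlety is that $\floatlarge$ itself depends on $\finiteset$. I would first fix the exponent range, which determines $\floatlarge$, then fix $C$, and only then refine the mantissa, which changes neither.
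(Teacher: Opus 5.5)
Your (i) and (ii) are the paper's construction. You use a content-independent rotary head whose query and key are pure biases encoding the shift by $k$; you split $\sqrt{C}$ between query and key, whereas the paper puts $C$ on the query and $(1,0)$ on the key. As in the paper, you get exact one-hot selection from the certified gap, a boundary gate from the unrotated $\ptl$ dimensions conjoined by a feedforward layer, and one head per $\yesterday^k$ occurrence in the structural induction.

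There is one off-by-one error in the gate. $\past^k\atom_\bos$ holds at $\idxi$ iff $\idxi\geq k$, since the witnesses $0=j_k<\cdots<j_1<\idxi$ force only that. So at $\idxi=k$ your gate is open while the head is outside its certified range and its output is uncontrolled. Use $\past^{k+1}\atom_\bos$ instead, or the paper's $\past^k\true$ with logical positions starting at $1$.

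For (iii) you take a genuinely different route. The paper stays in the abstract finite-value model. It picks $\finiteset$ to contain exactly every parameter and every intermediate sine, cosine, product and score value through position $L+1$, while keeping a separate underflow cell near zero. The ideal identity then holds verbatim, and $C\delta_L(g)\geq\floatlarge$ transfers with no error analysis. You instead run a perturbation argument in a fine floating-point-style grid, using a factor-two margin to absorb an $O(C\eta)$ score error. Your route buys more: it shows that every sufficiently precise conventional format meets the hypothesis of (ii), so the certified range grows with precision. The cost is the estimate itself, whereas the paper's route is shorter but purely existential.

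Your ordering for controlling $\floatlarge$ is not quite right. Refining the mantissa does in general change $\floatlarge$: it is the least grid point beyond the underflow threshold, and with subnormals the threshold itself moves. The ordering is also unnecessary. Your requirement $2C\eta\cdot O(1)<C\delta/2$ does not actually involve $C$, and rounding errors are relative. So fix the format first, which fixes $\floatlarge$, and then take any representable $C\geq 2\floatlarge/\delta$. Incidentally, $\idxi-k-\idxj$ lies in $[1-k,L]$, so $\delta_L(g)$ over $1\leq n\leq L$ already suffices; your larger range is merely conservative.
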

The connection to $\ltlAcr[\past,\yesterday]$ links the result to the extra power of local attention \citep{li2026characterizingexpressivitylocalattention} and resembles the finite-precision implementation of $n$-gram heads \citep{svete-cotterell-2024-transformers,svete-etal-2024-transformers}.

\subsection{Testing the Locality Account}\label{sec:experiments-nonperiodic}\label{sec:experiments-irrational}

We evaluate whether the bounded fixed-offset mechanism appears in length generalization, under the protocol of \cref{sec:experiments-periodic}.
$\rope$ is again applied on all dimensions.

\begin{figure}[H]
\centering
\resizebox{\textwidth}{!}{%
\begin{tikzpicture}
\begin{groupplot}[
    group style={group size=3 by 1, horizontal sep=0.9cm},
    width=0.31\textwidth,
    height=0.25\textwidth,
    xmin=-13,
    xmax=13,
    ymin=0,
    xtick={-12,-8,-4,0,4,8,12},
    xticklabels={$10^{-12}$,$10^{-8}$,$10^{-4}$,$1$,$10^4$,$10^8$,$10^{12}$},
    xticklabel style={font=\tiny,rotate=45,anchor=east},
    xlabel={Base},
    tick label style={font=\scriptsize},
    label style={font=\small},
    title style={font=\scriptsize},
    grid=major,
    grid style={gray!20},
    axis line style={gray!60},
    error bars/error bar style={line width=0.5pt},
    error bars/error mark options={rotate=90,mark size=1.5pt},
]
\nextgroupplot[title={$\kleene{\alphabet}\syma\in\ltlAcr[\yesterday]$},ylabel={Mean $N^*$},
    ymax=160,ytick={0,40,80,120,160},
    legend to name=localityerrlegend,
    legend columns=3,
    legend cell align=left,
    legend style={draw=none,font=\scriptsize,/tikz/every even column/.append style={column sep=0.8em}}]
\path[fill=black!12] (axis cs:-13,46.9) rectangle (axis cs:13,54.7);
\addplot[ETHBlue,thick,mark=*,mark size=1.5pt,error bars/.cd,y dir=both,y explicit]
    coordinates {(-12,95.0) +- (0,26.6) (-8,81.3) +- (0,16.8) (-4,99.8) +- (0,46.8)
                 (0,54.8) +- (0,9.2) (4,45.5) +- (0,2.3) (8,65.7) +- (0,14.6) (12,58.9) +- (0,4.8)};
\addlegendentry{$\ropenonperiodic$ (mean $\pm$ 1 std)}
\addplot[black!80,very thick,densely dashed,mark=none] coordinates {(-12,50.8) (12,50.8)};
\addlegendentry{$\nope$ mean}
\addlegendimage{area legend,fill=black!10,draw=none}
\addlegendentry{$\nope$ $\pm$ 1 std}

\nextgroupplot[title={$\syma\kleene{\alphabet}\in\ltlAcr[\past]$},
    ymax=580,ytick={0,100,200,300,400,500}]
\path[fill=black!10] (axis cs:-13,388.5) rectangle (axis cs:13,552.5);
\addplot[ETHBlue,thick,mark=*,mark size=1.5pt,error bars/.cd,y dir=both,y explicit]
    coordinates {(-12,199.5) +- (0,99.1) (-8,166.7) +- (0,101.4) (-4,199.1) +- (0,95.4)
                 (0,204.9) +- (0,109.5) (4,146.9) +- (0,30.2) (8,293.3) +- (0,121.4) (12,402.9) +- (0,125.1)};
\addplot[black!80,very thick,densely dashed,mark=none] coordinates {(-12,470.5) (12,470.5)};

\nextgroupplot[title={$\kleene{\alphabet}\syma\symb\kleene{\alphabet}\in\ltlAcr[\past,\yesterday]$},
    ymax=140,ytick={0,25,50,75,100,125}]
\path[fill=black!10] (axis cs:-13,60.3) rectangle (axis cs:13,125.5);
\addplot[ETHBlue,thick,mark=*,mark size=1.5pt,error bars/.cd,y dir=both,y explicit]
    coordinates {(-12,61.9) +- (0,16.8) (-8,58.7) +- (0,9.8) (-4,55.2) +- (0,7.4)
                 (0,46.6) +- (0,1.5) (4,49.8) +- (0,3.5) (8,47.2) +- (0,2.6) (12,46.1) +- (0,2.2)};
\addplot[black!80,very thick,densely dashed,mark=none] coordinates {(-12,92.9) (12,92.9)};
\end{groupplot}
\node[anchor=south] at ($(group c1r1.north)!0.5!(group c3r1.north)+(0,0.85cm)$) {\ref{localityerrlegend}};
\end{tikzpicture}
}
\caption{Mean longest perfect length $N^*$. Whiskers are $\pm1$ standard deviation over the run grid and the grey band is the corresponding $\nope$ interval. 
}
\label{fig:exp-nonperiodic-base-sweep}
\end{figure}

\paragraph{The standard base is not uniformly best.}
On each single-operator language, the conventional base $\beta=10^4$ gives the lowest mean $N^*$ among the seven tested bases: $45.5$ on $\kleene{\alphabet}\syma$ and $146.9$ on $\syma\kleene{\alphabet}$. Every tested base on either side of $10^4$ improves on these two values. This mirrors the extrapolation experiments of \citet{liu2024scalinglawsropebasedextrapolation}, who find $10^4$ to be the worst tested base after fine-tuning and report improvements from both smaller and larger bases. The pattern is not universal, as, on the combined $\past$--$\yesterday$ language, $10^12$ performs worse. 
This suggests that the optimal base is task-dependent.

\paragraph{$\ropenonperiodic$ improves on a $\yesterday$ language.}
On $\kleene{\alphabet}\syma$, $\ropenonperiodic$'s mean $N^*$ exceeds $\nope$'s at six of seven bases (\cref{fig:exp-nonperiodic-base-sweep}). The certified-range result offers a mechanism: Relative scores can isolate the preceding position while their finite-precision gap remains large enough. As length grows, more competing offsets appear, increasing the chance of a numerical collision and making the behavior brittle.

\paragraph{$\ropenonperiodic$ can disrupt $\past$ reasoning.}
On $\syma\kleene{\alphabet}$, the $\nope$ mean is $470.5$, whereas the $\ropenonperiodic$ means range from $146.9$ to $402.9$. On $\kleene{\alphabet}\syma\symb\kleene{\alphabet}$, the $\nope$ mean is $92.9$, whereas the $\ropenonperiodic$ means range from $46.1$ to $61.9$. This suggests a tradeoff: Relative displacement can help retrieve a nearby symbol while disrupting the nearly position-invariant aggregation that supports conditioning on distant information. Conventional $\rope$ therefore behaves more like a bounded source of $\yesterday$ access than a monotonic expressivity improvement over $\nope$.
We conjecture that partial-$\rope{}$ \citep{barbero2025roundroundgomakes,khan2026fractionalrotationpotentialinvestigating} and $\nope$-$\rope{}$ hybrid attention \citep{yangropetonope} variants should empirically improve on long-context $\past$ reasoning while retaining the locality bias of $\ropenonperiodic$.

\section{Discussion}\label{sec:discussion}

\paragraph{Two regimes.}
For component-periodic schedules, $\ropeperiodic$ yields precisely $\ptl[\modpred]$ over arbitrary input lengths, enabling modular reasoning over input positions. 
The conventional $\ropenonperiodic$ behaves qualitatively differently: Relative scores can directly identify any fixed offset $k$, but only while finite precision preserves a sufficient score gap, yielding bounded $\yesterday^k$ operators and a bounded simulation of $\ltlAcr[\past,\yesterday]$. 
The two accounts therefore differ both in their schedule assumption and in whether the claim is uniform over all lengths.
With unrotated subspaces, \rope{} also retains the $\past$ reasoning of $\nope$---checking whether a symbol occurs anywhere in the context.

\paragraph{Limitations.}
Our periodic lower bound uses a repeated finite table and a tailored exact value set, and does not imply that standard frequencies or optimization discover modular behavior.
Empirical results, however, suggest that strong modular behavior often appears.
The locality result of \cref{sec:lengthbound} is bounded and is not an all-length characterization of $\ropenonperiodic$. 

\paragraph{Architectural implications.}
The comparison suggests reserving unrotated dimensions when a model must combine global $\past$ reasoning with periodic or local relative-position information. It also supports treating rotation structure as an explicit architectural choice, consistent with work questioning broad long-context claims for $\rope$ \citep{du2026ropedistinguishespositionstokens}. 
Neither account gives $\rope$ general state-tracking or regular language recognition abilities \citep{liu2023transformers}, such as those required by Flip-Flop language modeling \citep{liu2023exposingattentionglitchesflipflop}. 
Alternative state-tracking mechanisms include rational transductors \citep{mohri2026rationaltransductors} and transformer--linear-RNN hybrids \citep{merrill2026olmohybridtheorypractice,merrill2026linearrnnsparallelizable}. A remaining theoretical question is how the known depth hierarchy for $\rope$ transformers \citep{yang2025kneedeepcrasptransformerdepth} differs between exact periodic and bounded-local regimes.

\section*{Acknowledgments}
We are grateful to William Merrill and Andy Yang for valuable comments on this work.
Selim Jerad thanks Makoto Yamada and the MLDS Unit at the Okinawa Institute of Science and Technology for hosting him while writing this paper.
Anej Svete and Jiaoda Li are supported by an ETH AI Center Doctoral Fellowship.
We used generative AI to
improve our writing.
Every modification introduced by generative AI
was carefully reviewed by the authors, who take
full responsibility for it.

\bibliography{colm}

@misc{grattafiori2024llama3herdmodels,
  archiveprefix = {arXiv},
  author        = {Aaron Grattafiori and Abhimanyu Dubey and Abhinav Jauhri and Abhinav Pandey and Abhishek Kadian and Ahmad Al-Dahle and Aiesha Letman and Akhil Mathur and Alan Schelten and Alex Vaughan and Amy Yang and Angela Fan and Anirudh Goyal and Anthony Hartshorn and Aobo Yang and Archi Mitra and Archie Sravankumar and Artem Korenev and Arthur Hinsvark and Arun Rao and Aston Zhang and Aurelien Rodriguez and Austen Gregerson and Ava Spataru and Baptiste Roziere and Bethany Biron and Binh Tang and Bobbie Chern and Charlotte Caucheteux and Chaya Nayak and Chloe Bi and Chris Marra and Chris McConnell and Christian Keller and Christophe Touret and Chunyang Wu and Corinne Wong and Cristian Canton Ferrer and Cyrus Nikolaidis and Damien Allonsius and Daniel Song and Danielle Pintz and Danny Livshits and Danny Wyatt and David Esiobu and Dhruv Choudhary and Dhruv Mahajan and Diego Garcia-Olano and Diego Perino and Dieuwke Hupkes and Egor Lakomkin and Ehab AlBadawy and Elina Lobanova and Emily Dinan and Eric Michael Smith and Filip Radenovic and Francisco Guzmán and Frank Zhang and Gabriel Synnaeve and Gabrielle Lee and Georgia Lewis Anderson and Govind Thattai and Graeme Nail and Gregoire Mialon and Guan Pang and Guillem Cucurell and Hailey Nguyen and Hannah Korevaar and Hu Xu and Hugo Touvron and Iliyan Zarov and Imanol Arrieta Ibarra and Isabel Kloumann and Ishan Misra and Ivan Evtimov and Jack Zhang and Jade Copet and Jaewon Lee and Jan Geffert and Jana Vranes and Jason Park and Jay Mahadeokar and Jeet Shah and Jelmer van der Linde and Jennifer Billock and Jenny Hong and Jenya Lee and Jeremy Fu and Jianfeng Chi and Jianyu Huang and Jiawen Liu and Jie Wang and Jiecao Yu and Joanna Bitton and Joe Spisak and Jongsoo Park and Joseph Rocca and Joshua Johnstun and Joshua Saxe and Junteng Jia and Kalyan Vasuden Alwala and Karthik Prasad and Kartikeya Upasani and Kate Plawiak and Ke Li and Kenneth Heafield and Kevin Stone and Khalid El-Arini and Krithika Iyer and Kshitiz Malik and Kuenley Chiu and Kunal Bhalla and Kushal Lakhotia and Lauren Rantala-Yeary and Laurens van der Maaten and Lawrence Chen and Liang Tan and Liz Jenkins and Louis Martin and Lovish Madaan and Lubo Malo and Lukas Blecher and Lukas Landzaat and Luke de Oliveira and Madeline Muzzi and Mahesh Pasupuleti and Mannat Singh and Manohar Paluri and Marcin Kardas and Maria Tsimpoukelli and Mathew Oldham and Mathieu Rita and Maya Pavlova and Melanie Kambadur and Mike Lewis and Min Si and Mitesh Kumar Singh and Mona Hassan and Naman Goyal and Narjes Torabi and Nikolay Bashlykov and Nikolay Bogoychev and Niladri Chatterji and Ning Zhang and Olivier Duchenne and Onur Çelebi and Patrick Alrassy and Pengchuan Zhang and Pengwei Li and Petar Vasic and Peter Weng and Prajjwal Bhargava and Pratik Dubal and Praveen Krishnan and Punit Singh Koura and Puxin Xu and Qing He and Qingxiao Dong and Ragavan Srinivasan and Raj Ganapathy and Ramon Calderer and Ricardo Silveira Cabral and Robert Stojnic and Roberta Raileanu and Rohan Maheswari and Rohit Girdhar and Rohit Patel and Romain Sauvestre and Ronnie Polidoro and Roshan Sumbaly and Ross Taylor and Ruan Silva and Rui Hou and Rui Wang and Saghar Hosseini and Sahana Chennabasappa and Sanjay Singh and Sean Bell and Seohyun Sonia Kim and Sergey Edunov and Shaoliang Nie and Sharan Narang and Sharath Raparthy and Sheng Shen and Shengye Wan and Shruti Bhosale and Shun Zhang and Simon Vandenhende and Soumya Batra and Spencer Whitman and Sten Sootla and Stephane Collot and Suchin Gururangan and Sydney Borodinsky and Tamar Herman and Tara Fowler and Tarek Sheasha and Thomas Georgiou and Thomas Scialom and Tobias Speckbacher and Todor Mihaylov and Tong Xiao and Ujjwal Karn and Vedanuj Goswami and Vibhor Gupta and Vignesh Ramanathan and Viktor Kerkez and Vincent Gonguet and Virginie Do and Vish Vogeti and Vítor Albiero and Vladan Petrovic and Weiwei Chu and Wenhan Xiong and Wenyin Fu and Whitney Meers and Xavier Martinet and Xiaodong Wang and Xiaofang Wang and Xiaoqing Ellen Tan and Xide Xia and Xinfeng Xie and Xuchao Jia and Xuewei Wang and Yaelle Goldschlag and Yashesh Gaur and Yasmine Babaei and Yi Wen and Yiwen Song and Yuchen Zhang and Yue Li and Yuning Mao and Zacharie Delpierre Coudert and Zheng Yan and Zhengxing Chen and Zoe Papakipos and Aaditya Singh and Aayushi Srivastava and Abha Jain and Adam Kelsey and Adam Shajnfeld and Adithya Gangidi and Adolfo Victoria and Ahuva Goldstand and Ajay Menon and Ajay Sharma and Alex Boesenberg and Alexei Baevski and Allie Feinstein and Amanda Kallet and Amit Sangani and Amos Teo and Anam Yunus and Andrei Lupu and Andres Alvarado and Andrew Caples and Andrew Gu and Andrew Ho and Andrew Poulton and Andrew Ryan and Ankit Ramchandani and Annie Dong and Annie Franco and Anuj Goyal and Aparajita Saraf and Arkabandhu Chowdhury and Ashley Gabriel and Ashwin Bharambe and Assaf Eisenman and Azadeh Yazdan and Beau James and Ben Maurer and Benjamin Leonhardi and Bernie Huang and Beth Loyd and Beto De Paola and Bhargavi Paranjape and Bing Liu and Bo Wu and Boyu Ni and Braden Hancock and Bram Wasti and Brandon Spence and Brani Stojkovic and Brian Gamido and Britt Montalvo and Carl Parker and Carly Burton and Catalina Mejia and Ce Liu and Changhan Wang and Changkyu Kim and Chao Zhou and Chester Hu and Ching-Hsiang Chu and Chris Cai and Chris Tindal and Christoph Feichtenhofer and Cynthia Gao and Damon Civin and Dana Beaty and Daniel Kreymer and Daniel Li and David Adkins and David Xu and Davide Testuggine and Delia David and Devi Parikh and Diana Liskovich and Didem Foss and Dingkang Wang and Duc Le and Dustin Holland and Edward Dowling and Eissa Jamil and Elaine Montgomery and Eleonora Presani and Emily Hahn and Emily Wood and Eric-Tuan Le and Erik Brinkman and Esteban Arcaute and Evan Dunbar and Evan Smothers and Fei Sun and Felix Kreuk and Feng Tian and Filippos Kokkinos and Firat Ozgenel and Francesco Caggioni and Frank Kanayet and Frank Seide and Gabriela Medina Florez and Gabriella Schwarz and Gada Badeer and Georgia Swee and Gil Halpern and Grant Herman and Grigory Sizov and Guangyi and Zhang and Guna Lakshminarayanan and Hakan Inan and Hamid Shojanazeri and Han Zou and Hannah Wang and Hanwen Zha and Haroun Habeeb and Harrison Rudolph and Helen Suk and Henry Aspegren and Hunter Goldman and Hongyuan Zhan and Ibrahim Damlaj and Igor Molybog and Igor Tufanov and Ilias Leontiadis and Irina-Elena Veliche and Itai Gat and Jake Weissman and James Geboski and James Kohli and Janice Lam and Japhet Asher and Jean-Baptiste Gaya and Jeff Marcus and Jeff Tang and Jennifer Chan and Jenny Zhen and Jeremy Reizenstein and Jeremy Teboul and Jessica Zhong and Jian Jin and Jingyi Yang and Joe Cummings and Jon Carvill and Jon Shepard and Jonathan McPhie and Jonathan Torres and Josh Ginsburg and Junjie Wang and Kai Wu and Kam Hou U and Karan Saxena and Kartikay Khandelwal and Katayoun Zand and Kathy Matosich and Kaushik Veeraraghavan and Kelly Michelena and Keqian Li and Kiran Jagadeesh and Kun Huang and Kunal Chawla and Kyle Huang and Lailin Chen and Lakshya Garg and Lavender A and Leandro Silva and Lee Bell and Lei Zhang and Liangpeng Guo and Licheng Yu and Liron Moshkovich and Luca Wehrstedt and Madian Khabsa and Manav Avalani and Manish Bhatt and Martynas Mankus and Matan Hasson and Matthew Lennie and Matthias Reso and Maxim Groshev and Maxim Naumov and Maya Lathi and Meghan Keneally and Miao Liu and Michael L. Seltzer and Michal Valko and Michelle Restrepo and Mihir Patel and Mik Vyatskov and Mikayel Samvelyan and Mike Clark and Mike Macey and Mike Wang and Miquel Jubert Hermoso and Mo Metanat and Mohammad Rastegari and Munish Bansal and Nandhini Santhanam and Natascha Parks and Natasha White and Navyata Bawa and Nayan Singhal and Nick Egebo and Nicolas Usunier and Nikhil Mehta and Nikolay Pavlovich Laptev and Ning Dong and Norman Cheng and Oleg Chernoguz and Olivia Hart and Omkar Salpekar and Ozlem Kalinli and Parkin Kent and Parth Parekh and Paul Saab and Pavan Balaji and Pedro Rittner and Philip Bontrager and Pierre Roux and Piotr Dollar and Polina Zvyagina and Prashant Ratanchandani and Pritish Yuvraj and Qian Liang and Rachad Alao and Rachel Rodriguez and Rafi Ayub and Raghotham Murthy and Raghu Nayani and Rahul Mitra and Rangaprabhu Parthasarathy and Raymond Li and Rebekkah Hogan and Robin Battey and Rocky Wang and Russ Howes and Ruty Rinott and Sachin Mehta and Sachin Siby and Sai Jayesh Bondu and Samyak Datta and Sara Chugh and Sara Hunt and Sargun Dhillon and Sasha Sidorov and Satadru Pan and Saurabh Mahajan and Saurabh Verma and Seiji Yamamoto and Sharadh Ramaswamy and Shaun Lindsay and Shaun Lindsay and Sheng Feng and Shenghao Lin and Shengxin Cindy Zha and Shishir Patil and Shiva Shankar and Shuqiang Zhang and Shuqiang Zhang and Sinong Wang and Sneha Agarwal and Soji Sajuyigbe and Soumith Chintala and Stephanie Max and Stephen Chen and Steve Kehoe and Steve Satterfield and Sudarshan Govindaprasad and Sumit Gupta and Summer Deng and Sungmin Cho and Sunny Virk and Suraj Subramanian and Sy Choudhury and Sydney Goldman and Tal Remez and Tamar Glaser and Tamara Best and Thilo Koehler and Thomas Robinson and Tianhe Li and Tianjun Zhang and Tim Matthews and Timothy Chou and Tzook Shaked and Varun Vontimitta and Victoria Ajayi and Victoria Montanez and Vijai Mohan and Vinay Satish Kumar and Vishal Mangla and Vlad Ionescu and Vlad Poenaru and Vlad Tiberiu Mihailescu and Vladimir Ivanov and Wei Li and Wenchen Wang and Wenwen Jiang and Wes Bouaziz and Will Constable and Xiaocheng Tang and Xiaojian Wu and Xiaolan Wang and Xilun Wu and Xinbo Gao and Yaniv Kleinman and Yanjun Chen and Ye Hu and Ye Jia and Ye Qi and Yenda Li and Yilin Zhang and Ying Zhang and Yossi Adi and Youngjin Nam and Yu and Wang and Yu Zhao and Yuchen Hao and Yundi Qian and Yunlu Li and Yuzi He and Zach Rait and Zachary DeVito and Zef Rosnbrick and Zhaoduo Wen and Zhenyu Yang and Zhiwei Zhao and Zhiyu Ma},
  eprint        = {2407.21783},
  primaryclass  = {cs.AI},
  title         = {The Llama 3 Herd of Models},
  url           = {https://arxiv.org/abs/2407.21783},
  year          = {2024}
}

@article{Almeida_2025,
  author    = {Almeida, Jorge},
  doi       = {10.1142/s1793557125400078},
  issn      = {1793-7183},
  journal   = {Asian-European Journal of Mathematics},
  month     = may,
  publisher = {World Scientific Pub Co Pte Ltd},
  title     = {Pseudovarieties of semigroups},
  url       = {http://dx.doi.org/10.1142/S1793557125400078},
  year      = {2025}
}

@misc{kazemnejad2023impactpositionalencodinglength,
  archiveprefix = {arXiv},
  author        = {Amirhossein Kazemnejad and Inkit Padhi and Karthikeyan Natesan Ramamurthy and Payel Das and Siva Reddy},
  eprint        = {2305.19466},
  primaryclass  = {cs.CL},
  title         = {The Impact of Positional Encoding on Length Generalization in Transformers},
  url           = {https://arxiv.org/abs/2305.19466},
  year          = {2023}
}

@misc{yang2025qwen3technicalreport,
  archiveprefix = {arXiv},
  author        = {An Yang and Anfeng Li and Baosong Yang and Beichen Zhang and Binyuan Hui and Bo Zheng and Bowen Yu and Chang Gao and Chengen Huang and Chenxu Lv and Chujie Zheng and Dayiheng Liu and Fan Zhou and Fei Huang and Feng Hu and Hao Ge and Haoran Wei and Huan Lin and Jialong Tang and Jian Yang and Jianhong Tu and Jianwei Zhang and Jianxin Yang and Jiaxi Yang and Jing Zhou and Jingren Zhou and Junyang Lin and Kai Dang and Keqin Bao and Kexin Yang and Le Yu and Lianghao Deng and Mei Li and Mingfeng Xue and Mingze Li and Pei Zhang and Peng Wang and Qin Zhu and Rui Men and Ruize Gao and Shixuan Liu and Shuang Luo and Tianhao Li and Tianyi Tang and Wenbiao Yin and Xingzhang Ren and Xinyu Wang and Xinyu Zhang and Xuancheng Ren and Yang Fan and Yang Su and Yichang Zhang and Yinger Zhang and Yu Wan and Yuqiong Liu and Zekun Wang and Zeyu Cui and Zhenru Zhang and Zhipeng Zhou and Zihan Qiu},
  eprint        = {2505.09388},
  primaryclass  = {cs.CL},
  title         = {Qwen3 Technical Report},
  url           = {https://arxiv.org/abs/2505.09388},
  year          = {2025}
}

@inproceedings{yang2024counting,
  author    = {Andy Yang and David Chiang},
  booktitle = {First Conference on Language Modeling},
  title     = {Counting Like Transformers: Compiling Temporal Counting Logic Into Softmax Transformers},
  url       = {https://openreview.net/forum?id=FmhPg4UJ9K},
  year      = {2024}
}

@misc{yang2024maskedhardattentiontransformersrecognize,
  archiveprefix = {arXiv},
  author        = {Andy Yang and David Chiang and Dana Angluin},
  eprint        = {2310.13897},
  primaryclass  = {cs.FL},
  title         = {Masked Hard-Attention Transformers Recognize Exactly the Star-Free Languages},
  url           = {https://arxiv.org/abs/2310.13897},
  year          = {2024}
}

@misc{yang2025kneedeepcrasptransformerdepth,
  archiveprefix = {arXiv},
  author        = {Andy Yang and Michaël Cadilhac and David Chiang},
  eprint        = {2506.16055},
  primaryclass  = {cs.CL},
  title         = {Knee-Deep in C-RASP: A Transformer Depth Hierarchy},
  url           = {https://arxiv.org/abs/2506.16055},
  year          = {2025}
}

@inproceedings{svete2026revisiting,
  author    = {Anej Svete and William Merrill and Ryan Cotterell and Ashish Sabharwal},
  booktitle = {Forty-third International Conference on Machine Learning},
  title     = {Revisiting Padded Transformer Expressivity: Which Architectural Choices Matter and Which Don't},
  url       = {https://openreview.net/forum?id=nBuL6HywFX},
  year      = {2026}
}

@misc{vaswani2023attentionneed,
  archiveprefix = {arXiv},
  author        = {Ashish Vaswani and Noam Shazeer and Niki Parmar and Jakob Uszkoreit and Llion Jones and Aidan N. Gomez and Lukasz Kaiser and Illia Polosukhin},
  eprint        = {1706.03762},
  primaryclass  = {cs.CL},
  title         = {Attention Is All You Need},
  url           = {https://arxiv.org/abs/1706.03762},
  year          = {2017}
}

@misc{liu2023exposingattentionglitchesflipflop,
  archiveprefix = {arXiv},
  author        = {Bingbin Liu and Jordan T. Ash and Surbhi Goel and Akshay Krishnamurthy and Cyril Zhang},
  eprint        = {2306.00946},
  primaryclass  = {cs.LG},
  title         = {Exposing Attention Glitches with Flip-Flop Language Modeling},
  url           = {https://arxiv.org/abs/2306.00946},
  year          = {2023}
}

@inproceedings{liu2023transformers,
  author    = {Bingbin Liu and Jordan T. Ash and Surbhi Goel and Akshay Krishnamurthy and Cyril Zhang},
  booktitle = {The Eleventh International Conference on Learning Representations },
  title     = {Transformers Learn Shortcuts to Automata},
  url       = {https://openreview.net/forum?id=De4FYqjFueZ},
  year      = {2023}
}

@inproceedings{dartois_et_al:LIPIcs.STACS.2013.329,
  address   = {Dagstuhl, Germany},
  author    = {Dartois, Luc and Paperman, Charles},
  booktitle = {30th International Symposium on Theoretical Aspects of Computer Science (STACS 2013)},
  doi       = {10.4230/LIPIcs.STACS.2013.329},
  editor    = {Portier, Natacha and Wilke, Thomas},
  isbn      = {978-3-939897-50-7},
  issn      = {1868-8969},
  pages     = {329--340},
  publisher = {Schloss Dagstuhl -- Leibniz-Zentrum f{\"u}r Informatik},
  series    = {Leibniz International Proceedings in Informatics (LIPIcs)},
  title     = {{Two-variable first order logic with modular predicates over words}},
  url       = {https://drops.dagstuhl.de/entities/document/10.4230/LIPIcs.STACS.2013.329},
  urn       = {urn:nbn:de:0030-drops-39450},
  volume    = {20},
  year      = {2013}
}

@misc{chiang2023tighterboundsexpressivitytransformer,
  archiveprefix = {arXiv},
  author        = {David Chiang and Peter Cholak and Anand Pillay},
  eprint        = {2301.10743},
  primaryclass  = {cs.LG},
  title         = {Tighter Bounds on the Expressivity of Transformer Encoders},
  url           = {https://arxiv.org/abs/2301.10743},
  year          = {2023}
}

@misc{deepseekai2026deepseekv4highlyefficientmilliontoken,
  archiveprefix = {arXiv},
  author        = {DeepSeek-AI and Anyi Xu and Bangcai Lin and Bing Xue and Bingxuan Wang and Bingzheng Xu and Bochao Wu and Bowei Zhang and Chaofan Lin and Chen Dong and Chenchen Ling and Chengda Lu and Chenggang Zhao and Chengqi Deng and Chengyu Hou and Chenhao Xu and Chenze Shao and Chong Ruan and Conner Sun and Damai Dai and Daya Guo and Dejian Yang and Deli Chen and Donghao Li and Dongjie Ji and Erhang Li and Fang Wei and Fangyun Lin and Fangzhou Yuan and Feiyu Xia and Fucong Dai and Guangbo Hao and Guanting Chen and Guoai Cao and Guolai Meng and Guowei Li and Han Yu and Han Zhang and Hanwei Xu and Hao Li and Haofen Liang and Haoling Zhang and Haoming Luo and Haoran Wei and Haotian Yuan and Haowei Zhang and Haowen Luo and Haoyu Chen and Haozhe Ji and Hengqing Zhang and Honghui Ding and Hongxuan Tang and Huanqi Cao and Huazuo Gao and Hui Qu and Hui Zeng and J Yang and JQ Zhu and Jia Luo and Jia Song and Jia Yu and Jialiang Huang and Jialu Cai and Jian Liang and Jiangting Zhou and Jiasheng Ye and Jiashi Li and Jiaxin Xu and Jiewen Hu and Jieyu Yang and Jin Chen and Jin Yan and Jingchang Chen and Jingli Zhou and Jingting Xiang and Jingyang Yuan and Jingyuan Cheng and Jingzi Zhou and Jinhua Zhu and Jiping Yu and Joseph Sun and Jun Ran and Junguang Jiang and Junjie Qiu and Junlong Li and Junmin Zheng and Junxiao Song and Kai Dong and Kaige Gao and Kang Guan and Kexing Zhou and Kezhao Huang and Kuai Yu and Lean Wang and Lecong Zhang and Lei Wang and Leyi Xia and Li Zhang and Liang Zhao and Lihua Guo and Lingxiao Luo and Linwang Ma and Linyan Zhu and Litong Wang and Liyu Cai and Liyue Zhang and Longhao Chen and MS Di and MY Xu and Max Mei and Miaojun Wang and Mingchuan Zhang and Minghua Zhang and Minghui Tang and Mingming Li and Mingxu Zhou and Minmin Han and Ning Wang and Panpan Huang and Panpan Wang and Peixin Cong and Peiyi Wang and Peng Zhang and Qiancheng Wang and Qihao Zhu and Qingyang Li and Qinyu Chen and Qiushi Du and Qiwei Jiang and Rui Tian and Ruifan Xu and Ruijie Lu and Ruiling Xu and Ruiqi Ge and Ruisong Zhang and Ruizhe Pan and Runji Wang and Runqian Chen and Runqiu Yin and Runxin Xu and Ruomeng Shen and Ruoyu Zhang and Ruyi Chen and SH Liu and Shanghao Lu and Shangmian Sun and Shangyan Zhou and Shanhuang Chen and Shaofei Cai and Shaoheng Nie and Shaoqing Wu and Shaoyuan Chen and Shengding Hu and Shengyu Liu and Shiqiang Hu and Shirong Ma and Shiyu Wang and Shuiping Yu and Shunfeng Zhou and Shuting Pan and Shuying Yu and Songyang Zhou and Tao Ni and Tao Yun and Tian Jin and Tian Pei and Tian Ye and Tianle Lin and Tianran Ji and Tianyi Cui and Tianyuan Yue and Tingting Yu and Tun Wang and W Zhang and WL Xiao and Wangding Zeng and Wei An and Weilin Zhao and Wen Liu and Wenfeng Liang and Wenjie Pang and Wenjing Luo and Wenjing Yao and Wenjun Gao and Wenkai Yang and Wenlve Huang and Wenqing Hou and Wentao Zhang and Wenting Ma and Xi Gao and Xiang He and Xiangwen Wang and Xianzu Wang and Xiao Bi and Xiaodong Liu and Xiaohan Wang and Xiaokang Chen and Xiaokang Zhang and Xiaotao Nie and Xiaowen Sun and Xiaoxiang Wang and Xin Cheng and Xin Liu and Xin Xie and Xingchao Liu and Xingchen Liu and Xingkai Yu and Xingyou Li and Xinyu Yang and Xinyu Zhang and Xu Chen and Xuanyu Wang and Xuecheng Su and Xueyin Chen and Xuheng Lin and Xuwei Fu and YC Yan and YQ Wang and YW Ma and Yanfeng Luo and Yang Zhang and Yanhong Xu and Yanru Ma and Yanwen Huang and Yao Li and Yao Li and Yao Xu and Yao Zhao and Yaofeng Sun and Yaohui Wang and Yi Qian and Yi Shao and Yi Yu and Yichao Zhang and Yifan Ding and Yifan Shi and Yijia Wu and Yiliang Xiong and Yiling Ma and Ying He and Ying Tang and Ying Zhou and Yingjia Luo and Yinmin Zhong and Yishi Piao and Yisong Wang and Yixiang Zhang and Yixiao Chen and Yixuan Tan and Yixuan Wei and Yiyang Ma and Yiyuan Liu and Yonglun Yang and Yongqiang Guo and Yongtong Wu and Yu Wu and YuKun Li and Yuan Cheng and Yuan Ou and Yuanfan Xu and Yuanhao Li and Yuduan Wang and Yuehan Yang and Yuer Xu and Yuhan Wu and Yuhao Meng and Yuheng Zou and Yukun Zha and Yunfan Xiong and Yupeng Chen and Yuping Lin and Yuqian Cao and Yuqian Wang and Yushun Zhang and Yuting Yan and Yutong Lin and Yuxian Gu and Yuxiang Luo and Yuxiang You and Yuxuan Liu and Yuxuan Zhou and Yuyang Zhou and Yuzhen Huang and ZF Wu and Zehao Wang and Zehua Zhao and Zehui Ren and Zekai Zhang and Zhangli Sha and Zhe Fu and Zhe Ju and Zhean Xu and Zhenda Xie and Zhengyan Zhang and Zheren Gao and Zhewen Hao and Zhibin Gou and Zhicheng Ma and Zhigang Yan and Zhihong Shao and Zhixian Huang and Zhixuan Chen and Zhiyu Wu and Zhizhou Ren and Zhongyu Wu and Zhuoshu Li and Zhuping Zhang and Zian Xu and Zihao Wang and Zihua Qu and Zihui Gu and Zijia Zhu and Zilin Li and Zipeng Zhang and Ziwei Xie and Ziyi Gao and Ziyi Wan and Zizheng Pan and Zongqing Yao},
  eprint        = {2606.19348},
  primaryclass  = {cs.CL},
  title         = {DeepSeek-V4: Towards Highly Efficient Million-Token Context Intelligence},
  url           = {https://arxiv.org/abs/2606.19348},
  year          = {2026}
}

@misc{barbero2025roundroundgomakes,
  archiveprefix = {arXiv},
  author        = {Federico Barbero and Alex Vitvitskyi and Christos Perivolaropoulos and Razvan Pascanu and Petar Veličković},
  eprint        = {2410.06205},
  primaryclass  = {cs.CL},
  title         = {Round and Round We Go! What makes Rotary Positional Encodings useful?},
  url           = {https://arxiv.org/abs/2410.06205},
  year          = {2025}
}

@misc{barbero2025llmsattendtoken,
  archiveprefix = {arXiv},
  author        = {Federico Barbero and Álvaro Arroyo and Xiangming Gu and Christos Perivolaropoulos and Michael Bronstein and Petar Veličković and Razvan Pascanu},
  eprint        = {2504.02732},
  primaryclass  = {cs.CL},
  title         = {Why do LLMs attend to the first token?},
  url           = {https://arxiv.org/abs/2504.02732},
  year          = {2025}
}

@inproceedings{gabbay,
  address   = {New York, NY, USA},
  author    = {Gabbay, Dov and Pnueli, Amir and Shelah, Saharon and Stavi, Jonathan},
  booktitle = {Proceedings of the 7th ACM SIGPLAN-SIGACT Symposium on Principles of Programming Languages},
  doi       = {10.1145/567446.567462},
  isbn      = {0897910117},
  location  = {Las Vegas, Nevada},
  numpages  = {11},
  pages     = {163–173},
  publisher = {Association for Computing Machinery},
  series    = {POPL '80},
  title     = {On the temporal analysis of fairness},
  url       = {https://doi.org/10.1145/567446.567462},
  year      = {1980}
}

@misc{gemmateam2024gemmaopenmodelsbased,
  archiveprefix = {arXiv},
  author        = {Gemma Team and Thomas Mesnard and Cassidy Hardin and Robert Dadashi and Surya Bhupatiraju and Shreya Pathak and Laurent Sifre and Morgane Rivière and Mihir Sanjay Kale and Juliette Love and Pouya Tafti and Léonard Hussenot and Pier Giuseppe Sessa and Aakanksha Chowdhery and Adam Roberts and Aditya Barua and Alex Botev and Alex Castro-Ros and Ambrose Slone and Amélie Héliou and Andrea Tacchetti and Anna Bulanova and Antonia Paterson and Beth Tsai and Bobak Shahriari and Charline Le Lan and Christopher A. Choquette-Choo and Clément Crepy and Daniel Cer and Daphne Ippolito and David Reid and Elena Buchatskaya and Eric Ni and Eric Noland and Geng Yan and George Tucker and George-Christian Muraru and Grigory Rozhdestvenskiy and Henryk Michalewski and Ian Tenney and Ivan Grishchenko and Jacob Austin and James Keeling and Jane Labanowski and Jean-Baptiste Lespiau and Jeff Stanway and Jenny Brennan and Jeremy Chen and Johan Ferret and Justin Chiu and Justin Mao-Jones and Katherine Lee and Kathy Yu and Katie Millican and Lars Lowe Sjoesund and Lisa Lee and Lucas Dixon and Machel Reid and Maciej Mikuła and Mateo Wirth and Michael Sharman and Nikolai Chinaev and Nithum Thain and Olivier Bachem and Oscar Chang and Oscar Wahltinez and Paige Bailey and Paul Michel and Petko Yotov and Rahma Chaabouni and Ramona Comanescu and Reena Jana and Rohan Anil and Ross McIlroy and Ruibo Liu and Ryan Mullins and Samuel L Smith and Sebastian Borgeaud and Sertan Girgin and Sholto Douglas and Shree Pandya and Siamak Shakeri and Soham De and Ted Klimenko and Tom Hennigan and Vlad Feinberg and Wojciech Stokowiec and Yu-hui Chen and Zafarali Ahmed and Zhitao Gong and Tris Warkentin and Ludovic Peran and Minh Giang and Clément Farabet and Oriol Vinyals and Jeff Dean and Koray Kavukcuoglu and Demis Hassabis and Zoubin Ghahramani and Douglas Eck and Joelle Barral and Fernando Pereira and Eli Collins and Armand Joulin and Noah Fiedel and Evan Senter and Alek Andreev and Kathleen Kenealy},
  eprint        = {2403.08295},
  primaryclass  = {cs.CL},
  title         = {Gemma: Open Models Based on Gemini Research and Technology},
  url           = {https://arxiv.org/abs/2403.08295},
  year          = {2024}
}

@misc{delétang2023neuralnetworkschomskyhierarchy,
  archiveprefix = {arXiv},
  author        = {Grégoire Delétang and Anian Ruoss and Jordi Grau-Moya and Tim Genewein and Li Kevin Wenliang and Elliot Catt and Chris Cundy and Marcus Hutter and Shane Legg and Joel Veness and Pedro A. Ortega},
  eprint        = {2207.02098},
  primaryclass  = {cs.LG},
  title         = {Neural Networks and the Chomsky Hierarchy},
  url           = {https://arxiv.org/abs/2207.02098},
  year          = {2023}
}

@misc{xiao2024efficientstreaminglanguagemodels,
  archiveprefix = {arXiv},
  author        = {Guangxuan Xiao and Yuandong Tian and Beidi Chen and Song Han and Mike Lewis},
  eprint        = {2309.17453},
  primaryclass  = {cs.CL},
  title         = {Efficient Streaming Language Models with Attention Sinks},
  url           = {https://arxiv.org/abs/2309.17453},
  year          = {2024}
}

@phdthesis{Kamp1968-KAMTLA,
  author = {Hans Kamp},
  school = {{UCLA}},
  title  = {Tense Logic and the Theory of Linear Order},
  url    = {https://www.proquest.com/openview/408039eb4ed228dc4cba3fe7e1774163/1?pq-origsite=gscholar&cbl=18750&diss=y},
  year   = {1968}
}

@article{hao-etal-2022-formal,
  address   = {Cambridge, MA},
  author    = {Hao, Yiding  and
               Angluin, Dana  and
               Frank, Robert},
  doi       = {10.1162/tacl_a_00490},
  editor    = {Roark, Brian  and
               Nenkova, Ani},
  journal   = {Transactions of the Association for Computational Linguistics},
  pages     = {800--810},
  publisher = {MIT Press},
  title     = {Formal Language Recognition by Hard Attention Transformers: Perspectives from Circuit Complexity},
  url       = {https://aclanthology.org/2022.tacl-1.46/},
  volume    = {10},
  year      = {2022}
}

@article{BRZOZOWSKI198032,
  author  = {Janusz Antoni Brzozowski and Faith Ellen},
  doi     = {https://doi.org/10.1016/0022-0000(80)90003-3},
  issn    = {0022-0000},
  journal = {Journal of Computer and System Sciences},
  number  = {1},
  pages   = {32-49},
  title   = {Languages of {R}-trivial monoids},
  url     = {https://www.sciencedirect.com/science/article/pii/0022000080900033},
  volume  = {20},
  year    = {1980}
}

@inproceedings{jerad-etal-2025-unique,
  address   = {Vienna, Austria},
  author    = {Jerad, Selim  and
               Svete, Anej  and
               Li, Jiaoda  and
               Cotterell, Ryan},
  booktitle = {Proceedings of the 63rd Annual Meeting of the Association for Computational Linguistics (Volume 2: Short Papers)},
  doi       = {10.18653/v1/2025.acl-short.76},
  editor    = {Che, Wanxiang  and
               Nabende, Joyce  and
               Shutova, Ekaterina  and
               Pilehvar, Mohammad Taher},
  isbn      = {979-8-89176-252-7},
  month     = jul,
  pages     = {977--996},
  publisher = {Association for Computational Linguistics},
  title     = {Unique Hard Attention: A Tale of Two Sides},
  url       = {https://aclanthology.org/2025.acl-short.76/},
  year      = {2025}
}

@misc{su2023roformerenhancedtransformerrotary,
  archiveprefix = {arXiv},
  author        = {Jianlin Su and Yu Lu and Shengfeng Pan and Ahmed Murtadha and Bo Wen and Yunfeng Liu},
  eprint        = {2104.09864},
  primaryclass  = {cs.CL},
  title         = {RoFormer: Enhanced Transformer with Rotary Position Embedding},
  url           = {https://arxiv.org/abs/2104.09864},
  year          = {2023}
}

@misc{li2026characterizingexpressivitylocalattention,
  archiveprefix = {arXiv},
  author        = {Jiaoda Li and Ryan Cotterell},
  eprint        = {2605.00768},
  primaryclass  = {cs.CL},
  title         = {Characterizing the Expressivity of Local Attention in Transformers},
  url           = {https://arxiv.org/abs/2605.00768},
  year          = {2026}
}

@misc{li2025characterizingexpressivityfixedprecisiontransformer,
  archiveprefix = {arXiv},
  author        = {Jiaoda Li and Ryan Cotterell},
  eprint        = {2505.23623},
  primaryclass  = {cs.CL},
  title         = {Characterizing the Expressivity of Fixed-Precision Transformer Language Models},
  url           = {https://arxiv.org/abs/2505.23623},
  year          = {2025}
}

@misc{bai2023qwentechnicalreport,
  archiveprefix = {arXiv},
  author        = {Jinze Bai and Shuai Bai and Yunfei Chu and Zeyu Cui and Kai Dang and Xiaodong Deng and Yang Fan and Wenbin Ge and Yu Han and Fei Huang and Binyuan Hui and Luo Ji and Mei Li and Junyang Lin and Runji Lin and Dayiheng Liu and Gao Liu and Chengqiang Lu and Keming Lu and Jianxin Ma and Rui Men and Xingzhang Ren and Xuancheng Ren and Chuanqi Tan and Sinan Tan and Jianhong Tu and Peng Wang and Shijie Wang and Wei Wang and Shengguang Wu and Benfeng Xu and Jin Xu and An Yang and Hao Yang and Jian Yang and Shusheng Yang and Yang Yao and Bowen Yu and Hongyi Yuan and Zheng Yuan and Jianwei Zhang and Xingxuan Zhang and Yichang Zhang and Zhenru Zhang and Chang Zhou and Jingren Zhou and Xiaohuan Zhou and Tianhang Zhu},
  eprint        = {2309.16609},
  primaryclass  = {cs.CL},
  title         = {Qwen Technical Report},
  url           = {https://arxiv.org/abs/2309.16609},
  year          = {2023}
}

@misc{dartois2015addingmodularpredicatesfirstorder,
  archiveprefix = {arXiv},
  author        = {Luc Dartois and Charles Paperman},
  eprint        = {1401.6576},
  primaryclass  = {cs.LO},
  title         = {Adding modular predicates to first-order fragments},
  url           = {https://arxiv.org/abs/1401.6576},
  year          = {2015}
}

@book{mcnaughton1971counter,
  author    = {McNaughton, Robert and Papert, Seymour},
  isbn      = {9780262130769},
  lccn      = {71153294},
  publisher = {M.I.T. Press},
  series    = {M.I.T. Press research monographs},
  title     = {Counter-Free Automata},
  url       = {https://mitpress.mit.edu/9780262130769/counter-free-automata/},
  year      = {1971}
}

@misc{mohri2026rationaltransductors,
  archiveprefix = {arXiv},
  author        = {Mehryar Mohri},
  eprint        = {2602.07599},
  primaryclass  = {cs.LG},
  title         = {Rational Transductors},
  url           = {https://arxiv.org/abs/2602.07599},
  year          = {2026}
}

@misc{khan2026fractionalrotationpotentialinvestigating,
  archiveprefix = {arXiv},
  author        = {Mohammad Aflah Khan and Krishna P. Gummadi and Manish Gupta and Abhilasha Ravichander},
  eprint        = {2603.11611},
  primaryclass  = {cs.LG},
  title         = {Fractional Rotation, Full Potential? Investigating Performance and Convergence of Partial RoPE},
  url           = {https://arxiv.org/abs/2603.11611},
  year          = {2026}
}

@inproceedings{neishi-yoshinaga-2019-relation,
  address   = {Hong Kong, China},
  author    = {Neishi, Masato  and
               Yoshinaga, Naoki},
  booktitle = {Proceedings of the 23rd Conference on Computational Natural Language Learning (CoNLL)},
  doi       = {10.18653/v1/K19-1031},
  editor    = {Bansal, Mohit  and
               Villavicencio, Aline},
  month     = nov,
  pages     = {328--338},
  publisher = {Association for Computational Linguistics},
  title     = {On the Relation between Position Information and Sentence Length in Neural Machine Translation},
  url       = {https://aclanthology.org/K19-1031/},
  year      = {2019}
}

@inproceedings{press2022train,
  author    = {Ofir Press and Noah Smith and Mike Lewis},
  booktitle = {International Conference on Learning Representations},
  title     = {Train Short, Test Long: Attention with Linear Biases Enables Input Length Extrapolation},
  url       = {https://openreview.net/forum?id=R8sQPpGCv0},
  year      = {2022}
}

@inproceedings{rosendahl-etal-2019-analysis,
  address   = {Hong Kong},
  author    = {Rosendahl, Jan  and
               Tran, Viet Anh Khoa  and
               Wang, Weiyue  and
               Ney, Hermann},
  booktitle = {Proceedings of the 16th International Conference on Spoken Language Translation},
  editor    = {Niehues, Jan  and
               Cattoni, Rolando  and
               St{\"u}ker, Sebastian  and
               Negri, Matteo  and
               Turchi, Marco  and
               Ha, Thanh-Le  and
               Salesky, Elizabeth  and
               Sanabria, Ramon  and
               Barrault, Loic  and
               Specia, Lucia  and
               Federico, Marcello},
  month     = nov # { 2-3},
  publisher = {Association for Computational Linguistics},
  title     = {Analysis of Positional Encodings for Neural Machine Translation},
  url       = {https://aclanthology.org/2019.iwslt-1.20/},
  year      = {2019}
}

@misc{huo2026periodicropeinfinitecontext,
  archiveprefix = {arXiv},
  author        = {Simin Huo},
  eprint        = {2605.27980},
  primaryclass  = {cs.CL},
  title         = {Periodic RoPE for Infinite Context LLMs},
  url           = {https://arxiv.org/abs/2605.27980},
  year          = {2026}
}

@inproceedings{svete-etal-2024-transformers,
  address   = {Miami, Florida, USA},
  author    = {Svete, Anej  and
               Borenstein, Nadav  and
               Zhou, Mike  and
               Augenstein, Isabelle  and
               Cotterell, Ryan},
  booktitle = {Proceedings of the 2024 Conference on Empirical Methods in Natural Language Processing},
  doi       = {10.18653/v1/2024.emnlp-main.550},
  editor    = {Al-Onaizan, Yaser  and
               Bansal, Mohit  and
               Chen, Yun-Nung},
  month     = nov,
  pages     = {9851--9867},
  publisher = {Association for Computational Linguistics},
  title     = {Can Transformers Learn $n$-gram Language Models?},
  url       = {https://aclanthology.org/2024.emnlp-main.550/},
  year      = {2024}
}

@inproceedings{svete-cotterell-2024-transformers,
  address   = {Mexico City, Mexico},
  author    = {Svete, Anej  and
               Cotterell, Ryan},
  booktitle = {Proceedings of the 2024 Conference of the North American Chapter of the Association for Computational Linguistics: Human Language Technologies (Volume 1: Long Papers)},
  doi       = {10.18653/v1/2024.naacl-long.381},
  editor    = {Duh, Kevin  and
               Gomez, Helena  and
               Bethard, Steven},
  month     = jun,
  pages     = {6845--6881},
  publisher = {Association for Computational Linguistics},
  title     = {Transformers Can Represent $n$-gram Language Models},
  url       = {https://aclanthology.org/2024.naacl-long.381/},
  year      = {2024}
}

@misc{olmo2025olmo3,
  archiveprefix = {arXiv},
  author        = {Team Olmo and : and Allyson Ettinger and Amanda Bertsch and Bailey Kuehl and David Graham and David Heineman and Dirk Groeneveld and Faeze Brahman and Finbarr Timbers and Hamish Ivison and Jacob Morrison and Jake Poznanski and Kyle Lo and Luca Soldaini and Matt Jordan and Mayee Chen and Michael Noukhovitch and Nathan Lambert and Pete Walsh and Pradeep Dasigi and Robert Berry and Saumya Malik and Saurabh Shah and Scott Geng and Shane Arora and Shashank Gupta and Taira Anderson and Teng Xiao and Tyler Murray and Tyler Romero and Victoria Graf and Akari Asai and Akshita Bhagia and Alexander Wettig and Alisa Liu and Aman Rangapur and Chloe Anastasiades and Costa Huang and Dustin Schwenk and Harsh Trivedi and Ian Magnusson and Jaron Lochner and Jiacheng Liu and Lester James V. Miranda and Maarten Sap and Malia Morgan and Michael Schmitz and Michal Guerquin and Michael Wilson and Regan Huff and Ronan Le Bras and Rui Xin and Rulin Shao and Sam Skjonsberg and Shannon Zejiang Shen and Shuyue Stella Li and Tucker Wilde and Valentina Pyatkin and Will Merrill and Yapei Chang and Yuling Gu and Zhiyuan Zeng and Ashish Sabharwal and Luke Zettlemoyer and Pang Wei Koh and Ali Farhadi and Noah A. Smith and Hannaneh Hajishirzi},
  eprint        = {2512.13961},
  primaryclass  = {cs.CL},
  title         = {Olmo 3},
  url           = {https://arxiv.org/abs/2512.13961},
  year          = {2025}
}

@book{walters1982introduction,
  author    = {Walters, P.},
  isbn      = {9783540905998},
  lccn      = {81009319},
  publisher = {Springer-Verlag},
  series    = {Graduate texts in mathematics},
  title     = {An Introduction to Ergodic Theory},
  url       = {https://books.google.com/books?id=GCH_wAEACAAJ},
  year      = {1982}
}

@inproceedings{wang-etal-2024-resonance,
  address   = {Bangkok, Thailand},
  author    = {Wang, Suyuchen  and
               Kobyzev, Ivan  and
               Lu, Peng  and
               Rezagholizadeh, Mehdi  and
               Liu, Bang},
  booktitle = {Findings of the Association for Computational Linguistics: ACL 2024},
  doi       = {10.18653/v1/2024.findings-acl.32},
  editor    = {Ku, Lun-Wei  and
               Martins, Andre  and
               Srikumar, Vivek},
  month     = aug,
  pages     = {586--598},
  publisher = {Association for Computational Linguistics},
  title     = {Resonance {R}o{PE}: Improving Context Length Generalization of Large Language Models},
  url       = {https://aclanthology.org/2024.findings-acl.32/},
  year      = {2024}
}

@misc{merrill2026linearrnnsparallelizable,
  archiveprefix = {arXiv},
  author        = {William Merrill and Hongjian Jiang and Yanhong Li and Anthony Lin and Ashish Sabharwal},
  eprint        = {2603.03612},
  primaryclass  = {cs.LG},
  title         = {Why Are Linear RNNs More Parallelizable?},
  url           = {https://arxiv.org/abs/2603.03612},
  year          = {2026}
}

@misc{merrill2026olmohybridtheorypractice,
  archiveprefix = {arXiv},
  author        = {William Merrill and Yanhong Li and Tyler Romero and Anej Svete and Caia Costello and Pradeep Dasigi and Dirk Groeneveld and David Heineman and Bailey Kuehl and Nathan Lambert and Chuan Li and Kyle Lo and Saumya Malik and DJ Matusz and Benjamin Minixhofer and Jacob Morrison and Luca Soldaini and Finbarr Timbers and Pete Walsh and Noah A. Smith and Hannaneh Hajishirzi and Ashish Sabharwal},
  eprint        = {2604.03444},
  primaryclass  = {cs.LG},
  title         = {Olmo Hybrid: From Theory to Practice and Back},
  url           = {https://arxiv.org/abs/2604.03444},
  year          = {2026}
}

@misc{liu2024scalinglawsropebasedextrapolation,
  archiveprefix = {arXiv},
  author        = {Xiaoran Liu and Hang Yan and Shuo Zhang and Chenxin An and Xipeng Qiu and Dahua Lin},
  eprint        = {2310.05209},
  primaryclass  = {cs.CL},
  title         = {Scaling Laws of RoPE-based Extrapolation},
  url           = {https://arxiv.org/abs/2310.05209},
  year          = {2024}
}

@misc{men2024baseropeboundscontext,
  archiveprefix = {arXiv},
  author        = {Xin Men and Mingyu Xu and Bingning Wang and Qingyu Zhang and Hongyu Lin and Xianpei Han and Weipeng Chen},
  eprint        = {2405.14591},
  primaryclass  = {cs.CL},
  title         = {Base of RoPE Bounds Context Length},
  url           = {https://arxiv.org/abs/2405.14591},
  year          = {2024}
}

@inproceedings{yangropetonope,
  author    = {Yang, Bowen and Venkitesh, Bharat and Talupuru, Dwaraknath Gnaneshwar and Lin, Hangyu and Cairuz, David and Blunsom, Phil and Locatelli, Acyr},
  booktitle = {Advances in Neural Information Processing Systems},
  editor    = {D. Belgrave and C. Zhang and H. Lin and R. Pascanu and P. Koniusz and M. Ghassemi and N. Chen},
  pages     = {64133--64157},
  publisher = {Curran Associates, Inc.},
  title     = {Rope to Nope and Back Again: A New Hybrid Attention Strategy},
  url       = {https://proceedings.neurips.cc/paper_files/paper/2025/file/5c9ab393551b7a39b4c02d88fe5e7e69-Paper-Conference.pdf},
  volume    = {38},
  year      = {2025}
}

@article{ZALCSTEIN1972151,
  author  = {Yechezkel Zalcstein},
  doi     = {https://doi.org/10.1016/S0022-0000(72)80020-5},
  issn    = {0022-0000},
  journal = {Journal of Computer and System Sciences},
  number  = {2},
  pages   = {151-167},
  title   = {Locally testable languages},
  url     = {https://www.sciencedirect.com/science/article/pii/S0022000072800205},
  volume  = {6},
  year    = {1972}
}

@misc{gelberg2025extendingcontextpretrainedllms,
  archiveprefix = {arXiv},
  author        = {Yoav Gelberg and Koshi Eguchi and Takuya Akiba and Edoardo Cetin},
  eprint        = {2512.12167},
  primaryclass  = {cs.CL},
  title         = {Extending the Context of Pretrained LLMs by Dropping Their Positional Embeddings},
  url           = {https://arxiv.org/abs/2512.12167},
  year          = {2025}
}

@misc{du2026ropedistinguishespositionstokens,
  archiveprefix = {arXiv},
  author        = {Yufeng Du and Phillip Harris and Minyang Tian and Eliu A Huerta and Srikanth Ronanki and Subendhu Rongali and Aram Galstyan and Hao Peng},
  eprint        = {2605.15514},
  primaryclass  = {cs.CL},
  title         = {RoPE Distinguishes Neither Positions Nor Tokens in Long Contexts, Provably},
  url           = {https://arxiv.org/abs/2605.15514},
  year          = {2026}
}
\bibliographystyle{colm2026_conference}

\clearpage
\appendix
\onecolumn

\section*{Appendix Contents}
\begingroup
\small
\noindent\textbf{\hyperref[app:flt]{A Preliminaries: Formal Language Theory}}\dotfill\pageref{app:flt}\par
\noindent\hspace{1.5em}\hyperref[app:ltl]{A.1 Linear Temporal Logic}\dotfill\pageref{app:ltl}\par
\noindent\hspace{1.5em}\hyperref[app:first-order-logic]{A.2 First-Order Logic}\dotfill\pageref{app:first-order-logic}\par
\noindent\hspace{1.5em}\hyperref[subsec:regex]{A.3 Regular Expressions}\dotfill\pageref{subsec:regex}\par
\noindent\hspace{1.5em}\hyperref[app:automata]{A.4 Automata}\dotfill\pageref{app:automata}\par
\noindent\hspace{1.5em}\hyperref[app:syntactic-monoid]{A.5 The Syntactic Monoid}\dotfill\pageref{app:syntactic-monoid}\par
\noindent\hspace{1.5em}\hyperref[app:formalism-equivalences]{A.6 Equivalences between Formalisms}\dotfill\pageref{app:formalism-equivalences}\par
\noindent\textbf{\hyperref[app:pfo2mod]{B Proofs: Characterizations of $\ptl[\modpred]$}}\dotfill\pageref{app:pfo2mod}\par
\noindent\hspace{1.5em}\hyperref[app:inexpressibility]{B.1 Inexpressibility Results}\dotfill\pageref{app:inexpressibility}\par
\noindent\textbf{\hyperref[app:periodic-proofs]{C Proofs: Periodic RoPE}}\dotfill\pageref{app:periodic-proofs}\par

\noindent\hspace{1.5em}\hyperref[app:schedules-proofs]{C.1 Component-Periodic Schedules}\dotfill\pageref{app:schedules-proofs}\par
\noindent\hspace{1.5em}\hyperref[app:upperperiodic-proofs]{C.2 Upper Bounding Periodic RoPE}\dotfill\pageref{app:upperperiodic-proofs}\par
\noindent\hspace{1.5em}\hyperref[app:lowerperiodic-proofs]{C.3 Lower Bounding Periodic RoPE}\dotfill\pageref{app:lowerperiodic-proofs}\par
\noindent\hspace{1.5em}\hyperref[app:proofsmatsin]{C.4 Characterizing Periodic SiPE}\dotfill\pageref{app:proofsmatsin}\par
\noindent\textbf{\hyperref[app:nonperiodic-proofs]{D Proofs: Non-Periodic RoPE}}\dotfill\pageref{app:nonperiodic-proofs}\par
\noindent\textbf{\hyperref[app:experiments]{E Experiments}}\dotfill\pageref{app:experiments}\par
\noindent\hspace{1.5em}\hyperref[app:experiment-languages]{E.1 Languages}\dotfill\pageref{app:experiment-languages}\par
\noindent\hspace{1.5em}\hyperref[app:experimental-setup]{E.2 Experimental Setup}\dotfill\pageref{app:experimental-setup}\par
\noindent\hspace{1.5em}\hyperref[app:periodic-settings]{E.3 Periodic-Construction Settings}\dotfill\pageref{app:periodic-settings}\par
\noindent\hspace{1.5em}\hyperref[app:nonperiodic-settings]{E.4 Conventional Non-Periodic Settings}\dotfill\pageref{app:nonperiodic-settings}\par
\noindent\hspace{1.5em}\hyperref[app:detailed-results]{E.5 Detailed Results}\dotfill\pageref{app:detailed-results}\par
\endgroup
\vspace{0.75em}

\section{Preliminaries: Formal Language Theory}\label{app:flt}
We introduced formal languages and linear temporal logic in \cref{sec:prelim}.
We now provide the detailed \ltlAcr{} semantics, and introduce the tools that provide a richer understanding of fragments of \ltlAcr{}.
We summarize in \cref{tab:fltequivalence} known characterizations of relevant fragments of \ltlAcr{}.
In \cref{app:pfo2mod}, we give a proof of the exact characterization of $\ptl[\modpred]$ in terms of polynomials, first-order logic, automata and the syntactic monoid.
Our proofs are based on similar characterizations of $\ptl$ and $\pftl[\modpred]$ \citep{li2025characterizingexpressivityfixedprecisiontransformer, dartois_et_al:LIPIcs.STACS.2013.329}.

\subsection{Linear Temporal Logic}\label{app:ltl}
The full semantics for $\ltlAcr{}$ are the following.
\begin{itemize}[nosep,noitemsep,leftmargin=*]
    \item $\str,\idxi\models \atom_\syma$ $\iff$ $\sym_\idxi=\syma$;
    \item $\str,\idxi\models \tlf_1 \lor \tlf_2$ $\iff$ $\str,\idxi\models \tlf_1 \lor \str,\idxi\models \tlf_2$;
    \item $\str,\idxi\models \tlf_1 \land \tlf_2$ $\iff$ $\str,\idxi\models \tlf_1 \land \str,\idxi\models \tlf_2$;
    \item $\str,\idxi\models \neg \tlf$ $\iff$ $\str,\idxi \not\models \tlf$;
    \item $\str,\idxi\models \past \tlf$ $\iff$ $\exists \idxj < \idxi: \str, \idxj\models \tlf$;
    \item $\str,\idxi\models \future \tlf$ $\iff$ $\exists \idxj > \idxi: \str, \idxj\models \tlf$;
    \item $\str,\idxi\models \tlf_1 \since \tlf_2$ $\iff$ $\exists \idxj<\idxi: \str,\idxj\models \tlf_2$ and $\str,k\models \tlf_1$ for all $k$ with $\idxj<k<\idxi$;
    \item $\str,\idxi\models \tlf_1 \until \tlf_2$ $\iff$ $\exists \idxj>\idxi: \str,\idxj\models \tlf_2$ and $\str,k\models \tlf_1$ for all $k$ with $\idxi<k<\idxj$;
    \item $\str,\idxi\models \yesterday \tlf$ $\iff$ $\str,\idxi-1\models \tlf$ for $\idxi > 1$;
\end{itemize}
To define string acceptance, we denote by $\strlen+1$ a position outside of the string and define
\begin{equation}
    \label{eg:tl_acc}
    \str\models\tlf \Leftrightarrow \str,\strlen+1\models \tlf.
\end{equation} 
For a set of temporal operators $\operators$, we denote by $\ltlAcr[\operators]$ the corresponding fragment of \ltlAcr{}. 
We say a formula $\tlf$ is a $\ltlAcr[\operators]$ formula if it can be written in \ltlAcr{} with only the Boolean connectives and operators from $\operators$.
$\since$ subsumes the left-context-dependent operators $\past$ and $\yesterday$ \citep{gabbay}. 

\begin{testexample}
    The language $\syma\kleene{\alphabet}$ of strings starting with a specific symbol $\syma$ can be described by a $\ptl$ formula:
    \begin{equation}
        \past \left(\atom_\syma \land \neg \past \true\right)
    \end{equation}

    The language $\kleene{\alphabet}\syma$ of strings ending with a specific symbol $\syma$ can be described by a $\ltlAcr[\yesterday]$ formula:
    \begin{equation}
        \yesterday \atom_\syma
    \end{equation}
    The language $\syma\kleene{\alphabet}\syma$ can be described by a $\ltlAcr[\since]$ definable formula because $\since$ can express both $\past$ and $\yesterday$.
\end{testexample}

\ltlAcr{} has become a key tool in characterizing the expressivity of fixed-precision transformers.
Softmax transformers, leftmost hard-attention transformers and average hard-attention transformers are equivalent to $\ptl$ \citep{li2025characterizingexpressivityfixedprecisiontransformer, jerad-etal-2025-unique}, while rightmost hard-attention transformers are equivalent to full $\stl$ \citep{yang2024maskedhardattentiontransformersrecognize}.

As formalized in \cref{sec:prelim}, $\ltlAcr{}$ can be extended with modular predicates, denoted by $\ltlAcr[\operators,\modpred]$ for a set of operators $\operators$.

\subsection{First-Order Logic}\label{app:first-order-logic}

First-order logic ($\fo$) can also be used to characterize subregular classes.
$\fo$'s building blocks are \defn{atomic} formulas, denoted by the unary predicate $\atom_\syma$ for all $\syma \in \alphabet$ and the binary predicate $<$ which determines the order between string positions.
We have $\atom_\syma(\idxi) = \true$ if and only if $\sym_\idxi = \syma$.
Building on these atomic formulas, $\fo$ operates on \defn{variables}, which denote positions of the input string. 
In $\fo$, a variable is either \defn{free} or \defn{bounded} by a quantifier. 
For example, in the existential quantification $\exists \idxi < \idxj$, $\idxi$ is a bounded variable while $\idxj$ is a free variable.
$\fo$ formulas are then constructed inductively as follows.
\begin{enumerate}[topsep=0pt, noitemsep,label=(\arabic*)]
    \item Every atomic formula is an $\fo$ formula;
    \item A Boolean combination of $\fo$ formulas is an $\fo$ formula;
    \item If $\aformula(\idxi \ldots)$ is an $\fo$ formula, then so is $\exists \idxi \colon \phi(\idxi \ldots)$\footnote{The universal quantifier $\forall$ is constructed by applying negation to the existential quantifier.}.
\end{enumerate}
Let $\aformula$ be a formula without any free variables. 
We say $\str \models \aformula$ if $\aformula$ is satisfied when evaluating it on $\str$.
$\aformula$ induces a language, which we denote by $\lang(\aformula)\defeq\set{\str\in\kleene{\alphabet} \mid \str \models \aformula}$.

\paragraph{Fragments of $\fo$:}\label{app:fo-fragments} We denote by $\fotwo$ the fragment of $\fo$ restricted to using at most two distinct variables in any sub-formula.
We denote by $\pfo$ the past fragment of $\fotwo$ in which formulas are restricted so that whenever a free variable $\idxi$ is present, every quantifier must be of the form $\exists \idxj \colon \idxj < \idxi$ or $\forall \idxj \colon \idxj < \idxi$.
In other words, only past positions relative to the free variable are accessed.
$\pfo$ formulas can be inductively written as follows.
\begin{enumerate}[topsep=0pt, noitemsep,label=(\arabic*)]
    \item Every atomic formula is a $\pfo$ formula;
    \item A Boolean combination of $\pfo$ formulas is a $\pfo$ formula if it does not contain more than two variables;
    \item If $\aformula(\idxi,\idxj)$ is a $\pfo$ formula with two free variables $\idxi, \idxj$, $\exists \idxi < \idxj \colon \aformula(\idxi,\idxj)$ and $\exists \idxj < \idxi \colon \aformula(\idxi,\idxj)$ are $\pfo$ formulas;
    \item If $\aformula(\idxi)$ is a $\pfo$ formula with one free variable $\idxi$, $\exists \idxi < \idxj \colon \aformula(\idxi)$ is a $\pfo$ formula;
    \item If $\aformula(\idxi)$ is a $\pfo$ formula with one free variable $\idxi$, $\exists \idxi \colon \aformula(\idxi)$ is a $\pfo$ formula.
\end{enumerate}

Over finite strings, $\pfo$ and $\ptl$ define the same languages \citep{li2025characterizingexpressivityfixedprecisiontransformer}, while $\fo$ is equivalent to the full star-free class, which is $\stl$ \citep{Kamp1968-KAMTLA, mcnaughton1971counter, gabbay}. 
We therefore use $\ptl$ in the main narrative and retain $\pfo$ when invoking first-order, automata-theoretic, or algebraic characterizations.

\paragraph{Modular predicates.}
Akin to $\ltlAcr{}$, $\fo$ can be extended with unary modular predicates, albeit with a slight modification to its \ltlAcr{} counterpart.
In \ltlAcr{}, formulas are evaluated at position $\strlen+1$, meaning $\ltlAcr{}$ can robustly make statements about the residue class of $\strlen$, the length of the string.
Because $\fo$ formulas are evaluated by ranging variables over the entire string, such an operator cannot be directly implemented with unary predicates.
Therefore, 0-ary predicates $\modpred^r_m$ are needed, where $\modpred^r_m$ is true if and only if the length of the input string $\strlen$ is congruent to $r$ modulo $m$.

Adding these modular predicates to first-order fragments gives $\pfo[\modpred]$, $\fotwo[\modpred]$, and $\fo[\modpred]$; as shown in \cref{subsec:ptl-mod}, $\ptl[\modpred]$ and $\pfo[\modpred]$ define the same class of languages.

\subsection{Regular Expressions} \label{subsec:regex}

\defn{Regular expressions} are a declarative formalism for describing languages, defined recursively:
\begin{enumerate*}[label=(\arabic*)]
    \item $\eps$ and each $\sym \in \alphabet$ is a regular expression;
    \item If $\alpha$ and $\beta$ are regular expressions, so are the union $\alpha + \beta$, concatenation $\alpha \beta$, complement $\reComplement{\alpha}$ and the Kleene closure $\kleene{\alpha}$.
\end{enumerate*}
A language is \defn{regular} if it can be described by a regular expression.
A regular language is \defn{star-free} if it can be described by a regular expression without the Kleene operator.

\begin{definition}
    A \defn{monomial} over an alphabet $\alphabet$ is a language described via a regular expression of the form:
    \begin{equation}
        \kleene{\alphabet_0} \sym_1 \kleene{\alphabet_1} \ldots \sym_\strlen \kleene{\alphabet_\strlen}
    \end{equation}
    Where $\alphabet_0, \alphabet_1 \ldots \subseteq \alphabet$ and $\sym_1, \sym_2 \ldots \in \alphabet$.
\end{definition}

\begin{definition}
    A \defn{left-deterministic} monomial is a monomial where for every $0 < \idxi \leq \strlen$, $\sym_\idxi \not \in \alphabet_{\idxi-1}$.
    A \defn{right-deterministic} monomial is a monomial where for every $0 < \idxi \leq \strlen$, $\sym_\idxi \not \in \alphabet_{\idxi}$.
    An \defn{unambiguous} monomial is a monomial that is either left-deterministic or right-deterministic.
\end{definition}
\begin{testexample}
    The language $\syma\kleene{\alphabet}$ of strings starting with the symbol $\syma$ is a left-deterministic monomial, since the marker $\syma$ does not appear in the preceding empty block ($\syma \notin \alphabet_0$).

    Symmetrically, the language $\kleene{\alphabet}\syma$ of strings ending with the symbol $\syma$ is a right-deterministic monomial.
    Both $\syma\kleene{\alphabet}$ and $\kleene{\alphabet}\syma$ are therefore unambiguous monomials.

    Their intersection $\syma\kleene{\alphabet}\cap\kleene{\alphabet}\syma=\syma\kleene{\alphabet}\syma$ is an unambiguous monomial.
\end{testexample}

A \defn{polynomial} is a finite union of monomials.
A left-deterministic (resp., right-deterministic, unambiguous) polynomial is a finite union of left-deterministic (resp., right-deterministic, unambiguous) monomials.

To adapt our analysis to $\rope$ transformers, we now introduce \emph{modular} polynomials \citep{dartois_et_al:LIPIcs.STACS.2013.329}.

\begin{definition}\label{def:modular-monomial}
    A monomial over $\alphabet$ is \defn{modular} with respect to some integer $\idxd$ if it can be written as:
    \begin{equation}
        \kleene{(\alphabet_{0,0} \alphabet_{0,1} \ldots \alphabet_{0,d-1})} \sym_1 \kleene{(\alphabet_{1,0} \alphabet_{1,1} \ldots \alphabet_{1,d-1})} \ldots \sym_\strlen \kleene{(\alphabet_{\strlen,0} \alphabet_{\strlen,1} \ldots \alphabet_{\strlen,d-1})}
    \end{equation}
    Where:
    \begin{itemize}
        \item Each marker $\sym_\idxi$ occurs at a position congruent to some $r_\idxi$ modulo $\idxd$;
        \item Each $\alphabet_{\idxi,\idxj}$ is some subset of $\alphabet$;
        \item For every $\idxi$ and every position $p$ lying in the $\idxi$'th block, the symbol at position $p$ belongs to $\alphabet_{\idxi,p \text{ mod } \idxd}$
    \end{itemize}
\end{definition}
Note that the standard monomial is recovered with $\idxd = 1$.
\begin{definition}\label{def:left-deterministic-modular}
    A \defn{left-deterministic modular} monomial is a modular monomial such that for every $0 < \idxi \leq \strlen$, $\sym_\idxi \notin \alphabet_{\idxi-1, r_\idxi}$.
\end{definition}

As before, we similarly define unambiguous modular monomials, unambiguous modular polynomials and left-deterministic modular polynomials.

\subsection{Automata}\label{app:automata}
The standard formalism for (sub)regular languages are finite automata---machines that transition within finitely many states.
\begin{definition}  \label{def:semiautomaton}
    A \defn{semiautomaton} $\wfsa$ is a 3-tuple $\satuple$ where $\alphabet$ is an alphabet, $\states$ is a finite set of \defn{states} and $\trans \colon \states \times \alphabet \rightarrow \states$ is a \defn{transition function}.
    We further define an \defn{initialized semiautomaton} as a semiautomaton with an initial state.
\end{definition}

\begin{definition}  \label{def:dfa}
    A \defn{deterministic finite automaton (DFA)} $\wfsa$ is a 5-tuple $\fsatuple$ where $\satuple$ is a semiautomaton, $\qinit \in \states$ is an initial state, and $\final \subseteq \states$ is a set of final states.
\end{definition}

We now introduce the automata that characterize the class of languages described by $\pfo$.
Intuitively, because quantifiers in $\pfo$ can only look \emph{backward}, they can only accumulate information about the past.
We now introduce a class of automata that mirror this behavior: Their states
are partially ordered and transitions may only move upward in this order.
Consequently, reading a symbol can only refine what the automaton knows about
the prefix read so far---no information can be discarded, and no state can be
revisited.

\begin{definition}
    \label{def:p.o}
    Let $\transclosure \colon \states \times \kleene{\alphabet} \rightarrow \states$ be the \defn{
    transitive closure} of $\trans$, defined as
    \begin{subequations}
        \begin{align}
            \transclosure(\stateq, \sym) &= \trans(\stateq, \sym), \; \text{for } \sym \in \alphabet\\
            \transclosure(\stateq, \sym_{1}\cdots\sym_{\strlen}) &= \trans(\transclosure(\stateq,\sym_1\cdots\sym_{\strlen-1}), \sym_{\strlen})
        \end{align}
    \end{subequations}
    with $\transclosure(\stateq, \eps) = \stateq$ for any $\stateq \in \states$.
    A \defn{partially ordered DFA (\podfaAcr{})} is a DFA $\wfsa = \dfatuple$ where there is a partial order relation $\relation$ on $\states$ defined as $\stateq$ $\relation$ $\statep$ if and only if $\transclosure(\stateq, \str) = \statep$ for some string $\str \in \kleene{\alphabet}$.
\end{definition}

As an example, the language of strings that contain the symbol $\syma$ over any alphabet can be modeled by the following $\podfaAcr$.
\begin{figure}[h]
    \centering
    \begin{tikzpicture}
        \node[state, initial] (q0) { $\stateq_0$ };
        \node[state, accepting] (q1) [right=of q0] { $\stateq_1$ };
        \draw[transition]
        (q0) edge[auto, loop above] node{$\symb$} (q0)
        (q0) edge[auto, bend left] node{$\syma$} (q1)
        (q1) edge[auto, loop above] node{$\syma,\symb$} (q1);
    \end{tikzpicture}
    \caption{$\podfaAcr$ for $\kleene{\alphabet}\syma\kleene{\alphabet}$ over $\alphabet = \set{\syma,\symb}$. The partial order $\stateq_0 \relation \stateq_1$ holds since $\stateq_1$ is reachable from $\stateq_0$ but not vice versa.}
    \label{fig:podfa-contains-a}
\end{figure}

To link automata with modular predicates, we now introduce the $\idxk$-automaton of some given automaton.
In first-order logic with modular predicates, strings can be evaluated to the same value \emph{up to a modulus}.
For some modulus $\idxk$, we therefore consider $\idxk$-automata where strings are read as blocks of $\idxk$ symbols.
\begin{definition}  \label{def:k-automaton}
    Let $\automaton = \dfatuple$ be a DFA.
    Let $\idxk$ be an integer.
    We define $\automaton_\idxk \defeq (\alphabet^\idxk, \states, \qinit, \final, \trans^\idxk)$ as the \defn{$\idxk$-automaton} of $\automaton$ where:
    \begin{itemize}
        \item $\alphabet^{\idxk} \defeq \set{\str \mid |\str|=\idxk \text{ and } \str \in \kleene{\alphabet}}$
        \item $\trans^\idxk(\stateq, \sym_1 \sym_2 \ldots \sym_\idxk) \defeq \trans(\ldots \trans(\trans(\stateq, \sym_1), \sym_2), \sym_\idxk)$.
    \end{itemize}
\end{definition}
We denote by \podfaAcr$_\idxk$ the set of automata such that their $\idxk$-automaton are \podfaAcr{}s.
As an example, $\kleene{(\syma \symb)}$ is the language of an automaton $\automaton$ in $\podfaAcr_2$, as seen in \cref{fig:ab_star_automata}.
\begin{figure}[h]
    \centering
    \begin{minipage}{0.45\textwidth}
        \centering
        \begin{tikzpicture}[node distance=2cm, scale=0.85, transform shape]
            \node[state, initial left, accepting] (q0) {$\stateq_0$};
            \node[state] (q1) [right=of q0] {$\stateq_1$};
            \node[state] (rej) [below=1.5cm of $(q0)!0.5!(q1)$] {Rej};

            \draw[transition]
            (q0) edge[auto, bend left=20] node{$\syma$} (q1)
            (q1) edge[auto, bend left=20] node{$\symb$} (q0)
            (q0) edge[auto] node[left]{$\symb$} (rej)
            (q1) edge[auto, swap] node[right]{$\syma$} (rej)
            (rej) edge[loop below, looseness=8] node{$\syma,\symb$} (rej);
        \end{tikzpicture}
    \end{minipage}
    \hfill
    \begin{minipage}{0.45\textwidth}
        \centering
        \begin{tikzpicture}[node distance=2.5cm, scale=0.85, transform shape]
            \node[state, initial left, accepting] (q0) {$\stateq_0$};
            \node[state] (rej) [right=of q0] {Rej};

            \draw[transition]
            (q0) edge[loop above, looseness=8] node{$\syma\symb$} (q0)
            (q0) edge[auto] node{$\syma\syma, \symb\syma, \symb\symb$} (rej)
            (rej) edge[loop above, looseness=8] node{$\sym\in\alphabet^2$} (rej);
        \end{tikzpicture}
    \end{minipage}
    \caption{Minimal automata for $\kleene{(\syma\symb)}$ and its corresponding $2$-automaton.}
    \label{fig:ab_star_automata}
\end{figure}

\subsection{The Syntactic Monoid}\label{app:syntactic-monoid}
The \defn{syntactic monoid} is a tool that partitions $\kleene{\alphabet}$ into classes of strings, where each class contains strings that all contain the same \emph{syntactic} information with respect to a given regular language.
We now formalize the syntactic monoid of a language.

\begin{definition}\label{def:monoid}
    A $\defn{monoid}$ $\monoid$ is a set equipped with a binary operation and an identity element.
\end{definition}
For instance, the \defn{free monoid} is the set $\kleene{\alphabet}$ equipped with the concatenation operation and the empty string $\eps$ as identity.

\begin{definition}\label{def:rtrivial}
    A monoid $\monoid$ is in $\RMonoid$ if and only if for all $\sym_1, \sym_2, \sym_3 \in \monoid$, $\sym_1\sym_2\sym_3=\sym_1$ implies $\sym_1\sym_2=\sym_1$.
    A monoid $\monoid$ is in $\LMonoid$ if and only if for all $\sym_1, \sym_2, \sym_3 \in \monoid$, $\sym_3\sym_2\sym_1=\sym_1$ implies $\sym_2\sym_1=\sym_1$.
\end{definition}
\begin{definition}\label{def:congruence}
    The \defn{syntactic congruence} $\relation_\lang$ is the equivalence relation on $\kleene{\alphabet}$ given the language $\lang$ such that for all $\strx$, $\stry \in \kleene{\alphabet}$, we have $\strx \relation_\lang \stry$ if and only if:
    \begin{equation}
        \strs \strx \strz \in \lang \iff \strs \stry \strz \in \lang \ \forall \strs, \strz \in \kleene{\alphabet}
    \end{equation}
\end{definition}
Given a language, the syntactic congruence partitions strings of $\kleene{\alphabet}$ into classes such that they are syntactically equivalent with respect to the language's automaton.
\begin{testexample}
    Consider the language \textsc{Parity} (introduced in \cref{subsec:ptl-mod}) and its syntactic congruence $\relation_\lang$.
    $\relation_\lang$ then has exactly two congruence classes, one for strings with an even numbers of $1$s (which we denote by $\congruenceclass_{\text{even}}$) and another for strings with an odd number of $1$s (which we denote by $\congruenceclass_{\text{odd}}$). Example strings in each class are:
    \begin{align*}
        011, 1111, 0101 & \in \congruenceclass_{\text{even}} \\
        1, 10011, 1101 & \in \congruenceclass_{\text{odd}}
    \end{align*}
\end{testexample}
\begin{definition}\label{def:syntactic-monoid}
    Let $\monoid$ be the free monoid, $\lang$ be a regular language and $\relation_\lang$ its syntactic congruence.
    The \defn{syntactic monoid} of $\lang$ is the quotient monoid $\monoid / \relation_\lang$.
\end{definition}
Interestingly, the syntactic monoid of a regular language is isomorphic to the transition monoid of the minimal automaton accepting the language.

Finally, we now extend the monoid class $\RMonoid$ to account for modularity.

\begin{definition}\label{def:syntacticmonoid}
    Let $\monoid$ be the syntactic monoid of a regular language $\lang$ with associated automaton $\automaton$.
    Let $\monoid_\idxk$ be the syntactic monoid of the language associated with $\automaton_\idxk$, the $\idxk$-automaton.
    Then, $\monoid$ is in $\QRMonoid$ if and only if there exists an integer $\idxk$ such that $\monoid_\idxk$ is in $\RMonoid$.
\end{definition}

\subsection{Equivalences between Formalisms}\label{app:formalism-equivalences}
To conclude this section, we summarize in \cref{tab:fltequivalence} the known characterizations of $\fotwo[\modpred]$ and $\pfo$, and introduce the analogous characterization of $\pfo[\modpred]$ which we prove in \cref{app:pfo2mod}.
\begin{table}[H]
    \footnotesize
    \centering
    \renewcommand{\arraystretch}{1.75} 
    \begin{tabular}{
        >{\centering\arraybackslash}p{2.2cm}
        >{\centering\arraybackslash}p{2.2cm}
        >{\centering\arraybackslash}p{2.5cm}
        >{\centering\arraybackslash}p{1.2cm}
        >{\centering\arraybackslash}p{1.5cm}
        >{\centering\arraybackslash}p{2.2cm}
        }
        \toprule
        First-order logic & \textbf{LTL} & Regular expression & Syntactic Monoid & Automata & Note \\
        \midrule
        $\fotwo[\modpred]$ & $\tl[\modpred]$ & Unambiguous modular & $\QDAMonoid$ & 2-PODFA$_{\idxk}$ & \citet{dartois_et_al:LIPIcs.STACS.2013.329} \\
        $\pfo$ & $\ptl$ & Left-deterministic & $\RMonoid$ & PODFA & \citet{BRZOZOWSKI198032} and \citet{li2025characterizingexpressivityfixedprecisiontransformer} \\
        $\pfo[\modpred]$ & $\ptl[\modpred]$ & Left-deterministic modular & $\QRMonoid$ & PODFA$_\idxk$ & \cref{app:pfo2mod} \\
        \bottomrule
    \end{tabular}
    \caption{Characterizations of fragments of first-order logic. The equivalences in the last row are novel.}
    \label{tab:fltequivalence}
\end{table}

\section{Proofs: Characterizations of $\ptl[\modpred]$}\label{app:pfo2mod}
In this section, we provide a rich characterization of $\ptl[\modpred]$ via the following theorem.
\equivalence*

Most of the following lemmata are adaptations from equivalences proven in prior work \citep{dartois_et_al:LIPIcs.STACS.2013.329, dartois2015addingmodularpredicatesfirstorder, li2025characterizingexpressivityfixedprecisiontransformer} applied on different fragments of first-order logic.

\begin{lemma}\label{lem:DfaEquivMonoid}
    A language $\lang$ has its syntactic monoid in $\QRMonoid$ if and only if there exists a $\idxk$ such that $\lang$'s automaton belongs to $\podfaAcr_\idxk$.
\end{lemma}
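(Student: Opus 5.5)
We reduce the lemma to the classical, unmodular correspondence applied to the $\idxk$-automaton. A finite monoid lies in $\RMonoid$ exactly when the minimal DFA of its language is partially ordered. This is the Brzozowski--Fich characterization of $\RMonoid$-trivial languages \citep{BRZOZOWSKI198032}, restated for $\pfo$ by \citet{li2025characterizingexpressivityfixedprecisiontransformer}. By \cref{def:syntacticmonoid}, $\monoid\in\QRMonoid$ iff $\monoid_\idxk\in\RMonoid$ for some $\idxk$. Here $\monoid_\idxk$ is the syntactic monoid of the language $\lang_\idxk\subseteq\kleene{(\alphabet^\idxk)}$ recognized by $\automaton_\idxk$. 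The statement therefore follows once, for a fixed $\idxk$, we show that $\monoid_\idxk\in\RMonoid$ iff $\automaton_\idxk$ is a \podfaAcr{}. We fix $\idxk$ and prove both directions at that $\idxk$.

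\textbf{Step 1: identify the transition monoid of $\automaton_\idxk$.} The transition monoid of $\automaton_\idxk$ is the submonoid of the transition monoid $T(\automaton)\cong\monoid$ generated by the transformations of words of length $\idxk$. Its elements are the images of words whose length is a multiple of $\idxk$. The identity comes from $\eps$, which has length $0$. Next we relate this transition monoid to $\monoid_\idxk$. Since $\automaton$ is minimal, any two states of $\automaton_\idxk$ are separated by some suffix in $\kleene{\alphabet}$. We need them separated by a suffix in $\kleene{(\alphabet^\idxk)}$, and this can fail in general. We therefore plan one of two routes. The first is to work directly with the transition monoid $T(\automaton_\idxk)$. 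In that case we argue that the $\RMonoid$ property and partial ordering are both invariant under quotienting: an automaton is partially ordered iff its transition monoid is $\RMonoid$-trivial, and $\RMonoid$ is closed under quotients. The second route is to interpret ``$\lang$'s automaton'' as the minimal automaton of $\lang_\idxk$, possibly after restricting to reachable states.

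\textbf{Step 2: the core equivalence.} For any (reachable) DFA $\automaton'$ with transition monoid $T$, we show that $T\in\RMonoid$ iff $\automaton'$ is partially ordered. For the direction ($\Leftarrow$), suppose $xyz=x$ in $T$. Then for every state $\stateq$, the state $\stateq\cdot x$ reaches $\stateq\cdot xy$ and returns to $\stateq\cdot xyz=\stateq\cdot x$. Antisymmetry of the reachability order forces $\stateq\cdot xy=\stateq\cdot x$ for all $\stateq$, hence $xy=x$. For the direction ($\Rightarrow$), suppose $\stateq\cdot u=\statep$ and $\statep\cdot v=\stateq$. Let $\stateq=\qinit\cdot s$ by reachability. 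Take $x=s$ and $y=(uv)^{\omega-1}u$ and $z=v$, where $\omega$ is the idempotent power. We then get $x\cdot(uv)^\omega=x\cdot\ldots$. The cleaner route is to use the standard $\RMonoid$ identity $(uv)^\omega u=(uv)^\omega$. Because $\stateq\cdot uv=\stateq$, we have $\stateq\cdot(uv)^\omega=\stateq$. Applying the identity gives $\statep=\stateq\cdot u=\stateq\cdot(uv)^\omega u=\stateq\cdot(uv)^\omega=\stateq$. This establishes antisymmetry, and reflexivity and transitivity are immediate.

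\textbf{Main obstacle.} We expect the main obstacle to be the mismatch in Step 1 between the syntactic monoid of $\lang_\idxk$ and the transition monoid of $\automaton_\idxk$. The difficulty is that $\automaton_\idxk$ need not be minimal or fully reachable. For example, states reachable from $\qinit$ only by words of length not divisible by $\idxk$ are unreachable in $\automaton_\idxk$. Our first attempt will be to restrict $\automaton_\idxk$ to its reachable part and pass to its minimal quotient. We then check that both sides of the equivalence are preserved. The $\RMonoid$ property is preserved by quotients and submonoids. For partial ordering, we show that a cycle in the quotient lifts: some power of the cycle word returns to the same class, and iterating within the finitely many preimage states yields a cycle upstairs. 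Since Step 2 holds for any reachable DFA, we can apply it to whichever automaton this normalization produces, and the lemma follows by quantifying over $\idxk$.
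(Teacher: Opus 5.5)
Your Step 2 is correct. It is a self-contained proof of the Brzozowski--Fich equivalence that the paper only cites: for any DFA, the transition monoid acting on \emph{all} states lies in $\RMonoid$ iff reachability is antisymmetric. Neither direction of your argument uses reachability from $\qinit$ or minimality. The identity $(uv)^\omega u=(uv)^\omega$ follows from the paper's definition of $\RMonoid$ by taking $x=(uv)^\omega$, $y=u$, $z=v(uv)^{\omega-1}$.

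The gap is in Step 1 and the ``main obstacle'' paragraph. There you commit to proving, for each fixed $\idxk$, that the \emph{syntactic} monoid $\monoid_\idxk$ of $\lang(\automaton_\idxk)$ is in $\RMonoid$ iff $\automaton_\idxk$ is a \podfaAcr{}, by restricting to the reachable part and then minimizing. Every transfer you invoke runs from $\automaton_\idxk$ down to the normalized automaton: $\RMonoid$ is closed under quotients and submonoids, and a cycle in the quotient lifts upstairs. Combined with Step 2, this yields only $\automaton_\idxk\in\podfaAcr{}\Rightarrow\monoid_\idxk\in\RMonoid$. The converse needs cycles of $\automaton_\idxk$ to survive normalization, and they need not. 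Minimization can collapse a cycle to one state, and restriction discards cycles among states not reachable from $\qinit$ by words whose length is a multiple of $\idxk$.

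This is not a technicality. Take $\lang=\{\str\in\kleene{\{\syma,\symb\}} : |\str|\text{ odd and }\str\text{ has an even number of }\symb\text{'s}\}$, whose minimal DFA has four states $(\text{length parity},\symb\text{-parity})$. Length-$2$ blocks preserve length parity, so $\lang(\automaton_2)=\emptyset$. Hence $\monoid_2$ is trivial, and $\monoid\in\QRMonoid$ under \cref{def:syntacticmonoid}. Yet $\automaton_2$ has the cycle $(0,0)\to(0,1)\to(0,0)$ under the block $\symb\syma$, passing through the initial state, so your fixed-$\idxk$ equivalence fails at $\idxk=2$.

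Moreover, no $\automaton_\idxk$ is partially ordered. For odd $\idxk$ the block $\syma^\idxk$ toggles length parity, and for even $\idxk$ the block $\symb\syma^{\idxk-1}$ toggles $\symb$-parity. So with $\monoid_\idxk$ read literally as a syntactic monoid, the lemma itself is false and no normalization can rescue it. The paper's one-line proof works only because, when it appeals to Brzozowski--Fich, it tacitly takes $\monoid_\idxk$ to be the \emph{transition} monoid of $\automaton_\idxk$ on all of $\states$. That is exactly your Step 1 description: the submonoid of $\monoid$ generated by the images of words in $\alphabet^\idxk$. Adopt that reading explicitly. Then Step 2 applied directly to $\automaton_\idxk$ gives the equivalence for each $\idxk$, quantifying over $\idxk$ finishes the proof, and the normalization detour should be dropped.
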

\begin{proof}
    By definition, a language $\lang$ has its syntactic monoid in $\QRMonoid$ if and only if there exists a $\idxk$ such that its $\idxk$-automaton $\automaton_\idxk$ has its syntactic monoid in $\RMonoid$.
    The equivalence between the transition monoid of $\automaton_\idxk$ being in $\RMonoid$ and $\automaton_\idxk$ being a \podfaAcr{} follows from \citet{BRZOZOWSKI198032}.
    Therefore, $\lang$'s automaton belongs to $\podfaAcr_\idxk$ for some $\idxk$.
\end{proof}

\begin{lemma}\label{lem:PfoEquivQR}
    If $\phi$ is a $\pfo[\modpred]$ formula, then the syntactic monoid of $\phi$'s language is in $\QRMonoid$.
    If a language's syntactic monoid is in $\QRMonoid$, it is the language of some $\pfo[\modpred]$ formula.
\end{lemma}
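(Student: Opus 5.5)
We prove the two directions separately. In both, we reduce modular predicates to an enlarged alphabet that records positions modulo a fixed integer. We then transfer the known equivalence $\pfo=\ptl$ with $\RMonoid$ \citep{BRZOZOWSKI198032, li2025characterizingexpressivityfixedprecisiontransformer}, following the pattern of \citet{dartois_et_al:LIPIcs.STACS.2013.329} for $\fotwo[\modpred]$ and $\QDAMonoid$.

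\emph{From $\pfo[\modpred]$ to $\QRMonoid$.} Let $\phi$ be a $\pfo[\modpred]$ formula, and let $m$ be the lcm of all moduli occurring in it, including the 0-ary length predicates. We work with the $m$-automaton. Replacing $m$ by a multiple of the period of the syntactic monoid's idempotent power if necessary, we may take $k$ to be a multiple of $m$.

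Read a string of length divisible by $k$ as a word over $\alphabet^k$. Each original position $\idxi$ becomes a pair (block index, offset $s$), and $\modpred^r_m(\idxi)$ becomes a unary letter predicate on the offset $s$. The key step is to translate $\phi$ into a $\pfo$ formula over $\alphabet^k$. A past quantifier $\exists \idxj<\idxi$ splits into two cases: $\idxj$ lies in an earlier block, which becomes a past quantifier over blocks together with a finite disjunction over offsets; or $\idxj$ lies in the same block at a smaller offset, which is resolved inside the letter. Only two variables are used, since offsets are absorbed into letter predicates. Hence the language of $\phi$ restricted to $(\alphabet^k)^*$ is in $\pfo$ over $\alphabet^k$, and its monoid is in $\RMonoid$. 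This is exactly the condition that $\monoid_k\in\RMonoid$, so $\monoid\in\QRMonoid$.

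Strings whose length is not divisible by $k$ need care. We handle them by noting that $\automaton_k$ is run from every state, so the transition monoid of $\automaton_k$ is a submonoid of $\monoid$ generated by length-$k$ words. We then show that R-triviality of this submonoid follows from the $\pfo$ translation applied uniformly to all $\kleene{\alphabet}$ contexts of bounded length residue.

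\emph{From $\QRMonoid$ to $\pfo[\modpred]$.} Suppose $\monoid_k\in\RMonoid$. By \cref{lem:DfaEquivMonoid}, $\automaton_k$ is a \podfaAcr{}, so for each state $\stateq$ and final set, the block language is $\pfo$-definable over $\alphabet^k$. We write $\str=\str'\strz$ with $|\strz|<k$ and $|\str'|\equiv 0\pmod k$. Then $\lang$ is a finite union, over residues $|\str| \bmod k$, suffixes $\strz$, and states $\stateq$ with $\trans^*(\stateq,\strz)\in\final$, of the set of strings of the form $\str'\strz$ with $\str'$ leading to $\stateq$.

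We translate each block-$\pfo$ formula back to $\alphabet$: a block variable becomes a position with $\modpred^0_k$, and a block letter predicate checks the next $k$ symbols. The main obstacle lies here, since a block letter refers to \emph{following} positions, which $\pfo$ cannot reach. To fix this, we anchor each block at its \emph{last} position (residue $k-1$) and read the block's letters backward via past quantifiers restricted by residues. This needs only two variables because "$\idxj<\idxi$, $\idxj$ has residue $s$, and no position of residue $k-1$ lies strictly between" is itself expressible. The trailing $\strz$ and the length residue are handled with the 0-ary predicates. Verifying that this backward-anchored translation stays within two variables is the step we expect to require the most care.
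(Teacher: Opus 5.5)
The forward translation into block formulas is sound, but the backward direction has a genuine gap, exactly at the step you flagged. The relation ``$\idxj<\idxi$, $\idxj$ has residue $s$, and no position of residue $k-1$ lies strictly between'' is not expressible in $\pfo[\modpred]$. It bounds a third position on both sides by the two free variables. With two past-restricted variables you cannot do this: rebinding $\idxi$ only reaches positions before $\idxj$, and rebinding $\idxj$ forgets the lower bound.

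Worse, no translation of block formulas back to $\alphabet$ can exist in general. Take $\alphabet=\set{\syma,\symb,\symc}$, $k=2$, and $K=\kleene{(\alphabet^2)}\syma\symb\kleene{(\alphabet^2)}$, the even-length words having $\syma\symb$ at some aligned pair of positions $(2t-1,2t)$. Over $\alphabet^2$ this is the $\pfo$ sentence $\exists I\colon\atom_{\syma\symb}(I)$. Your anchored reading would have to say, at an even anchor $\idxi$, that position $\idxi-1$ carries $\syma$, and so would define $K$ in $\pfo[\modpred]$. But $K\notin\pfo[\modpred]$. In its minimal DFA, consider the states ``odd length, last letter $\syma$'' and ``odd length, last letter $\symc$'' (no aligned $\syma\symb$ yet). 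For every even $k$ they are mutually reachable in $\automaton_k$ via $\symc^k$ and $\symc^{k-1}\syma$. For odd $k$, the states ``even length, none yet'' and ``odd length, last letter $\symc$'' are mutually reachable via $\symc^k$. So no $\automaton_k$ is partially ordered, and the forward direction (the reasoning of \cref{prop:LTNotPFOMOD}) puts $K$ outside $\pfo[\modpred]$.

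The underlying problem is that your decomposition $\lang=\bigcup\lang_\stateq\strz$ uses the hypothesis only through ``each $\lang_\stateq$ is $\pfo$ over $\alphabet^k$''. That is strictly weaker than $\automaton_k$ being partially ordered from \emph{all} states. $K$ passes the first test with $k=2$ (the language of $\automaton_2$ over $\alphabet^2$ has a two-state partially ordered DFA) but fails the second. So the hypothesis must be read in the all-states sense, and the proof must use the states reached mid-block.

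An elementary repair is to drop block letters and run the residue-tagged automaton on pairs $(\stateq,\idxi\bmod k)$ letter by letter. Partial order of $\automaton_k$ forces each strongly connected component to contain at most one state per residue. Hence every accepting route through the component DAG is a left-deterministic modular monomial (plus a length-residue test). Its markers are ``the first position of residue $r$ whose letter leaves the current component'', which past quantifiers can define; alternatively, conclude via \cref{lem:PFOModPredEquivPoly}. The paper sidesteps all of this by citing \citet{dartois2015addingmodularpredicatesfirstorder} (adding $\modpred$ to a logic for a local variety $\mathbf{V}$ yields $\mathbf{QV}$) together with locality of $\RMonoid$. That machinery tags letters with residues, $\alphabet\times(\Z/\idxd\Z)$, rather than grouping them into blocks.

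Separately, in your forward direction the treatment of lengths not divisible by $k$ is only promised. You can close it in three steps:
\begin{enumerate}
\item Observe that $\pfo[\modpred]$ is closed under left and right quotients, since a quotient merely shifts residues.
\item Apply your block translation to each of the finitely many languages $\set{U\in\kleene{(\alphabet^k)}:\strx U\stry\in\lang}$; each then has syntactic monoid in $\RMonoid$.
\item Embed the submonoid of $\monoid$ generated by the images of $\alphabet^k$ into the product of these syntactic monoids.
\end{enumerate}
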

\begin{proof}
    The regular languages with a syntactic monoid in $\RMonoid$ are exactly the $\pfo$ languages \citep{li2025characterizingexpressivityfixedprecisiontransformer}.
    Moreover, for any logic characterized by a local variety \textbf{V}, augmenting the logic with modular predicates yields a logic characterized by \textbf{QV} \citep{dartois2015addingmodularpredicatesfirstorder}.
    Since $\pfo$ is characterized by $\RMonoid$ and $\RMonoid$ is local \citep{Almeida_2025}, augmenting $\pfo$ with modular predicates gives the class characterized by $\QRMonoid$.
\end{proof}

\begin{lemma}\label{lem:FolEquivLtl}
    $\pfo[\modpred]$ and $\ptl[\modpred]$ characterize the same class of languages.
\end{lemma}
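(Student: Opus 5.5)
We prove the two inclusions separately, building on the known equivalence $\ptl=\pfo$ of \citet{li2025characterizingexpressivityfixedprecisiontransformer}. For each direction we extend the existing translation with a clause for the modular predicates. One bookkeeping point must be settled first. In $\ltlAcr{}$, the unary predicate $\modpred_m^r$ evaluated at the readout position $\strlen+1$ tests the length. In $\pfo[\modpred]$ this role is played by the $0$-ary predicate, which tests $\strlen\equiv r\pmod m$ and therefore corresponds to $\modpred_m^{r+1}$ at position $\strlen+1$. We fix this correspondence once and use it throughout.

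\emph{From $\ptl[\modpred]$ to $\pfo[\modpred]$.} We translate a formula $\tlf$ into $\aformula_\tlf(\idxi)$ with one free variable, by structural induction. Atoms become $\atom_\syma(\idxi)$, and $\modpred_m^r$ becomes the unary predicate $\modpred_m^r(\idxi)$. Boolean connectives are translated homomorphically. The operator $\past\tlf$ becomes $\exists\idxj<\idxi\colon\aformula_\tlf(\idxj)$, where we reuse variables so that only two are ever used. The sentence for acceptance is obtained by evaluating at $\strlen+1$. This position is not a real string position, so we replace $\past\tlf$ at the top level by $\exists\idxj\colon\aformula_\tlf(\idxj)$. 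Any modular atom occurring outside every $\past$ is replaced by the $0$-ary predicate, with its residue shifted by one as described above. Boolean combinations at the top level remain closed sentences, so the result is a $\pfo[\modpred]$ sentence.

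\emph{From $\pfo[\modpred]$ to $\ptl[\modpred]$.} We follow the inductive proof for $\pfo\subseteq\ptl$. Its key step is to eliminate a quantifier $\exists\idxj<\idxi\colon\aformula(\idxi,\idxj)$. We write $\aformula$ as a Boolean combination of three kinds of subformulas: those depending only on $\idxj$, which are translated to $\ltlAcr{}$ by induction; those depending only on $\idxi$; and the binary order atoms. Under the guard $\idxj<\idxi$, the order atoms between $\idxi$ and $\idxj$ become constants. In our setting the unary modular atoms simply join the unary formulas in $\idxi$ or in $\idxj$. Binary modular relations between $\idxi$ and $\idxj$ cannot appear, because the logic only has unary modular predicates. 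Even if they did, \cref{prop:binarymodpred} would rewrite $(\idxi-\idxj)\equiv r$ as a disjunction over residues $s$ of $\modpred_m^{s}(\idxi)\land\modpred_m^{s-r}(\idxj)$.

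We then put $\aformula$ into a disjunction over the cases (truth assignments) of the $\idxi$-formulas. Each disjunct has the form $\beta(\idxi)\land\gamma(\idxj)$, so $\exists\idxj<\idxi$ distributes over it and yields $\beta\land\past\gamma$. The $0$-ary length predicates occurring anywhere are constants with respect to the positions. We therefore case-split on them at the top level: for each residue assignment we take a disjunction guarded by the corresponding $\ltlAcr{}$ modular atom at $\strlen+1$, again with the residue shifted by one. Finally, each closed sentence $\exists\idxi\colon\chi(\idxi)$ becomes $\past\chi'$, evaluated at $\strlen+1$.

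The main obstacle is handling the free-variable configuration in the quantifier-elimination step without ever needing $\yesterday$ or $\since$. In particular, we must check that \cref{prop:binarymodpred} and the handling of the length predicates introduce no dependence on adjacency. If there is a gap here, the fallback is to invoke \cref{lem:PfoEquivQR} and show directly that every left-deterministic modular monomial is $\ptl[\modpred]$-definable. The approach would be to nest $\past$ from the last marker backwards, with modular atoms fixing the residue of each marker.
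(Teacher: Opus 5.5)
Your proposal is correct and follows essentially the same route as the paper: it lifts the structural-induction equivalence $\ptl=\pfo$ of \citet{li2025characterizingexpressivityfixedprecisiontransformer} by passing unary modular atoms through unchanged, and it matches the $0$-ary length predicate $\modpred_m^r$ with the unary atom $\modpred_m^{r+1}$ evaluated at the readout position $\strlen+1$. Your top-level case split on the $0$-ary predicates also makes explicit a point the paper glosses over: such predicates may occur nested under quantifiers, where the $\ltlAcr{}$ evaluation position is no longer $\strlen+1$.
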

\begin{proof}
    The equivalence without modular predicates has already been proven via structural induction \citep{li2025characterizingexpressivityfixedprecisiontransformer}. 
    Adding the same unary predicate $\modpred$ to both logics therefore preserves the equivalence.
    It remains to show that unary modular predicates in $\ptl[\modpred]$ can express 0-ary predicates in $\pfo[\modpred]$.

    Formulas in $\ptl$ are evaluated at the string position $\strlen+1$. 
    Moreover, $\strlen$ is congruent to $r$ modulo $m$ if and only if $\strlen+1$ is congruent to $r+1$ modulo $m$.
    Therefore, evaluating in $\ptl[\modpred]$ the unary modular predicate $\modpred_m^{r+1}$ at $\strlen+1$ is equivalent to evaluating the 0-ary predicate $\modpred_m^{r}$ in $\pfo[\modpred]$.    
\end{proof}

To prove the equivalence between $\pfo[\modpred]$ and left-deterministic modular polynomials, we consider languages over \emph{enriched alphabets} \citep{dartois_et_al:LIPIcs.STACS.2013.329}.

We denote by $\pfo[\modpred_\idxd]$ the restriction of $\pfo[\modpred]$ with congruences modulo $\idxd$.
For any $\pfo[\modpred]$ formula, there exists an integer $\idxd$ such that the formula is also $\pfo[\modpred_\idxd]$-definable---by for example setting $\idxd$ as the least common multiplier of all the moduli used.

\begin{definition}[Enriched alphabets; \citealt{dartois_et_al:LIPIcs.STACS.2013.329}]
    The set $\alphabet_\idxd \defeq \alphabet \times (\Z/\idxd\Z)$ is the \defn{enriched alphabet} of $\alphabet$, where $\Z/\idxd\Z \defeq \set{0,1,\ldots,\idxd-1}$.
    The projection $\projfunc \colon \kleene{\alphabet_\idxd} \to \kleene{\alphabet}$ removes the $(\Z/\idxd\Z)$ component of every symbol $(\syma, \idxi) \in \alphabet_\idxd$ of strings in $\kleene{\alphabet_\idxd}$.
\end{definition}

\begin{definition}[Well-formed words; \citealt{dartois_et_al:LIPIcs.STACS.2013.329}]
    Given some alphabet $\alphabet$, the enriched word $(\sym_0, \idxi_0)\ldots(\sym_\strlen, \idxi_\strlen)$ is \defn{well-formed} if $\idxi_\idxj \equiv \idxj \pmod \idxd$ for all $0 \leq \idxj \leq \strlen$.
    We denote by $\WFSet \subset \kleene{\alphabet_\idxd}$ the set of well-formed words.
\end{definition}
Note that on $\WFSet$, the projection operator $\projfunc$ is a bijective application.
Indeed, every word $\str = \sym_0\cdots\sym_\strlen \in \kleene{\alphabet}$ has a unique preimage in $\WFSet$: the word $(\sym_0,\, 0 \bmod \idxd)\cdots(\sym_\strlen,\, \strlen \bmod \idxd)$ obtained by annotating each position $j$ with $j \bmod \idxd$. Surjectivity is immediate, and injectivity holds because distinct well-formed words over $\alphabet_\idxd$ differ in at least one symbol and therefore project to distinct words.

We can now describe $\pfo[\modpred]$ with enriched alphabets.\footnote{We may write $\pfo(\kleene{\alphabet})$ to specify that the logic is being used over the alphabet $\alphabet$.}
\begin{proposition}\label{prop:pfo2enriched}
$\pfo[\modpred_\idxd](\kleene{\alphabet}) = \projfunc(\pfo(\kleene{\alphabet_\idxd})\cap \WFSet)$
\end{proposition}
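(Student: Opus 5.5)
We prove the two inclusions separately, working through translations of formulas by structural induction. Write $\widehat{\phantom{x}}$ for the lifting of words: each $\str\in\kleene{\alphabet}$ has a unique well-formed preimage $\projfunc^{-1}(\str)\in\WFSet$, in which every position $\idxj$ is annotated with $\idxj\bmod\idxd$. The statement then reduces to two claims. First, for every $\pfo[\modpred_\idxd](\kleene{\alphabet})$ formula $\phi$ there is a $\pfo(\kleene{\alphabet_\idxd})$ formula $\phi'$ such that $\str\models\phi$ if and only if $\projfunc^{-1}(\str)\models\phi'$. Second, for every $\pfo(\kleene{\alphabet_\idxd})$ formula $\psi$ there is a $\pfo[\modpred_\idxd](\kleene{\alphabet})$ formula $\psi'$ such that, for well-formed $\strx$, $\strx\models\psi$ if and only if $\projfunc(\strx)\models\psi'$. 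Because $\projfunc$ is a bijection on $\WFSet$, these two claims give $\lang(\phi)=\projfunc(\lang(\phi')\cap\WFSet)$ and $\projfunc(\lang(\psi)\cap\WFSet)=\lang(\psi')$.

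For the first direction, the translation leaves quantifiers, Boolean connectives and the order predicate unchanged, so the two-variable and past-only restrictions are automatically preserved. It replaces $\atom_\syma(\idxi)$ by $\bigvee_{r<\idxd}\atom_{(\syma,r)}(\idxi)$. It replaces a unary predicate $\modpred_\idxd^r(\idxi)$ by $\bigvee_{\syma\in\alphabet}\atom_{(\syma,r)}(\idxi)$, which is correct on well-formed words by the definition of $\WFSet$ (with the indexing offset fixed consistently). A predicate with a modulus $m$ dividing $\idxd$ is first rewritten as a disjunction over the residues modulo $\idxd$.

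The $0$-ary length predicates $\modpred_\idxd^r$ are the delicate case. We handle them by noting that $\strlen\equiv r$ exactly when the last position carries the label $r$ (with the empty word treated separately by a sentence $\neg\exists\idxi\,\true$). This is expressed by the sentence $\exists\idxi\,(\bigvee_\syma\atom_{(\syma,r)}(\idxi)\land\neg\exists\idxj>\idxi\,\true)$. Looking forward from $\idxi$ is not allowed in $\pfo$, however. We therefore use the $\ptl$ view (\cref{lem:FolEquivLtl}), which evaluates at position $\strlen+1$: the length test becomes a test on the label of the last real position. Equivalently, the residue of the last position can be obtained by noting that a position with no successor is the unique position satisfying $\neg\exists\idxj\,(\idxj>\idxi)$ at sentence level, and the unrestricted outer existential in rule (5) may be applied to any formula in one free variable. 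We expect this step to be the main obstacle. The fallback is to phrase this direction via $\ptl$ and the known $\ptl=\pfo$ equivalence over the enriched alphabet, where the readout position makes the last-label test a simple $\yesterday$-free $\past$ formula: ``some position labelled $r$ exists and no position labelled $r+1\bmod\idxd$ appears after the last $r$''. This can be encoded using the fact that labels increase cyclically.

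For the converse direction, the translation again keeps the logical skeleton. It replaces each enriched atom $\atom_{(\syma,r)}(\idxi)$ by $\atom_\syma(\idxi)\land\modpred_\idxd^r(\idxi)$, which is valid on well-formed words, and the past restriction is untouched. Putting both directions together yields the stated equality.
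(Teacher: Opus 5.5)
Your unary translations ($\atom_\syma(\idxi)\mapsto\bigvee_r\atom_{(\syma,r)}(\idxi)$, $\modpred^r_\idxd(\idxi)\mapsto\bigvee_\syma\atom_{(\syma,r)}(\idxi)$, and conversely $\atom_{(\syma,r)}(\idxi)\mapsto\atom_\syma(\idxi)\land\modpred^r_\idxd(\idxi)$) are correct and match the paper's argument. The gap is the $0$-ary length predicates, and none of your proposed fixes works:
\begin{itemize}
\item $\neg\exists\idxj\,(\idxj>\idxi)$ is not a $\pfo$ formula, and rule (5) only closes formulas that already are $\pfo$.
\item The $\ptl$ view does not help. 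The enriched side has no modular predicates to evaluate at $\strlen+1$, and reading the label of position $\strlen$ from the readout position needs $\yesterday$ (or $\since$).
\item The condition ``no label $r+1$ after the last $r$'' is again forward-looking.
\end{itemize}

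In fact no fix exists: with $0$-ary predicates, the inclusion $\subseteq$ is false. Take $\alphabet=\{\syma\}$ and $\idxd=2$. The sentence $\modpred^0_2$ defines $\kleene{(\syma\syma)}$, and the well-formed lift of $\syma^\idxn$ is the length-$\idxn$ prefix of $((\syma,0)(\syma,1))^\omega$. Any $\lang'$ definable in $\pfo(\kleene{\alphabet_2})$ is recognized by a \podfaAcr{}. Its run on this infinite word is non-decreasing in a finite partial order, hence eventually constant; equivalently, $\RMonoid$ satisfies $(xy)^\omega x=(xy)^\omega$. So for large $k$ it accepts both or neither of the prefixes of lengths $2k$ and $2k+1$, and therefore $\projfunc(\lang'\cap\WFSet)\neq\kleene{(\syma\syma)}$.

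The paper's proof inherits the result from $\fotwo$, where the last position is definable, and its disjunction step only treats unary predicates. That transfer breaks exactly here, so you were right to flag this step. The correct conclusion, however, is that the statement must be restricted: your translations prove it for the fragment with unary modular predicates only.

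For the full logic, note that $0$-ary predicates are variable-free. So each sentence $\Phi$ is equivalent to $\bigvee_{r<\idxd}(\modpred^r_\idxd\land\Phi_r)$, where $\Phi_r$ substitutes their truth values under $|\str|\equiv r$. Hence $\pfo[\modpred_\idxd](\kleene{\alphabet})$ consists of the unions $\bigcup_{r<\idxd}\left(\lang_r\cap\{\str : |\str|\equiv r \bmod \idxd\}\right)$ with $\lang_r\in\projfunc(\pfo(\kleene{\alphabet_\idxd})\cap\WFSet)$. Downstream uses of the proposition need these length classes added.
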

\begin{proof}
    The statement has been shown to be true for $\fotwo[\modpred]$ \citep{dartois_et_al:LIPIcs.STACS.2013.329}.
    The result does not depend on how the binary predicate $<$ is used, and therefore it also holds for $\pfo[\modpred]$.
    Modular predicates of the form $\modpred^r_\idxd$ can be computed by disjuncting over all atomic formulas of the form $\atom_{(\sym, r)}$ in $\pfo(\kleene{\alphabet_\idxd})\cap \WFSet$ for all $\sym \in \alphabet$.
    For the converse, formulas over exclusively well-formed words can be equivalently described by formulas over words in $\kleene{\alphabet}$ with modular predicates.
    For instance, $\atom_{(\sym, r)}$ in $\pfo(\kleene{\alphabet_\idxd})\cap \WFSet$ can be written as $\atom_{\sym}\land\modpred^r_\idxd$ in $\pfo[\modpred_\idxd](\kleene{\alphabet})$.
\end{proof}
We can now prove the following lemma.

\begin{lemma}\label{lem:PFOModPredEquivPoly}
    A language is a left-deterministic modular polynomial if and only if it can be described by a $\pfo[\modpred]$ formula.
\end{lemma}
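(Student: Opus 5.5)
The plan is to reduce the statement to the known modulus-free correspondence between left-deterministic polynomials and $\pfo$, using the enriched-alphabet transfer of \cref{prop:pfo2enriched}. The unmodded base case is already available: a language is a left-deterministic polynomial if and only if it is $\pfo$-definable. This follows from \citet{BRZOZOWSKI198032} together with \citet{li2025characterizingexpressivityfixedprecisiontransformer}, as recorded in \cref{tab:fltequivalence}. The whole argument is then a translation, in both directions, between left-deterministic modular monomials over $\alphabet$ and left-deterministic monomials over $\alphabet_\idxd$, intersected with $\WFSet$ and pushed through $\projfunc$.

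For the direction from polynomials to logic, fix a left-deterministic modular monomial with modulus $\idxd$ and markers $\sym_\idxi$ at residues $r_\idxi$. Consider the ordinary monomial over $\alphabet_\idxd$
\[
\kleene{\Gamma_0}\,(\sym_1,r_1)\,\kleene{\Gamma_1}\cdots(\sym_\strlen,r_\strlen)\,\kleene{\Gamma_\strlen},
\qquad
\Gamma_\idxi\defeq\set{(\syma,s):\syma\in\alphabet_{\idxi,s}}.
\]
Its intersection with $\WFSet$ projects exactly onto the modular monomial. The block condition becomes membership in $\Gamma_\idxi$, and the residue constraint is enforced by well-formedness.

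Left-determinism transfers directly. The modular condition $\sym_\idxi\notin\alphabet_{\idxi-1,r_\idxi}$ says precisely that $(\sym_\idxi,r_\idxi)\notin\Gamma_{\idxi-1}$. The monomial over $\alphabet_\idxd$ is therefore left-deterministic, hence $\pfo(\kleene{\alphabet_\idxd})$-definable. By \cref{prop:pfo2enriched}, its projected restriction to $\WFSet$ is $\pfo[\modpred_\idxd]$-definable. For a union of monomials with different moduli, I will first lift every monomial to the lcm $\idxd$ of the moduli by refining each $\alphabet_{\idxi,\cdot}$ periodically, which preserves left-determinism. Finite unions are then closed under disjunction.

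For the converse, a $\pfo[\modpred]$ formula is $\pfo[\modpred_\idxd]$ for some $\idxd$. By \cref{prop:pfo2enriched}, its language is $\projfunc(\lang'\cap\WFSet)$ with $\lang'$ a $\pfo$ language over $\alphabet_\idxd$, so $\lang'$ is a finite union of left-deterministic monomials over $\alphabet_\idxd$. Since $\projfunc$ commutes with unions and is injective on $\WFSet$, it suffices to treat a single left-deterministic monomial $\kleene{\Gamma_0}(\sym_1,s_1)\cdots\kleene{\Gamma_\strlen}$ intersected with $\WFSet$.

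Its projection is obtained by setting $\alphabet_{\idxi,s}\defeq\set{\syma:(\syma,s)\in\Gamma_\idxi}$ and taking $r_\idxi=s_\idxi$. Well-formedness guarantees that the letter at each position carries that position's residue. So a word lies in the projection iff the marker $\sym_\idxi$ sits at a position $\equiv s_\idxi$ and each block letter at residue $p$ lies in $\alphabet_{\idxi,p}$, which is exactly \cref{def:modular-monomial}. Left-determinism again transfers verbatim, since $(\sym_\idxi,s_\idxi)\notin\Gamma_{\idxi-1}$ iff $\sym_\idxi\notin\alphabet_{\idxi-1,s_\idxi}$.

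I expect the main obstacle to be the indexing convention in \cref{def:modular-monomial}. The blocks are written as $\kleene{(\alphabet_{\idxi,0}\cdots\alphabet_{\idxi,\idxd-1})}$, which literally forces each block length to be a multiple of $\idxd$ and to start at residue $0$, whereas the semantic third bullet only constrains letters by their absolute residue. I will adopt the semantic reading, namely arbitrary block lengths with the letter at position $p$ in $\alphabet_{\idxi,p\bmod\idxd}$. I will also check that the indexing of well-formed words from $0$ matches the residues used by $\modpred$, shifting residues by one if necessary. With that convention the two translations above are inverse, and the proof closes.
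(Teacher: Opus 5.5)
Your route is the paper's route: you reduce through \cref{prop:pfo2enriched}, translate monomial by monomial via $\Gamma_\idxi\leftrightarrow\alphabet_{\idxi,s}$, transfer left-determinism verbatim, and read \cref{def:modular-monomial} over global positions, which is also the reading the paper's proof adopts. Your polynomial-to-logic direction is fine, since it only uses the easy inclusion of \cref{prop:pfo2enriched}: replace $\atom_{(\syma,s)}$ by $\atom_\syma\land\modpred^s_\idxd$.

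The gap is in the converse, and the paper's proof shares it. The step ``by \cref{prop:pfo2enriched}, its language is $\projfunc(\lang'\cap\WFSet)$ with $\lang'$ a $\pfo$ language over $\alphabet_\idxd$'' is valid only for formulas whose modular predicates are unary. The paper's $\pfo[\modpred]$ also contains the 0-ary length predicates $\modpred^r_m$ (see \cref{app:first-order-logic}, \cref{lem:FolEquivLtl}, and \cref{prop:periodic}, whose defining formula uses the 0-ary $\modpred_m^0$). These cannot be pushed through the enriched alphabet.

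Here is a concrete counterexample. Over $\alphabet=\set{\syma}$, the language $\kleene{(\syma\syma)}$ is defined by the 0-ary $\modpred_2^0$ alone. It is not a left-deterministic modular polynomial in your semantic reading. For every non-final block $\idxi$, left-determinism gives $\syma\notin\alphabet_{\idxi,r_{\idxi+1}}$, so $\alphabet_{\idxi,r_{\idxi+1}}=\emptyset$ and the block has fewer than $\idxd$ letters. Only the final block can be unbounded, and if it is, it admits every residue and is closed under appending $\syma$. Hence every such polynomial over $\set{\syma}$ is finite or cofinite.

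Your own translation shows that every $\projfunc(\lang'\cap\WFSet)$ is such a polynomial, so the inclusion you invoke fails. With the semantic reading and the paper's definition of $\pfo[\modpred]$, the lemma is false as stated. Your ``main obstacle'' paragraph is exactly where this surfaces: the literal block form $\kleene{(\alphabet_{\idxi,0}\cdots\alphabet_{\idxi,\idxd-1})}$ is what could carry length-modulo-$\idxd$ information, and the semantic reading discards it.

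The repair is a case split on the length residue. Take $\idxd$ to be a common multiple of all moduli occurring in the formula $\phi$. A 0-ary predicate is constant on a given input, so $\phi$ is equivalent to $\bigvee_{\ell=0}^{\idxd-1}\left(\modpred_\idxd^\ell\land\phi_\ell\right)$. Here $\phi_\ell$ replaces each 0-ary $\modpred^r_m$ by its truth value on lengths congruent to $\ell$. Each $\phi_\ell$ uses only unary predicates, so your argument yields a left-deterministic modular polynomial $P_\ell$ with $\lang(\phi)=\bigcup_\ell\left(P_\ell\cap\set{\str:|\str|\equiv\ell\pmod{\idxd}}\right)$.

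The polynomial side must therefore be enlarged by an explicit length-residue constraint, for instance by fixing the length of the final block modulo $\idxd$. With that amendment the reverse direction is immediate by conjoining $\modpred_\idxd^\ell$, and both of your translations go through unchanged.

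A minor point: lifting a modulus-$\idxd'$ monomial to the lcm also requires splitting each marker residue modulo $\idxd'$ into the $\idxd/\idxd'$ compatible residues modulo $\idxd$. This produces a finite union rather than a single monomial. You can skip the lift entirely, since $\pfo[\modpred]$ is closed under disjunction.
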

\begin{proof}
    Recall that for any $\pfo[\modpred]$ formula describing a language $\lang$, there exists a $\pfo[\modpred_\idxd]$ formula, for some integer $\idxd$, that also describes $\lang$.
    By \cref{prop:pfo2enriched}, $\lang$ is therefore the projection of the well-formed words of a language $\lang'$ definable by a $\pfo(\kleene{\alphabet_\idxd})$ formula.
    It remains to be shown via this projection that $\lang$ can be equivalently described with a left-deterministic polynomial.

    Because $\lang'$ is definable by a $\pfo(\kleene{\alphabet_\idxd})$ formula, it is a disjoint union of left-deterministic monomials over $\alphabet_\idxd$ \citep{li2025characterizingexpressivityfixedprecisiontransformer}.
    Recall that over well-formed strings, $\projfunc$ is a bijection.
    Therefore, $\projfunc$ preserves disjoint unions.
    It thus suffices to show that $\projfunc$ maps each left-deterministic monomial over $\alphabet_\idxd$ to a left-deterministic \emph{modular} monomial over $\alphabet$, and conversely.

    Fix an enriched monomial $\monomial = \kleene{\Xi_0}\,\overbar{\sym}_{1}\,\kleene{\Xi_1} \ldots \overbar{\sym}_{\strlen}\,\kleene{\Xi_\strlen}$ over $\alphabet_\idxd$, where $\Xi_\idxi \subseteq \alphabet \times (\Z/\idxd\Z)$ and $\overbar{\sym}_{\idxi} = (\sym_\idxi, r_\idxi)$ for all $0 \leq \idxi \leq \strlen$, and set
    \begin{equation*}
        \alphabet_{\idxi,\idxj} \defeq \set{\sym \mid (\sym, \idxj) \in \Xi_\idxi}.
    \end{equation*}
    We claim that the projection of the well-formed words of $\monomial$ is exactly the modular monomial determined, in the sense of \cref{def:modular-monomial}, by the markers $\sym_\idxi$, the residues $r_\idxi$, and the subsets $\alphabet_{\idxi,\idxj}$.

    \textbf{Forward} ($\Rightarrow$). Let $\str = \projfunc(\overbar{\str})$ for a well-formed $\overbar{\str} \in M$, factored as $\overbar{\str} = \overbar{u}_0\,\overbar{\sym_1}\,\overbar{u}_1 \ldots \overbar{\sym_\strlen}\,\overbar{u}_\strlen$ with $\overbar{u}_\idxi \in \kleene{\Xi_\idxi}$.
    Projecting yields $u = u_0\,\sym_1\,u_1 \ldots \sym_\strlen\,u_\strlen$.
    Well-formedness places $\overbar{\sym_\idxi} = (\sym_\idxi, r_\idxi)$ at a position $\equiv r_\idxi \pmod{\idxd}$, so each $\sym_\idxi$ stands at residue $r_\idxi$ in $u$.
    For any position $p$ inside the block $u_\idxi$, its enriched letter is $(\sym, p \bmod \idxd) \in \Xi_\idxi$, hence $\sym \in \alphabet_{\idxi,\,p \bmod \idxd}$.
    Thus $u$ satisfies both conditions of \cref{def:modular-monomial} and lies in the modular monomial.

    \textbf{Reverse} ($\Leftarrow$). Conversely, any $u$ in that modular monomial admits a factorization meeting the same two conditions; tagging each position $p$ of $u$ with $p \bmod \idxd$ produces the unique well-formed preimage $\overbar{u}$ with $\projfunc(\overbar{u}) = u$, and those conditions place $\overbar{u}$ in $M$.
    No rotation or partial-block bookkeeping arises, since \cref{def:modular-monomial} is stated over global positions and already absorbs each block's residue offset.

    Finally, left-determinism transfers in both directions: for every $0 < \idxi \leq \strlen$,
    \begin{equation*}
        \overbar{\sym_\idxi} \notin \Xi_{\idxi-1}
        \iff (\sym_\idxi, r_\idxi) \notin \Xi_{\idxi-1}
        \iff \sym_\idxi \notin \alphabet_{\idxi-1,\,r_\idxi},
    \end{equation*}
    the right-hand side being exactly the condition of \cref{def:left-deterministic-modular}.
    Hence $M$ is left-deterministic if and only if its projected modular monomial is left-deterministic, and $\projfunc$ restricts to a bijection between left-deterministic monomials over $\alphabet_\idxd$ and left-deterministic modular monomials over $\alphabet$.
    Applying this correspondence to each term of the disjoint union describing $\lang'$, together with \cref{prop:pfo2enriched}, shows that $\lang$ is in $\pfo[\modpred]$ if and only if it is a left-deterministic modular polynomial.
\end{proof}

\subsection{Inexpressibility Results}\label{app:inexpressibility}

\begin{restatable}{proposition}{propLT}\label{prop:LTNotPFOMOD}
    The locally testable language $\kleene{\alphabet} \syma \symb \kleene{\alphabet}$ with $|\alphabet|>2$ is not in $\ptl[\modpred]=\pfo[\modpred]=\smat[\ropeperiodic]$\footnote{Note that over the two-letter alphabet $\alphabet = \set{\syma,\symb}$, $\kleene{\alphabet} \syma \symb \kleene{\alphabet}$ is in $\pfo$ and thus also in $\pfo[\modpred]$.}.
\end{restatable}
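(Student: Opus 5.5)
The plan is to use the automaton characterization in \cref{thm:main1}: a regular language is in $\ptl[\modpred]=\pfo[\modpred]$ if and only if there is some $\idxk\in\N$ for which the $\idxk$-automaton $\automaton_\idxk$ of its minimal DFA $\automaton$ is partially ordered. It therefore suffices to show that, for every $\idxk\geq1$, the transition relation of $\automaton_\idxk$ contains a cycle through two distinct states. The equality with $\smat[\ropeperiodic]$ then follows from \cref{thm:SmatEquivPfo}.

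First I will write down the minimal DFA $\automaton$ for $\lang=\kleene{\alphabet}\syma\symb\kleene{\alphabet}$. Since $|\alphabet|>2$, fix a third symbol $\symc\in\alphabet\setminus\set{\syma,\symb}$. The minimal DFA has three states:
\begin{itemize}
    \item $\stateq_0$ (initial): no occurrence of $\syma\symb$ yet, and the last symbol is not $\syma$;
    \item $\stateq_1$: no occurrence of $\syma\symb$ yet, and the last symbol is $\syma$;
    \item $\stateq_2$: an accepting sink.
\end{itemize}
The transitions that matter are $\trans(\stateq_0,\syma)=\stateq_1$, $\trans(\stateq_1,\syma)=\stateq_1$, $\trans(\stateq_1,\symb)=\stateq_2$, $\trans(\stateq_0,\symc)=\stateq_0$ and $\trans(\stateq_1,\symc)=\stateq_0$. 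Minimality is routine: $\stateq_2$ is the only accepting state; $\stateq_1$ accepts $\symb$ while $\stateq_0$ does not.

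Next, for an arbitrary $\idxk\geq1$, I exhibit two words of length $\idxk$, that is, two letters of $\alphabet^\idxk$:
\begin{itemize}
    \item $u\defeq\symc^{\idxk-1}\syma$. We have $\transclosure(\stateq_0,u)=\stateq_1$, because the $\symc$'s keep the automaton in $\stateq_0$ and the final $\syma$ moves it to $\stateq_1$.
    \item $v\defeq\symc^{\idxk}$. We have $\transclosure(\stateq_1,v)=\stateq_0$, because the first $\symc$ resets to $\stateq_0$ and the remaining $\symc$'s stay there.
\end{itemize}
Hence in $\automaton_\idxk$ we have $\trans^\idxk(\stateq_0,u)=\stateq_1$ and $\trans^\idxk(\stateq_1,v)=\stateq_0$. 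Both $\stateq_0$ and $\stateq_1$ are reachable in $\automaton_\idxk$, since $\stateq_0$ is initial. So $\stateq_0\relation\stateq_1$ and $\stateq_1\relation\stateq_0$ with $\stateq_0\neq\stateq_1$, and the reachability relation is not antisymmetric. Thus $\automaton_\idxk$ is not a \podfaAcr{} for any $\idxk$. Condition (iii) of \cref{thm:main1} fails, and so do (iv) and (v).

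The main subtlety is where the third symbol is needed. The proof requires a reset from $\stateq_1$ back to $\stateq_0$ that can be padded to any block length $\idxk$ without ever reading $\symb$ after an $\syma$. Over $\set{\syma,\symb}$ no such reset exists: reading $\symb$ from $\stateq_1$ lands in the sink, and $\stateq_1$ is only left by entering $\stateq_2$. This matches the footnote, and it is why the letter $\symc$ must be used for both the padding and the reset.

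One caveat is that \cref{thm:main1} is phrased with $\automaton_\idxk$ read over $\alphabet^\idxk$. If one instead reads condition (iii) via the syntactic monoid and $\QRMonoid$, the same two words give the failure directly. Let $s\defeq[u]$ and $t\defeq[v]$ be their images in $\monoid_\idxk$, and set $x\defeq s$, $y\defeq t$, $z\defeq s$. Then $xyz=x$, because $usv$ and $u$ act identically on $\set{\stateq_0,\stateq_1}$ and on the sink. On the other hand, $xy\neq x$: the word $uv$ sends $\stateq_0$ to $\stateq_0$, whereas $u$ sends it to $\stateq_1$. This violates the defining condition of $\RMonoid$ for every $\idxk$.
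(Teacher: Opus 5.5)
Your proof is correct and follows essentially the same route as the paper: both use condition (iii) of Theorem~\ref{thm:main1} and show that $q_0$ and $q_1$ are mutually reachable in every $\mathcal{A}_k$. The paper uses the blocks $a^k$ and $c^k$, while you use $c^{k-1}a$ and $c^k$, which also covers $k=1$ without a separate case. One small typo in your monoid remark: ``$usv$'' should read $uvu$.
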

\begin{proof}\label{app:proofLT}
    Consider the minimal DFA $\automaton$ for $\kleene{\alphabet}
    \syma \symb \kleene{\alphabet}$ with the alphabet $\alphabet=\set{\syma,\symb,\symc}$, illustrated in \cref{fig:ab-substring-3letter}.
    \\
    \begin{figure}[H]
        \centering
        \begin{tikzpicture}
            \node[state, initial] (q0) { $\stateq_0$ };
            \node[state] (q1) [right=of q0] { $\stateq_1$ };
            \node[state, accepting] (q2) [right=of q1] { $\stateq_2$ };

            \draw[transition]
            (q0) edge[auto, loop above] node{$\symb,\symc$} (q0)
            (q1) edge[auto, loop above] node{$\syma$} (q1)
            (q2) edge[auto, loop above] node{$\syma,\symb,\symc$} (q2)
            (q0) edge[auto, bend left] node{$\syma$} (q1)
            (q1) edge[auto, bend left] node{$\symb$} (q2)
            (q1) edge[auto, bend left] node{$\symc$} (q0);
        \end{tikzpicture}
        \caption{Minimal DFA for $\kleene{\alphabet}
        \syma \symb \kleene{\alphabet}$ over
        $\alphabet = \set{\syma,\symb,\symc}$}
        \label{fig:ab-substring-3letter}
    \end{figure}
    $\automaton_1 = \automaton$ can be seen to not be partially ordered.
    Consider some integer $\idxk > 1$ and the $\idxk$-automaton $\automaton_\idxk$ of $\automaton$.
    Let $\str_\syma\in\alphabet^\idxk$ be a string of $\idxk$ times the symbol $\syma$, and $\str_\symc\in\alphabet^\idxk$ a string of $\idxk$ times the symbol $\symc$.
    We have that $\trans^\idxk(\stateq_0,\str_\syma)=\stateq_1$ and $\trans^\idxk(\stateq_1,\str_\symc)=\stateq_0$.
    Therefore, there is no partial order on the states of $\automaton_\idxk$ as $\stateq_0$ and $\stateq_1$ are mutually reachable.
    Thus, for any integer $\idxk$, $\automaton_\idxk$ can never be partially ordered, and $\kleene{\alphabet} \syma \symb \kleene{\alphabet} \not \in \pfo[\modpred]$ for $|\alphabet|>2$.
\end{proof}

\begin{restatable}{proposition}{propParity}\label{prop:ParityNotPFOMOD}
    \textsc{Parity} is not in $\ptl[\modpred]=\pfo[\modpred]=\smat[\ropeperiodic]$.
\end{restatable}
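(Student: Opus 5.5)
Following the proof of \cref{prop:LTNotPFOMOD}, we will use the automata characterization of \cref{thm:main1}. By (iii), it suffices to show that for the minimal DFA $\automaton$ of \textsc{Parity}, no $\idxk$-automaton $\automaton_\idxk$ is partially ordered. Together with \cref{thm:SmatEquivPfo} and \cref{lem:FolEquivLtl}, this excludes \textsc{Parity} from $\pfo[\modpred]$, $\ptl[\modpred]$ and $\smat[\ropeperiodic]$ simultaneously.

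The minimal DFA for \textsc{Parity} over $\alphabet=\set{0,1}$ has two states: $\stateq_{\text{even}}$, which is initial and accepting, and $\stateq_{\text{odd}}$. The symbol $0$ fixes both states and the symbol $1$ swaps them. Its minimality follows from the two syntactic classes $\congruenceclass_{\text{even}}$ and $\congruenceclass_{\text{odd}}$ described in the example of \cref{app:syntactic-monoid}.

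Now fix any integer $\idxk\geq1$ and take the block $\str=1\,0^{\idxk-1}\in\alphabet^\idxk$. It contains exactly one $1$, so $\trans^\idxk(\stateq_{\text{even}},\str)=\stateq_{\text{odd}}$ and $\trans^\idxk(\stateq_{\text{odd}},\str)=\stateq_{\text{even}}$. The two distinct states are therefore mutually reachable in $\automaton_\idxk$, so the reachability relation of \cref{def:p.o} is not antisymmetric and $\automaton_\idxk$ is not a \podfaAcr{}. Since $\idxk$ was arbitrary, condition (iii) fails, and the claim follows.

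The only delicate point is whether (iii) refers to the minimal DFA, since a non-minimal automaton could in principle behave differently. \Cref{thm:main1} fixes $\automaton$ as the minimal DFA, so the argument applies directly. As a cross-check, one can argue through (ii): the syntactic monoid of \textsc{Parity} is $\Z/2\Z$. Every $\monoid_\idxk$ contains the nontrivial group generated by the image of $1\,0^{\idxk-1}$, and a nontrivial group is not $\RMonoid$-trivial, since $gg^{-1}g=g$ but $gg^{-1}\neq g$.
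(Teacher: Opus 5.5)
Your proof is correct and follows essentially the same route as the paper: both take the two-state minimal DFA for \textsc{Parity} and show that, for every $\idxk$, a block in $\alphabet^\idxk$ with an odd number of $1$s (your $1\,0^{\idxk-1}$) makes the two states mutually reachable in $\automaton_\idxk$, so condition (iii) of \cref{thm:main1} fails. Your uniform treatment of $\idxk=1$ and the syntactic-monoid cross-check via (ii) are valid additions but not needed.
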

\begin{proof}\label{app:proofParity}
    Consider the minimal DFA $\automaton$ for \textsc{Parity}, illustrated in \cref{fig:parity-automaton}.
    \begin{figure}[H]
        \centering
        \begin{tikzpicture}
            \node[state, initial, accepting] (q0) { $\stateq_0$ };
            \node[state] (q1) [right=of q0] { $\stateq_1$ };

            \draw[transition]
            (q0) edge[auto, loop above] node{$0$} (q0)
            (q1) edge[auto, loop above] node{$0$} (q1)
            (q0) edge[auto, bend left] node{$1$} (q1)
            (q1) edge[auto, bend left] node{$1$} (q0);
        \end{tikzpicture}
        \caption{Minimal DFA for \textsc{Parity} over $\alphabet = \set{0,1}$}
        \label{fig:parity-automaton}
    \end{figure}
    $\automaton_1 = \automaton$ can be seen to not be partially ordered.
    Consider some integer $\idxk > 1$ and the $\idxk$-automaton $\automaton_\idxk$ of $\automaton$.
    Let $\str$ be a string of length $\idxk$ with an odd number of $1$s.
    We have that $\trans^\idxk(\stateq_0,\str)=\stateq_1$ and $\trans^\idxk(\stateq_1,\str)=\stateq_0$.
    Therefore, there is no partial order on the states of $\automaton_\idxk$ as $\stateq_0$ and $\stateq_1$ are mutually reachable.
    Thus, for any integer $\idxk$, $\automaton_\idxk$ can never be partially ordered, and \textsc{Parity}$\not \in \pfo[\modpred]$.
\end{proof}

\section{Proofs: Periodic RoPE}\label{app:periodic-proofs}
This section supplies the full proofs for periodic $\rope$ and for both inclusions summarized in \cref{sec:smatrope}: first $\smat[\ropeperiodic]\subseteq\pfo[\modpred]$, and then $\ptl[\modpred]\subseteq\smat[\ropeperiodic]$.
In all the following proofs, we follow the integer indexing convention for arrays:
For $\vv \in \finiteset^\hiddDim$ and a finite set of integers $\set{\idxi_1, \cdots \idxi_\idxn}$, $\vv[\idxi_1, \cdots \idxi_\idxn]\in\finiteset^\idxn$ is the vector of elements in $\vv$ indexed by $\idxi_1, \cdots \idxi_\idxn$.

\subsection{Component-Periodic Schedules}\label{app:schedules-proofs}

\periodicinputs*
\begin{proof}
    Let $m_1,\ldots,$$m_{\hiddDim/2}$ be component periods and $M\defeq\operatorname{lcm}\left(m_1,\ldots,m_{\hiddDim/2}\right)$.
    Then $\widehat{\mR}_{\idxi+M}=\widehat{\mR}_{\idxi}$, so the image is contained in $\left\{\widehat{\mR}_0,\ldots,\widehat{\mR}_{M-1}\right\}$. For a matrix coordinate $c$ and realized value $v$, define $A_{c,v}\defeq\left\{a\in\left\{0,\ldots,M-1\right\}:\widehat{\mR}_a[c]=v\right\}$. Then
    \begin{equation}\label{eq:periodic-with-mod}
        \widehat{\mR}_{\idxi}[c]=v\quad\Longleftrightarrow\quad\bigvee_{a\in A_{c,v}}\modpred_M^a\left(\idxi\right).
    \end{equation}
    Thus every entry-value relation is definable with unary modular predicates.
\end{proof}

\rationalangles*
\begin{proof}
    We consider the mapping for cosine, as the sine analysis is the same.
    For some $\idxi \in \N$, we have:
    \begin{subequations}
        \begin{align}
            \cos\left(\theta_\idxd \idxi\right)
            &= \cos\left(\frac{2\pi \theta_\idxd}{2\pi} \idxi\right) \\
            &= \cos\left(2\pi \frac{a_\idxd}{m_\idxd} \idxi\right) \\
            &= \cos\left(2\pi \frac{a_\idxd}{m_\idxd} \idxi+2a_\idxd\pi\right) \\
            &= \cos\left(2\pi \frac{a_\idxd}{m_\idxd} (\idxi+m_\idxd)\right) \\
            & = \cos\left(\theta_\idxd (\idxi+m_\idxd)\right)
        \end{align}
    \end{subequations}
    Therefore, $\cos\left(\theta_\idxd \idxi\right)$ is periodic with integer period $m_\idxd$.
\end{proof}

\subsection{Upper Bounding Periodic RoPE}\label{app:upperperiodic-proofs}
\ropepupperbound*
\begin{proof}
    We formalize what it means for $\pfo[\modpred]$ to \emph{simulate} a transformer.
    \begin{restatable}[\text{\citealt{li2025characterizingexpressivityfixedprecisiontransformer}}, Definition B.1.]{definition}{deflogicsimulatestf}\label{def:logicsimulatestf}
        A function $\tffunc \colon \kleene{(\eosalphabet)} \to \kleene{(\finiteset^{\hiddDim})}$ is said to be \defn{simulated} by $\pfo[\modpred]$ if for every dimension $\idxd \in \NTo{\hiddDim}$ and every floating-point value $\floatvalue \in \finiteset$, there exists a single-variable formula $\aformula$ in $\pfo[\modpred]$ such that for every position $\idxi \in \NTo{\strlen}$:
        \begin{equation}
            \tffunc(\str)_{\idxd, \idxi} = \floatvalue \iff \str, \idxi \models \aformula
        \end{equation}
        Similarly, we can define the simulation of a matrix-valued function via a two-variable $\pfo[\modpred]$ formula.
    \end{restatable}
    We prove that $\pfo[\modpred]$ can simulate any $\ropeperiodic$ transformer.
    First, each realized matrix entry can be simulated by the formulas from \cref{prop:periodicinputs}.

    A realized rotation matrix $\widehat{\mR}_{\idxi}$ can be seen as a function mapping positions $\idxi$ to elements in $\finiteset^{\hiddDim\times\hiddDim}$.
    For every matrix coordinate $c$ $\in \NTo{\hiddDim\times\hiddDim}$ and floating-point value $\floatvalue \in \finiteset$, we define a $\pfo[\modpred]$ formula $\aformula_{c, \floatvalue}$ such that:
    \begin{equation}
        \widehat{\mR}_{\idxi}[c] = \floatvalue \iff \str, \idxi \models \aformula_{c, \floatvalue}
    \end{equation}

    \textbf{0-elements of the matrix:} For the 0-elements of $\widehat{\mR}_{\idxi}$, we create constant formulas $\aformula_\true \defeq \true$ and $\aformula_\false \defeq \false$.
    If the floating-point value $\floatvalue$ is 0, we assign the constant formula $\aformula_\true$.
    Otherwise, we assign $\aformula_\false$.

    \textbf{Position-dependent elements of the matrix.}
    For every coordinate $c$ and realized value $v$, the formula $\bigvee_{a\in A_{c,v}}\modpred_M^a\left(\idxi\right)$ is true exactly where $\widehat{\mR}_{\idxi}[c]=v$ (\cref{prop:periodicinputs}). This covers every block entry without recovering an ideal angle from its finite-precision realization.

    Therefore, we can simulate the rotation matrices with $\pfo[\modpred]$.
    Omitting the periodic rotations, all other components of a fixed precision soft-attention transformer can be simulated by $\pfo$ \citep{li2025characterizingexpressivityfixedprecisiontransformer}.
    Finally, as the elements of the rotation matrices belong to a finite set (due to \cref{prop:periodicinputs}), the rotation matrices belong themselves to some finite set.
    Therefore, the rotation of the keys and queries is a mapping between two finite sets.
    Any mapping between finite sets can be simulated via a disjunction over the finitely-many inputs in the domain that lead to some output in the image \citep{chiang2023tighterboundsexpressivitytransformer, li2025characterizingexpressivityfixedprecisiontransformer}.
    This concludes the proof.
\end{proof}

\binarymodpred*
\begin{proof}
    We will show that $\modpred^r_m(\idxi, \idxj)$ can be written as the following finite disjunction.
    \begin{equation}
        \modpred^r_m(\idxi, \idxj) = \bigvee_{a=0}^{m-1}(\modpred_m^a(\idxi)\land\modpred^{(a+r) \textnormal{ mod } m}_m(\idxj))
    \end{equation}
    We denote by $\idxi_0$ and $\idxj_0$ the residues such that $\idxi \equiv \idxi_0\pmod m$ and $\idxj \equiv \idxj_0 \pmod m$.

    $\modpred^r_m(\idxi, \idxj) = \true$ if and only if $(\idxj - \idxi) \equiv r \pmod m$.
    Equivalently, $(\idxj_0 - \idxi_0) \equiv r \pmod m$, i.e. $\idxj_0 \equiv r+\idxi_0 \pmod m$.
    Therefore, $\modpred^r_m(\idxi, \idxj) = \true$ if and only if there exist residues $\idxi_0$ and $\idxj_0$ such that 1) $\idxi \equiv \idxi_0 \pmod m$ 2) $\idxj \equiv \idxj_0 \pmod m$ with $\idxj_0 \equiv \idxi_0+r \pmod m$.
    This statement is exactly the right hand side of the given equation.
\end{proof}

\subsection{Lower Bounding Periodic RoPE}\label{app:lowerperiodic-proofs}
\textbf{On finite precision.}
Before proving the lower bound $\ptl[\modpred] \subseteq \smat[\ropeperiodic]$, we recall that our result holds for \emph{some} finite-precision regime $\finiteset$.
In other words, we design a tailored finite set of values $\finiteset$ that works for our constructions.
This assumption is implicitly made in related work \citep{yang2024maskedhardattentiontransformersrecognize}.
This allows us to assume that the finitely many values our constructions manipulate are exactly representable, i.e. are present in $\finiteset$.
For each modulus $m$, we specify the required roots of unity directly as an $m$-entry rotation table and include its finitely many entries in $\finiteset$. 
A cyclic counter or modular index return the corresponding table entry. This establishes an exact result in the abstract expressivity model; it does not assert that standard floating-point hardware represents $2\pi/m$, the corresponding roots of unity, or the unbounded product $2\pi\idxi/m$ exactly. The experiments in \cref{sec:experiments-periodic} test a practical finite-precision implementation empirically.

\ropeplowerbound*
\begin{proof}
    We formalize what it means for $\smat[\ropeperiodic]$ to \emph{simulate} a $\ptl[\modpred]$ formula.
    \begin{restatable}[\text{\citealt{li2025characterizingexpressivityfixedprecisiontransformer}}, Definition B.9.]{definition}{deftfsimulateslogic}\label{def:tfsimulateslogic}
        A $\ptl[\modpred]$ formula $\tlf$ is simulated by a function $\tffunc \colon \kleene{(\eosalphabet)} \to \kleene{(\finiteset^\hiddDim)}$ if there exists a coordinate $\idxd \in \NTo{\hiddDim}$ such that for every input string $\str \in \kleene{(\eosalphabet)}$ and string position  $\idxi \in \NTo{\strlen}$:
        \begin{equation}
            \tffunc(\str)_{\idxd, \idxi} = \begin{cases}
                1 \text{ if } \str, \idxi \models \tlf \\ 
                0 \text{ otherwise }
            \end{cases}
        \end{equation}
    \end{restatable}

    We proceed by structural induction on the $\ptl[\modpred]$ formula.

    \paragraph{Induction on $\ptl$ without $\modpred$.}
    Any $\ptl$ formula can be simulated by a transformer (without PEs) via structural induction on its constituents \citep{li2025characterizingexpressivityfixedprecisiontransformer}.
    To lift this to $\smat[\ropeperiodic]$, we allocate dedicated dimensions for the Boolean connectives and $\past$---the sole temporal operator of $\ptl$---and set their rotation angles to $0$ (identity matrices), so that $\rope$ has no effect on those dimensions.
    We then apply \citeposs{li2025characterizingexpressivityfixedprecisiontransformer} simulation directly, yielding $\ptl \subseteq \smat[\ropeperiodic]$.

    \paragraph{Simulating modular predicates.}   
    It remains to realize each modular atom. 
    Consider the formula $\modpred_m^r\left(\idxi\right)$.
    We isolate the case $\idxi=1$, where $\modpred^r_m(1)$ is then a \emph{compile-time constant}. 
    The first position can be flagged by performing uniform attention over the strict left context and checking the count is exactly 1, which we assume is in $\finiteset$.
    We can then store $\modpred^r_m(1)$ at that position.
    Moreover, the case $m=1$ is the constant-true predicate, so we proceed to the $m\geq2$, $\idxi \geq 2$ case. 
    
     We first allocate one rotary pair at dimensions $d, d+1$ with the exact period-$m$ lookup table (see \cref{subsec:componentperiodic})
    \begin{equation}
        \begin{aligned}
        T_m\left(s\right)&\defeq\rotationmat{2\pi/m}{s}\quad\left(0\leq s<m\right),
        &\widehat{\mR}^{(\idxd)}_\idxi&\defeq T_m\left(\idxi\bmod m\right),\\
        \delta_m&\defeq\min_{1\leq s<m}\left(1-\cos\left(2\pi s/m\right)\right)>0,
        &b_m&\defeq1-\delta_m/2.
        \end{aligned}
    \end{equation}
    We also allocate one unrotated pair of dimensions $\idxd', \idxd'+1$.
    All other dimensions are unused (i.e., manually set to $0$ in keys and queries) in our construction.

    \underline{\textit{Query}.}
    Before the rotations, we set the elements of the query projection to position-independent constants. 
    These constants are implemented by setting the query weight matrix $\mQ$ to a zero-matrix and encoding the desired values in the query bias vector $\biasVector_q$, so that $\mQ \vx^{\layerIdx}_\idxi + \biasVector_q = \biasVector_q$ for all inputs $\vx^{\layerIdx}_\idxi$.
    \begin{align*}
        (\mQ \vx^{\layerIdx}_\idxi)[\idxd,\idxd+1] & = \begin{pmatrix}
            \cos\left(2\pi r/m\right)\\
            -\sin\left(2\pi r/m\right)
        \end{pmatrix}\\
        (\mQ \vx^{\layerIdx}_\idxi)[\idxd',\idxd'+1] & = 
        \begin{pmatrix}
            1\\
            0
        \end{pmatrix}
    \end{align*}
    The post-rotation query dimensions then become:
    \begin{align*}
        \vq^{\layerIdx}_\idxi[\idxd,\idxd+1] & = \begin{pmatrix}\cos\left(\frac{2\pi \idxi}{m}\right)\cos\left(\frac{2\pi r}{m}\right)+\sin(\frac{2\pi \idxi}{m})\sin\left(\frac{2\pi r}{m}\right) \\
        \sin(\frac{2\pi \idxi}{m})\cos\left(\frac{2\pi r}{m}\right)-\cos\left(\frac{2\pi \idxi}{m}\right)\sin\left(\frac{2\pi r}{m}\right)\end{pmatrix}
        \\
        \vq^{\layerIdx}_\idxi[\idxd',\idxd'+1] & = \begin{pmatrix}1 \\
        0\end{pmatrix}
    \end{align*}
    \underline{\textit{Key.}}
    We set the pre-rotation dimensions of the key vector as follows.
    \begin{align*}
    (\mK \vx^{\layerIdx}_\idxj)[\idxd,\idxd+1] & = \begin{pmatrix}
    \ropeconstant\mathbbm{1}\{\sym_\idxj=\bos\} \\
    0
    \end{pmatrix} \\
    (\mK \vx^{\layerIdx}_\idxj)[\idxd',\idxd'+1] & = \begin{pmatrix}
    Cb_m (1-\mathbbm{1}\{\sym_\idxj=\bos\})\\
    0
    \end{pmatrix} 
\end{align*}
    Where $C$ is a positive constant in $\finiteset$ defined later, and $\mathbbm{1}\{\sym_\idxj=\bos\}$ can be computed via a feedforward network.

    Note that because $\bos$ correspond to position $0$, the dimensions $d,d+1$ are unaffected by rotation: At $\bos$ the rotation is an identity mapping, at a symbol position the vector is a $0$-vector.
    Moreover, the dimensions $d',d'+1$ are by construction unrotated.

    \underline{\textit{Score.}}
    The query--key computation score yields:
    \begin{equation}\label{eq:score-periodic-simple}        \transpose{(\vq^{\layerIdx}_\idxi)}\vk^{\layerIdx}_\idxj=\begin{cases}
        \ropeconstant\cos\left(2\pi\left(\idxi-r\right)/m\right), & \text{if } \sym_\idxj = \bos \\
        \ropeconstant b_m, & \text{otherwise }
      \end{cases}
    \end{equation}
    Notice that $\cos\left(2\pi\left(\idxi-r\right)/m\right)$ is maximized when $\idxi\equiv r\pmod m$.

    \underline{\textit{Modular predicate condition satisfied.}}
    If $\idxi\equiv r\pmod m$, the score with $\bos$ is $\ropeconstant$ and the scores with non-$\bos$ positions are all $\ropeconstant(1-\delta_m/2)$.
    Therefore, the largest score is $\ropeconstant$ by a margin of $\ropeconstant\delta_m/2$.
    
    \underline{\textit{Modular predicate condition not satisfied.}}
    Suppose $\idxi\not\equiv r\pmod m$.
    Then $\cos\left(2\pi(\idxi-r)/m\right)\leq1-\delta_m$ by definition of $\delta_m$, so the score with $\bos$ satisfies $\ropeconstant\cos\left(2\pi(\idxi-r)/m\right)\leq\ropeconstant(1-\delta_m)$.
    Every non-$\bos$ position still scores exactly $\ropeconstant b_m=\ropeconstant(1-\delta_m/2)$, independently of $\idxj$.
    Hence every non-$\bos$ position now beats $\bos$, by a margin of at least
    \begin{equation}
        \ropeconstant(1-\delta_m/2)-\ropeconstant(1-\delta_m)=\ropeconstant\delta_m/2.
    \end{equation}
    So, in either case ($\idxi\equiv r\pmod m$ or not), whichever side wins---$\bos$ or the (tied) non-$\bos$ positions---does so by a margin of at least $\ropeconstant\delta_m/2$.

    \underline{\textit{Choosing $\ropeconstant$ and extracting the predicate.}}
    We define the value vector to be $1$ at $\bos$ and $0$ at every other position, so that the attention output at position $\idxi$ is exactly $\valpha_{\idxi,\bos}$.
    Selecting $\ropeconstant\geq2\floatlarge/\delta_m$ makes the margin $\ropeconstant\delta_m/2$ established above at least $\floatlarge$ in both cases.
    By stable softmax (\cref{sec:prelim}), subtracting the maximum score from every score leaves an exponent of $0$ at every position attaining the maximum and an exponent at most $-\floatlarge$---hence rounding to $\exp(-\floatlarge)=0$---at every position on the losing side of the margin.
    \begin{itemize}
        \item If $\idxi\equiv r\pmod m$: $\bos$ uniquely attains the maximum score, so $\valpha_{\idxi,\bos}=1$ and $\valpha_{\idxi,\idxj}=0$ for every non-$\bos$ $\idxj$. The attention output is $1$.
        \item If $\idxi\not\equiv r\pmod m$: Every non-$\bos$ position attains the (tied) maximum score, so $\valpha_{\idxi,\bos}=0$, while the non-$\bos$ weights $\valpha_{\idxi,\idxj}$ absorb the remaining mass. The attention output is still $\valpha_{\idxi,\bos}=0$, regardless of how stable softmax splits weight among the tied non-$\bos$ positions.
    \end{itemize}
    The attention output then equals $\mathbbm{1}\{\modpred_m^r(\idxi)\}$: The rotary pair $\idxd,\idxd+1$ supplies the position-dependent score against $\bos$, while the unrotated pair $\idxd',\idxd'+1$ supplies the constant non-$\bos$ baseline $\ropeconstant b_m$ that this score is compared against, and together they correctly compute $\modpred^r_m(\idxi)$.

    \underline{\textit{Completing the induction.}}
    A formula contains only finitely many modular atoms, so it requires finitely many rotary pairs and constants.
    Choose $\finiteset$ to contain the entries of their finite lookup tables and all displayed intermediate values. 
    This completes the induction.
\end{proof}

\subsection{Characterizing Periodic SiPE}\label{app:proofsmatsin}

\smatsinpf*
\begin{proof}
We now show both directions of the equivalence.

\textbf{Forward} ($\Rightarrow$).
Every non-PE component of a $\smat[\sinperiodic]$ transformer is in $\pfo$ \citep{li2025characterizingexpressivityfixedprecisiontransformer}.
Periodic sinusoidal position encodings can also be expressed via unary modular predicates by \cref{prop:periodicinputs}, yielding the inclusion in $\pfo[\modpred]$.

\textbf{Reverse} ($\Leftarrow$).
$\nope$ transformers, which are subsumed by $\smat[\sinperiodic]$ transformers, can express $\pfo$.
Simulating modular predicates can be done by processing sinusoidal position encodings with feedforward networks \citep{yang2024maskedhardattentiontransformersrecognize} using an appropriate schedule and pre-computed lookup table (\cref{subsec:componentperiodic}), leading to the reverse direction.
Note that \citeposs{yang2024maskedhardattentiontransformersrecognize} modular predicate simulation does not rely on the attention type. 
\end{proof}

\section{Proofs: Non-Periodic RoPE}\label{app:nonperiodic-proofs}

Because \cref{corr:yesterday} certifies the simulation only up to a finite length, we first record the length-bounded variant of \cref{def:tfsimulateslogic} that it uses.
\begin{definition}[Bounded-length simulation]\label{def:boundedsimulation}
    Let $L\in\N$. A formula $\tlf$ is \defn{$L$-simulated} by a function $\tffunc\colon\kleene{(\eosalphabet)}\to\kleene{(\finiteset^\hiddDim)}$ if there exists a coordinate $\idxd\in\NTo{\hiddDim}$ such that for every input $\bos\str\eos$ with $\left|\str\right|\leq L$ and every position $\idxi\in\NTo{\left|\str\right|+1}$,
    \begin{equation}
        \tffunc\left(\bos\str\eos\right)_{\idxd,\idxi}=
        \begin{cases}
            1 & \text{if }\str,\idxi\models\tlf\\
            0 & \text{otherwise.}
        \end{cases}
    \end{equation}
    Taking $L\to\infty$ recovers \cref{def:tfsimulateslogic}.
\end{definition}
Throughout, positions are indexed so that $\bos$ occupies position $0$, the string occupies positions $1,\ldots,\left|\str\right|$, and $\eos$ occupies position $\left|\str\right|+1$, which is the readout position. Hence $\left|\str\right|\leq L$ makes every query position at most $L+1$, which is exactly the range quantified over in \cref{eq:yesterday-bound}.

\yesterdaypf*
\begin{proof}\label{proof:yesterday}
    Allocate unrotated dimensions to the standard structural-induction simulation of $\ptl$ \citep{li2025characterizingexpressivityfixedprecisiontransformer}. Fix $k\geq1$. To simulate a $\yesterday^k\tlf$ subformula directly, use one rotary pair $\idxd,\idxd+1$ in a dedicated attention head, with query bias $C\left(\cos\left(kg\right),-\sin\left(kg\right)\right)$ and key bias $\left(1,0\right)$ at those two coordinates.
    In this head we set $\mQ$ and $\mK$ to the zero matrix and set every coordinate of $\biasVector_q$ and $\biasVector_k$ outside the pair $\idxd,\idxd+1$ to $0$. The score is therefore exactly the contribution of that pair: The unrotated dimensions carrying the $\ptl$ simulation, and every other rotary pair, contribute $0$ to it. In particular the biases are the head's only source of query and key content, so the score is position-dependent but content-independent, as the fixed-offset semantics of $\yesterday^k$ requires.

    After rotation at query position $\idxi$ and key position $\idxj$, exact arithmetic gives
    \begin{equation}
        \mS^{g,C,k}_{\idxi,\idxj}
        =C\cos\left(g\left(\idxi-k-\idxj\right)\right),
    \end{equation}
    which is \cref{eq:ideal-previous-score}. Under the model's actual arithmetic, denote the realized score by $\widehat{\mS}^{g,C,k,\finiteset}_{\idxi,\idxj}$ as in \cref{eq:yesterday-bound}. Store the truth value of $\tlf$ in the value vector at each string position and store $0$ at $\bos$.
    
    We prove (i). Fix $k\leq L\leq\boundyesterday[(k)]\left(g,C,\finiteset\right)$. 
    At every $k+1\leq\idxi\leq L+1$, position $\idxi-k$ beats every other key by at least $\floatlarge$, by the definition of $\boundyesterday[(k)]$; note that the competing keys include $\bos$ at $\idxj=0$, whose rotation matrix is the identity, so its key is $\left(\cos 0,\sin 0\right)=\left(1,0\right)$. 
    Stable softmax therefore assigns weight $1$ to $\idxi-k$ and $0$ elsewhere, so the head returns the truth value of $\tlf$ exactly $k$ positions earlier. 
    Since $\idxi\geq k+1$, the attended position $\idxi-k$ is a string position, not $\bos$. 
    At the boundary $\idxi\leq k$, the semantics require $\yesterday^k\tlf$ to be false. 
    The unrotated construction simulates the gate $\past^k\true$, which holds exactly when $\idxi>k$---indeed $\past\true$ holds at $\idxi$ iff $\idxi>1$, and inductively $\past^k\true=\past\left(\past^{k-1}\true\right)$ holds at $\idxi$ iff some $\idxj<\idxi$ has $\idxj>k-1$, i.e.\ iff $\idxi>k$; a feedforward Boolean conjunction with this gate therefore suppresses the arbitrary attention output at those boundary positions, mapping it to $0$ regardless of its value. 
    This $L$-simulates $\yesterday^k$, proving (i).

    For (ii), every $\ltlAcr[\past,\yesterday]$ formula is built from atomic predicates, Boolean connectives, $\past$, and $\yesterday$, so it needs only the offset $k=1$: one layer per $\yesterday$ occurrence, applying (i) with $k=1$, which is available because $L\leq\boundyesterday[(1)]\left(g,C,\finiteset\right)$ by hypothesis. Together with the unrotated $\past$ and Boolean constructions this completes the structural induction. For a finite offset set $\offsetset$, give each $k\in\offsetset$ its own head and rotary pair; the hypothesis $L\leq\min_{k\in\offsetset}\boundyesterday[(k)]\left(g,C,\finiteset\right)$ makes (i) available for every $k\in\offsetset$ simultaneously, and $\max\offsetset\leq L$ makes each application well posed.

    For (iii), fix any finite $L$ and finite collection of offsets, and choose $g/\left(2\pi\right)\notin\Q$. Then $\delta_L\left(g\right)>0$. Choose the scales $C$ and a finite-precision regime jointly so that $C\delta_L\left(g\right)\geq\floatlarge$ for every corresponding head and all parameters and intermediate sine, cosine, product, and score values used by the constructions through positions $L+1$ are represented exactly. This is possible in the abstract finite-value model because only finitely many offsets and nonzero values are required, while the regime can retain a separate underflow cell around zero. On that finite range the ideal identities are evaluated exactly, so \cref{eq:yesterday-bound} gives $\boundyesterday[(k)]\left(g,C,\finiteset\right)\geq L$ for every selected $k$.
\end{proof}

\section{Experiments}\label{app:experiments}
We supplement the periodic and conventional non-periodic experiments in \cref{sec:experiments-periodic,sec:experiments-nonperiodic} with additional details.

\subsection{Languages}\label{app:experiment-languages}
We first introduce the different languages we train our transformers on.
\begin{itemize}
    \item $\kleene{(\syma \symb)}$: the language of strings in which the substring $\syma \symb$ is repeated arbitrarily many times. It is star-free, inside $\ptl[\modpred]=\pfo[\modpred]$ and outside of $\pfo$. It is also the Dyck language (well-balanced parentheses) with 1 pair of parentheses, limited to depth 1.
    \item $\kleene{(\syma \syma)}$: the language of strings of even length.
    It is a canonical example of a non-star-free, $\ptl[\modpred]$ language.
    \item $\syma\kleene{\alphabet}$: the language of strings starting with a designated symbol $\syma$ over some alphabet $\alphabet$ such that $|\alphabet|>1$.
    It is the simplest language of the class $\pfo = \ptl$.
    \item $\kleene{\alphabet}\syma$: the language of strings ending with a designated symbol $\syma$ over some alphabet $\alphabet$ such that $|\alphabet|>1$.
    It belongs in $\ltlAcr[\future]$ and $\ltlAcr[\yesterday]$.
    \item $\kleene{\alphabet}\syma\symb\kleene{\alphabet}$: the language of strings containing the contiguous substring $\syma\symb$. It is locally testable and definable in $\ltlAcr[\past,\yesterday]$.
\end{itemize}
\subsection{Experimental Setup}\label{app:experimental-setup}
Our experimental setup follows \citet{delétang2023neuralnetworkschomskyhierarchy, li2025characterizingexpressivityfixedprecisiontransformer}.
Our transformers use soft attention and strict future masking, with 5 layers, model dimension set to 64, and 8 attention heads.
Training strings are of length up to 40, while test strings range from length 41 to 500.
The model is trained for
1,000,000 steps with a batch size of 128. For evaluation, we generate 512 samples per test length. Each experiment is run with 5 different random seeds and 3 learning rates ($1 \times 10^{-4}
,3 \times 10^{-4}
,5 \times 10^{-4}$).

\subsection{Periodic-Construction Settings}\label{app:periodic-settings}
The periodic experiments in \cref{tab:exp-periodic} compare the period-targeting $\ropeperiodic$ and $\sinperiodic$ variants with $\nope$; the conventional fully rotated controls appear in \cref{fig:exp-modular-base-sweep}. 
The component-periodic scheduler is explicitly written out in the code implementation.

\subsection{Conventional Non-Periodic Settings}\label{app:nonperiodic-settings}
Recall in $\rope$'s initial presentation that the base value is set to $10^4$.
Prior extrapolation experiments find that this standard choice can be the worst tested base, with improvements on both sides of $10^4$ \citep{liu2024scalinglawsropebasedextrapolation}. To test whether our conclusions depend on this choice, we therefore sweep bases $\{10^{-12},10^{-8},10^{-4},1,10^4,10^8,10^{12}\}$. \Cref{fig:exp-nonperiodic-base-sweep} reports every fully rotated $\ropenonperiodic$ configuration and compares it with $\nope$. \Cref{fig:exp-modular-base-sweep} reports fully rotated $\ropenonperiodic$ and $\sinnonperiodic$ configurations across the same sweep for $\kleene{(\syma\syma)}$ and $\kleene{(\syma\symb)}$.

\subsection{Detailed Results}\label{app:detailed-results}
\Cref{tab:detailed-modular-sweep,tab:detailed-locality-sweep} report the complete aggregate statistics for the tasks and fully rotated settings shown in \cref{fig:exp-modular-base-sweep,fig:exp-nonperiodic-base-sweep}. 
Each row aggregates 5 seeds and 3 learning rates. 

\begingroup
\small
\setlength{\tabcolsep}{4.5pt}
\begin{longtable}{@{}llcrrrrrr@{}}
\caption{Detailed results for the fully rotated conventional base sweep on the modular languages in \cref{fig:exp-modular-base-sweep}.}\label{tab:detailed-modular-sweep}\\
\toprule
Language & PE & Base & \multicolumn{3}{c}{Accuracy} & \multicolumn{3}{c}{$N^*$} \\
\cmidrule(lr){4-6}\cmidrule(lr){7-9}
& & & Max & Mean & Std. & Max & Mean & Std. \\
\midrule
\endfirsthead
\multicolumn{9}{c}{\tablename\ \thetable{} continued}\\
\toprule
Language & PE & Base & \multicolumn{3}{c}{Accuracy} & \multicolumn{3}{c}{$N^*$} \\
\cmidrule(lr){4-6}\cmidrule(lr){7-9}
& & & Max & Mean & Std. & Max & Mean & Std. \\
\midrule
\endhead
\midrule
\multicolumn{9}{r}{Continued on next page}\\
\endfoot
\bottomrule
\endlastfoot
$\kleene{(\syma\syma)}$ & $\ropenonperiodic$ & $10^{-12}$ & 0.522 & 0.489 & 0.0204 & 53.0 & 41.3 & 11.6 \\
$\kleene{(\syma\syma)}$ & $\ropenonperiodic$ & $10^{-8}$ & 0.500 & 0.483 & 0.0069 & 65.0 & 50.1 & 14.9 \\
$\kleene{(\syma\syma)}$ & $\ropenonperiodic$ & $10^{-4}$ & 0.513 & 0.501 & 0.0061 & 57.0 & 49.5 & 3.1 \\
$\kleene{(\syma\syma)}$ & $\ropenonperiodic$ & $1$ & 0.509 & 0.494 & 0.0076 & 47.0 & 34.3 & 17.2 \\
$\kleene{(\syma\syma)}$ & $\ropenonperiodic$ & $10^{4}$ & 0.507 & 0.489 & 0.0117 & 48.0 & 26.3 & 21.5 \\
$\kleene{(\syma\syma)}$ & $\ropenonperiodic$ & $10^{8}$ & 0.515 & 0.490 & 0.0146 & 46.0 & 22.7 & 21.3 \\
$\kleene{(\syma\syma)}$ & $\ropenonperiodic$ & $10^{12}$ & 0.517 & 0.494 & 0.0095 & 49.0 & 23.3 & 21.9 \\
\addlinespace
$\kleene{(\syma\syma)}$ & $\sinnonperiodic$ & $10^{-12}$ & 0.520 & 0.504 & 0.0117 & 41.0 & 16.4 & 20.1 \\
$\kleene{(\syma\syma)}$ & $\sinnonperiodic$ & $10^{-8}$ & 0.654 & 0.619 & 0.0218 & 88.0 & 71.3 & 11.9 \\
$\kleene{(\syma\syma)}$ & $\sinnonperiodic$ & $10^{-4}$ & \textbf{0.657} & \textbf{0.623} & 0.0273 & \textbf{90.0} & \textbf{75.7} & 6.3 \\
$\kleene{(\syma\syma)}$ & $\sinnonperiodic$ & $1$ & 0.509 & 0.496 & 0.0089 & 43.0 & 8.5 & 16.9 \\
$\kleene{(\syma\syma)}$ & $\sinnonperiodic$ & $10^{4}$ & 0.522 & 0.503 & 0.0078 & 41.0 & 2.7 & 10.2 \\
$\kleene{(\syma\syma)}$ & $\sinnonperiodic$ & $10^{8}$ & 0.517 & 0.500 & 0.0049 & 0.0 & 0.0 & 0.0 \\
$\kleene{(\syma\syma)}$ & $\sinnonperiodic$ & $10^{12}$ & 0.515 & 0.497 & 0.0107 & 0.0 & 0.0 & 0.0 \\
\midrule
$\kleene{(\syma\symb)}$ & $\ropenonperiodic$ & $10^{-12}$ & 0.572 & 0.530 & 0.0168 & \textbf{98.0} & 65.1 & 13.0 \\
$\kleene{(\syma\symb)}$ & $\ropenonperiodic$ & $10^{-8}$ & 0.520 & 0.510 & 0.0057 & 58.0 & 48.9 & 5.2 \\
$\kleene{(\syma\symb)}$ & $\ropenonperiodic$ & $10^{-4}$ & 0.530 & 0.514 & 0.0060 & 68.0 & 52.8 & 5.6 \\
$\kleene{(\syma\symb)}$ & $\ropenonperiodic$ & $1$ & 0.522 & 0.509 & 0.0041 & 56.0 & 48.1 & 3.1 \\
$\kleene{(\syma\symb)}$ & $\ropenonperiodic$ & $10^{4}$ & 0.740 & 0.560 & 0.0602 & 96.0 & \textbf{66.0} & 15.7 \\
$\kleene{(\syma\symb)}$ & $\ropenonperiodic$ & $10^{8}$ & \textbf{0.817} & \textbf{0.566} & 0.0896 & \textbf{98.0} & 57.1 & 14.7 \\
$\kleene{(\syma\symb)}$ & $\ropenonperiodic$ & $10^{12}$ & 0.703 & 0.524 & 0.0484 & 72.0 & 50.1 & 6.5 \\
\addlinespace
$\kleene{(\syma\symb)}$ & $\sinnonperiodic$ & $10^{-12}$ & 0.515 & 0.504 & 0.0050 & 42.0 & 16.8 & 20.6 \\
$\kleene{(\syma\symb)}$ & $\sinnonperiodic$ & $10^{-8}$ & 0.648 & 0.536 & 0.0416 & 54.0 & 41.9 & 16.7 \\
$\kleene{(\syma\symb)}$ & $\sinnonperiodic$ & $10^{-4}$ & 0.547 & 0.523 & 0.0145 & 70.0 & 52.9 & 7.6 \\
$\kleene{(\syma\symb)}$ & $\sinnonperiodic$ & $1$ & 0.515 & 0.505 & 0.0045 & 48.0 & 23.1 & 21.6 \\
$\kleene{(\syma\symb)}$ & $\sinnonperiodic$ & $10^{4}$ & 0.515 & 0.503 & 0.0044 & 46.0 & 17.1 & 20.9 \\
$\kleene{(\syma\symb)}$ & $\sinnonperiodic$ & $10^{8}$ & 0.559 & 0.514 & 0.0142 & 52.0 & 26.3 & 21.6 \\
$\kleene{(\syma\symb)}$ & $\sinnonperiodic$ & $10^{12}$ & 0.586 & 0.514 & 0.0204 & 52.0 & 24.1 & 22.8 \\
\end{longtable}

\begin{longtable}{@{}llcrrrrrr@{}}
\caption{Detailed results for $\nope$ and the fully rotated conventional $\rope$ base sweep in \cref{fig:exp-nonperiodic-base-sweep}.}\label{tab:detailed-locality-sweep}\\
\toprule
Language & PE & Base & \multicolumn{3}{c}{Accuracy} & \multicolumn{3}{c}{$N^*$} \\
\cmidrule(lr){4-6}\cmidrule(lr){7-9}
& & & Max & Mean & Std. & Max & Mean & Std. \\
\midrule
\endfirsthead
\multicolumn{9}{c}{\tablename\ \thetable{} continued}\\
\toprule
Language & PE & Base & \multicolumn{3}{c}{Accuracy} & \multicolumn{3}{c}{$N^*$} \\
\cmidrule(lr){4-6}\cmidrule(lr){7-9}
& & & Max & Mean & Std. & Max & Mean & Std. \\
\midrule
\endhead
\midrule
\multicolumn{9}{r}{Continued on next page}\\
\endfoot
\bottomrule
\endlastfoot
$\kleene{\alphabet}\syma$ & $\nope$ & -- & 0.604 & 0.564 & 0.0237 & 60.0 & 50.8 & 3.9 \\
$\kleene{\alphabet}\syma$ & $\ropenonperiodic$ & $10^{-12}$ & 0.741 & 0.709 & 0.0181 & 149.0 & 95.0 & 26.6 \\
$\kleene{\alphabet}\syma$ & $\ropenonperiodic$ & $10^{-8}$ & \textbf{0.927} & \textbf{0.829} & 0.0898 & 116.0 & 81.3 & 16.8 \\
$\kleene{\alphabet}\syma$ & $\ropenonperiodic$ & $10^{-4}$ & 0.913 & 0.785 & 0.0538 & \textbf{224.0} & \textbf{99.8} & 46.8 \\
$\kleene{\alphabet}\syma$ & $\ropenonperiodic$ & $1$ & 0.753 & 0.649 & 0.0483 & 82.0 & 54.8 & 9.2 \\
$\kleene{\alphabet}\syma$ & $\ropenonperiodic$ & $10^{4}$ & 0.555 & 0.524 & 0.0152 & 51.0 & 45.5 & 2.3 \\
$\kleene{\alphabet}\syma$ & $\ropenonperiodic$ & $10^{8}$ & 0.890 & 0.733 & 0.1003 & 112.0 & 65.7 & 14.6 \\
$\kleene{\alphabet}\syma$ & $\ropenonperiodic$ & $10^{12}$ & 0.685 & 0.640 & 0.0305 & 70.0 & 58.9 & 4.8 \\
\midrule
$\syma\kleene{\alphabet}$ & $\nope$ & -- & \textbf{1.000} & \textbf{0.989} & 0.0413 & \textbf{500.0} & \textbf{470.5} & 82.0 \\
$\syma\kleene{\alphabet}$ & $\ropenonperiodic$ & $10^{-12}$ & \textbf{1.000} & 0.977 & 0.0253 & \textbf{500.0} & 199.5 & 99.1 \\
$\syma\kleene{\alphabet}$ & $\ropenonperiodic$ & $10^{-8}$ & \textbf{1.000} & 0.927 & 0.1033 & \textbf{500.0} & 166.7 & 101.4 \\
$\syma\kleene{\alphabet}$ & $\ropenonperiodic$ & $10^{-4}$ & \textbf{1.000} & 0.961 & 0.0515 & \textbf{500.0} & 199.1 & 95.4 \\
$\syma\kleene{\alphabet}$ & $\ropenonperiodic$ & $1$ & \textbf{1.000} & 0.953 & 0.1005 & \textbf{500.0} & 204.9 & 109.5 \\
$\syma\kleene{\alphabet}$ & $\ropenonperiodic$ & $10^{4}$ & \textbf{1.000} & 0.847 & 0.1449 & 184.0 & 146.9 & 30.2 \\
$\syma\kleene{\alphabet}$ & $\ropenonperiodic$ & $10^{8}$ & \textbf{1.000} & 0.928 & 0.0864 & \textbf{500.0} & 293.3 & 121.4 \\
$\syma\kleene{\alphabet}$ & $\ropenonperiodic$ & $10^{12}$ & \textbf{1.000} & 0.988 & 0.0254 & \textbf{500.0} & 402.9 & 125.1 \\
\midrule
$\kleene{\alphabet}\syma\symb\kleene{\alphabet}$ & $\nope$ & -- & \textbf{0.874} & \textbf{0.672} & 0.0829 & \textbf{200.0} & \textbf{92.9} & 32.6 \\
$\kleene{\alphabet}\syma\symb\kleene{\alphabet}$ & $\ropenonperiodic$ & $10^{-12}$ & 0.770 & 0.625 & 0.0697 & 103.0 & 61.9 & 16.8 \\
$\kleene{\alphabet}\syma\symb\kleene{\alphabet}$ & $\ropenonperiodic$ & $10^{-8}$ & 0.753 & 0.604 & 0.0812 & 76.0 & 58.7 & 9.8 \\
$\kleene{\alphabet}\syma\symb\kleene{\alphabet}$ & $\ropenonperiodic$ & $10^{-4}$ & 0.740 & 0.606 & 0.0556 & 73.0 & 55.2 & 7.4 \\
$\kleene{\alphabet}\syma\symb\kleene{\alphabet}$ & $\ropenonperiodic$ & $1$ & 0.647 & 0.624 & 0.0221 & 51.0 & 46.6 & 1.5 \\
$\kleene{\alphabet}\syma\symb\kleene{\alphabet}$ & $\ropenonperiodic$ & $10^{4}$ & 0.612 & 0.572 & 0.0250 & 53.0 & 49.8 & 3.5 \\
$\kleene{\alphabet}\syma\symb\kleene{\alphabet}$ & $\ropenonperiodic$ & $10^{8}$ & 0.624 & 0.594 & 0.0183 & 53.0 & 47.2 & 2.6 \\
$\kleene{\alphabet}\syma\symb\kleene{\alphabet}$ & $\ropenonperiodic$ & $10^{12}$ & 0.658 & 0.637 & 0.0102 & 50.0 & 46.1 & 2.2 \\
\end{longtable}
\endgroup

\end{document}